\newtheorem{theorem}{Theorem}[section]
\newtheorem{lemma}{Lemma}[section]
\newtheorem{corollary}{Corollary}[section]
\newtheorem{remark}{Remark}[section]
\newcommand{\onefunc}{\mathbbm{1}}
\newcommand{\stam}[1]{}
\newcommand{\tup}[1]{\langle #1  \rangle}
\DeclarePairedDelimiter\ceil{\lceil}{\rceil}
\DeclarePairedDelimiter\floor{\lfloor}{\rfloor}
\newtheorem{assumption}[theorem]{Assumption}
\newcommand{\bx}{\mathbf{x}}
\newcommand{\bw}{\mathbf{w}}
\newcommand{\bb}{\mathbf{b}}
\newcommand{\bz}{\mathbf{z}}
\newcommand{\by}{\mathbf{y}}
\newcommand{\ca}{{\cal A}}
\newcommand{\cb}{{\cal B}}
\newcommand{\cd}{{\cal D}}
\newcommand{\cg}{{\cal G}}
\newcommand{\ch}{{\cal H}}
\newcommand{\cl}{{\cal L}}
\newcommand{\cf}{{\cal F}}
\newcommand{\cx}{{\cal X}}
\newcommand{\cy}{{\cal Y}}
\newcommand{\cz}{{\cal Z}}
\newcommand{\cs}{{\cal S}}
\newcommand{\cn}{{\cal N}}
\DeclareMathOperator*{\sign}{sign}
\newcommand{\reals}{{\mathbb R}}
\DeclareMathOperator{\poly}{poly}
\DeclareMathOperator{\polylog}{polylog}
\DeclareMathOperator*{\E}{\mathbb{E}}
\DeclareMathOperator{\xormaj}{XOR-MAJ}
\DeclareMathOperator{\maj}{MAJ}
\DeclareMathOperator{\xor}{XOR}
\newcommand{\tn}{{\tilde{n}}}
\newcommand{\inner}[1]{\langle #1 \rangle}
\title{From Local Pseudorandom Generators to Hardness of Learning}
\author{
    Amit Daniely\thanks{School of Computer Science and Engineering, The Hebrew University, Jerusalem, Israel and Google Research Tel-Aviv, \texttt{amit.daniely@mail.huji.ac.il }}
	\and
	Gal Vardi\thanks{Weizmann Institute of Science, Israel, \texttt{gal.vardi@weizmann.ac.il}}
}
\date{}
\begin{document}

\maketitle

\begin{abstract}
We prove hardness-of-learning results under a well-studied assumption on the existence of local pseudorandom generators. As we show, this assumption allows us to surpass the current state of the art, and prove hardness of various basic problems, with no hardness results to date.

Our results include: hardness of learning shallow ReLU neural networks under the Gaussian distribution and other distributions; hardness of learning intersections of $\omega(1)$ halfspaces, DNF formulas with $\omega(1)$ terms, and ReLU networks with $\omega(1)$ hidden neurons; hardness of weakly learning deterministic finite automata under the uniform distribution; hardness of weakly learning depth-$3$ Boolean circuits under the uniform distribution, as well as distribution-specific hardness results for learning DNF formulas and intersections of halfspaces. We also establish lower bounds on the complexity of learning intersections of a constant number of halfspaces, and ReLU networks with a constant number of hidden neurons. Moreover, our results imply the hardness of virtually all improper PAC-learning problems (both distribution-free and distribution-specific) that were previously shown hard under other assumptions.
\end{abstract}

\section{Introduction}

The computational complexity of PAC learning has been extensively studied over the past decades. Nevertheless, for many learning problems there is still a large gap between the complexity of the best known algorithms and the hardness results. The situation is even worse for {\em distribution-specific learning}, namely, where the inputs are drawn from some known distribution (e.g., the uniform or the normal distribution). Since there are very few distribution-specific hardness results, the status of most basic learning problems with respect to natural distributions is wide open. 

The main obstacle for achieving hardness results for learning problems, is the ability of a learning algorithm to return
a hypothesis which does not belong to the considered hypothesis class (such an algorithm is called {\em improper learner}).
This flexibility makes it very difficult to apply reductions from $\textsf{NP}$-hard problems, and unless we face a dramatic breakthrough in complexity theory, it seems unlikely that hardness of improper learning can be established on standard complexity assumptions (see \cite{ApplebaumBaXi08,daniely2013average}). Indeed, all currently known lower bounds are based on assumptions from cryptography or average-case hardness.

In this work, we consider hardness of learning under assumptions 
on the existence of {\em local pseudorandom generators (PRG)} with polynomial stretch. This type of assumptions was extensively studied in the last two decades. Under such assumptions we extend the current state of the art, and establish new hardness results for several hypothesis classes, for both distribution-free and distribution-specific learning. Our results apply to fundamental classes, such as DNFs, Boolean circuits, intersections of halfspaces, neural networks and automata. 
Most of the results are based on the mere assumption that {\em some} local PRG with polynomial stretch exists. The only exceptions are our lower bounds for intersections of a constant number of halfspaces, and neural networks with a constant number of neurons, that are based on a stronger assumption, regarding a specific candidate for such a PRG, that was suggested by \cite{applebaum2016algebraic}. Below we discuss our results and related work.

\paragraph{DNFs and Boolean circuits.}
Learning polynomial-size DNF formulas has been a major effort in computational learning theory.
The best known upper bound for (distribution-free) learning of polynomial-size DNF formulas over $n$ variables is $2^{\tilde{O}(n^{1/3})}$, due to \cite{klivans2001learning}.
Already in Valiant's seminal paper \citep{Valiant84}, it is shown that for every constant $q$, DNF formulas with $q$ terms can be learned efficiently.
Hardness of improperly learning DNF formulas is implied by \cite{applebaum2010public} under a combination of two assumptions: the first is related to the planted dense subgraph problem in hypergraphs, and the second is related to local PRGs.
\cite{daniely2016complexity} showed hardness of improperly learning DNF formulas with $q(n)=\omega(\log(n))$ terms, under 
a common assumption, namely,
that refuting a random $K$-SAT formula is hard.
We improve this lower bound, and show hardness of learning DNF formulas with $q(n)=\omega(1)$ terms.

\cite{LinialMaNi93} gave a quasi-polynomial ($O(n^{\polylog(n)})$) upper bound for learning constant-depth Boolean circuits ($\textsf{AC}^0$) on the uniform distribution. Their result was later improved
to a slightly better quasi-polynomial bound \citep{boppana1997average,haastad2001slight}.
Learning $\textsf{AC}^0$ in quasi-polynomial time under other restricted distributions was studied in, e.g., 
\cite{furst1991improved,blais2010polynomial}.
In  \cite{Kharitonov93} it is shown, under a relatively strong assumption on the complexity of factoring random Blum integers, that learning depth-$d$ circuits on the uniform distribution is hard, where $d$ is an unspecified sufficiently large constant. 
 \cite{applebaum2016fast} showed, under an assumption on a specific candidate for Goldreich's PRG (based on the $\xormaj$ predicate), that learning depth-$3$ Boolean circuits under the uniform distribution is hard.
We prove distribution-specific hardness of improperly learning Boolean circuits of depth-$2$ (namely, DNFs) and depth-$3$. For DNF formulas with $n^\epsilon$ terms, we show hardness of learning on a distribution where each component is drawn i.i.d. from a Bernoulli distribution (which is not uniform). For depth-$3$ Boolean circuits, we show hardness of weak learning on the uniform distribution (recall that we only assume here the existence of \emph{some} local PRG, rather than a specific candidate).

\paragraph{Intersections of halfspaces.}
Learning intersections of halfspaces is also a fundamental problem in learning theory.
\cite{KlivansSh06} showed, assuming the hardness of the shortest vector problem, that improper learning of intersections of $n^\epsilon$ halfspaces for a constant $\epsilon>0$, is hard. The hardness result from \cite{daniely2016complexity} for learning DNF formulas with $\omega(\log(n))$ terms, implies hardness of learning intersections of $\omega(\log(n))$ halfspaces, since every DNF formula with $q(n)$ terms can be realized by the complement of an intersection of $q(n)$ halfspaces.
Our result on hardness of learning DNF formulas with $\omega(1)$ terms implies hardness of learning intersections of $\omega(1)$ halfspaces, and thus improves the bound from \cite{daniely2016complexity}.
Learning intersections of halfspaces under some restricted distributions has been studied in, e.g., \cite{baum1990polynomial,blum1997learning,vempala1997random,klivans2004learning,klivans2009baum}. Our distribution-specific hardness result for DNFs implies a first distribution-specific hardness result for improperly learning intersections of $n^\epsilon$ halfspaces.

Efficient algorithms for (distribution-free) learning intersections of $k$ halfspaces are not known even for a constant $k$, and even for $k=2$. \cite{klivans2004learning} showed an algorithm for distribution-free learning $k$ weight-$w$ halfspaces on the hypercube in time $n^{O(k\log(k)\log(w))}$, where the weight of a halfspace is the sum of the absolute values of its components.
We study distribution-free improper learning of a constant number of halfspaces, namely, where the number $k$ of halfspaces is independent of $n$. We show (under our stronger assumption regarding a specific candidate for a local PRG) a 
$n^{\beta k}$
lower bound.
More formally, we show that there is an absolute constant $\beta>0$ (independent of $k,n$), such that learning intersections of $k$ halfspaces on the hypercube within a constant error requires time $\Omega(n^{\beta k})$.
Also, a conjecture due to \cite{applebaum2016algebraic} implies that our lower bound holds for, e.g., $\beta = \frac{1}{11}$.
This is the first lower bound for improperly learning intersections of a constant number of halfspaces.

\paragraph{Neural networks.}
Hardness of improperly learning neural networks (with respect to the square loss) follows from hardness of learning intersection of halfspaces. Hence, the results from \cite{KlivansSh06} and \cite{daniely2016complexity} imply hardness of improperly learning depth-$2$ neural networks with $n^\epsilon$ and $\omega(\log(n))$ hidden neurons (respectively). \cite{daniely2020hardness} showed, under the assumption that refuting a random $K$-SAT formula is hard, that improperly learning depth-$2$ neural networks is hard already if its weights are drawn from some ``natural" distribution or satisfy some ``natural" properties. While hardness of proper learning is implied by hardness of improper learning, there are some recent works that show hardness of properly learning depth-$2$ networks under more standard assumptions (cf. \cite{goel2020tight}).  

Our hardness results for DNFs and intersections of halfspaces imply new hardness results for learning neural networks.
We show hardness of improperly learning depth-$2$ neural networks with $\omega(1)$ hidden neurons and the ReLU activation function, with respect to the square loss. Thus, we improve the $\omega(\log(n))$ lower bound implied by \cite{daniely2016complexity}. Moreover, the lower bound implied by \cite{daniely2016complexity} requires an activation function also in the output neuron, while our lower bound does not.
For depth-$2$ networks with a constant number $k$ of hidden neuron, namely, where the number of hidden neurons is independent of $n$, we show (under our stronger assumption regarding a specific candidate for a local PRG) a 
$\Omega(n^{\beta k})$
lower bound, 
where $\beta$ is a constant independent of $n,k$.
This is the first lower bound for improperly learning neural networks with a constant number of hidden neurons.

Due to the empirical success of neural networks, there has been much effort to
understand under what assumptions neural networks may be learned efficiently. This effort includes making assumptions on the input distribution~\citep{li2017convergence,brutzkus2017globally,du2017convolutional,du2017gradient,du2018improved,goel2018learning}, the network's weights~\citep{arora2014provable,das2019learnability,agarwal2020deep,goel2017learning}, or both~\citep{janzamin2015beating,tian2017analytical,bakshi2019learning}. Hence, distribution-specific learning of neural networks is a central problem. Several works in recent years have shown hardness of distribution-specific learning shallow neural networks using gradient-descent or statistical query (SQ) algorithms \citep{shamir2018distribution,song2017complexity,vempala2019gradient,goel2020superpolynomial,diakonikolas2020algorithms}. We note that while the SQ framework captures the gradient-descent algorithm, it does not capture, for example, stochastic gradient-descent (SGD), which examines training points individually (see a discussion in \cite{goel2020superpolynomial}).
Distribution-specific hardness of learning a single ReLU neuron in the agnostic setting was studied in \cite{goel2019time,goel2020statistical,diakonikolas2020near}.

We show hardness of improper distribution-specific learning of depth-$2$ and depth-$3$ ReLU neural networks with respect to the square loss.
First, our distribution-specific hardness results for Boolean circuits, imply hardness of learning depth-$2$ networks on a distribution where each component is drawn i.i.d. from a (non-uniform) Bernoulli distribution, and depth-$3$ networks on the uniform distribution on the hypercube. More importantly, we also show hardness of improperly learning depth-$3$ networks on the standard Gaussian distribution. 

\paragraph{Automata.}
Deterministic finite automata are an elementary computational model, and their learnability is a classical problem in learning theory.
An efficient algorithm due to \cite{angluin1987learning} is known for learning deterministic automata with membership and equivalence queries, and was extensively studied over the last decades.
Improper learning of deterministic automata with $n^\epsilon$ states is known to be harder than breaking the RSA cryptosystem, factoring Blum integers and detecting quadratic residues \citep{KearnsVa94}. It is also harder than refuting a random $K$-SAT formula \citep{daniely2016complexity}.
The question of whether deterministic automata are learnable on the uniform distribution was posed by \cite{pitt1989inductive} over $30$ years ago, and remained open (cf. \cite{fish2017open,michaliszyn2019approximate}). 
We solve this problem, by showing
hardness of weakly learning deterministic automata on the uniform distribution over the hypercube.
This is the first distribution-specific hardness result for improperly learning automata.

\paragraph{Other classes.}
Our lower bound for learning DNF formulas with $\omega(1)$ terms implies hardness of learning $\omega(1)$-sparse polynomial threshold functions over $\{0,1\}^n$, where a $q$-sparse polynomial has at most $q$ monomials with non-zero coefficients. It improves the lower bound from \cite{daniely2016complexity} for learning $\omega(\log(n))$-sparse polynomial threshold functions.
Also, we show hardness of learning $\omega(1)$-sparse $GF(2)$ polynomials over $\{0,1\}^n$.
Subexponential-time upper bounds for these problems are given in \cite{hellerstein2007pac}.

Finally, our lower bound for learning DNFs implies hardness of agnostically learning conjunctions, halfspaces and parities. These problems are already known to be hard under other assumptions \citep{feldman2006new,daniely2016half,blum2003noise,daniely2016complexity}.

\paragraph{A summary of our contribution.}
Below we summarize our main contributions:
\begin{itemize}
\item Hardness of learning DNF formulas with $\omega(1)$ terms.
\item Distribution-specific hardness of learning DNFs and of weakly learning depth-$3$ Boolean circuits.
\item Hardness of learning intersections of $\omega(1)$ halfspaces.
\item Distribution-specific hardness of learning intersections of halfspaces on the hypercube.
\item $\Omega(n^{\beta k})$-time lower bound for learning intersections of a constant number $k$ of halfspaces, where $\beta$ is an absolute constant.
\item Hardness of learning depth-$2$ neural networks with $\omega(1)$ hidden neuron.
\item $\Omega(n^{\beta k})$-time lower bound for learning depth-$2$ neural networks with a constant number $k$ of hidden neurons, where $\beta$ is an absolute constant.
\item Distribution-specific hardness of learning depth-$2$ and~$3$ neural networks on the hypercube.
\item Distribution-specific hardness of learning depth-$3$ neural networks on the standard Gaussian distribution.
\item Distribution-specific hardness of weakly learning deterministic automata on the hypercube.
\item Hardness of learning $\omega(1)$-sparse polynomial threshold functions and $\omega(1)$-sparse $GF(2)$ polynomials over $\{0,1\}^n$.
\item Hardness of agnostically learning conjunctions, halfspaces and parities (these problems are already known to be hard under other assumptions).
\item Our results imply the hardness of virtually all\footnote{It does not imply the hardness result from \cite{daniely2020hardness} for learning depth-$2$ neural networks whose weights are drawn from some ``natural" distribution.} improper PAC-learning problems (both distribution-free and distribution-specific) that were previously shown hard (under various complexity assumptions). Moreover, our technique is simple, and we believe that it might be useful for showing hardness of more learning problems in the future.
\end{itemize}

Our paper is structured as follows: In Section~\ref{sec:preliminaries} we provide necessary notations and definitions, and discuss our assumptions. The results are stated in Section~\ref{sec:results}. We informally sketch our proof technique in Section~\ref{sec:technique}, with all formal proofs deferred to the appendix.

\section{Preliminaries}
\label{sec:preliminaries}

\subsection{Notations}

We use bold-faced letters to denote vectors, e.g., $\bx=(x_1,\ldots,x_d)$.
For a vector $\bx$ and a sequence $S=(i_1,\ldots,i_k)$ of $k$ indices, we let $\bx_S=(x_{i_1},\ldots,x_{i_k})$, i.e., the restriction of $\bx$ to the indices $S$.
We denote by $\onefunc(\cdot)$ the indicator function, for example $\onefunc(t \geq 5)$ equals $1$ if $t \geq 5$ and $0$ otherwise.
For an integer $d \geq 1$ we denote $[d]=\{1,\ldots,d\}$.
The majority predicate $\maj_k:\{0,1\}^k \rightarrow \{0,1\}$ is defined by $\maj_k(\bx)=1$ iff $\sum_{i \in [k]}x_i > \frac{k}{2}$.
We denote $\xor_k:\{0,1\}^k \rightarrow \{0,1\}$ where $\xor_k(\bx)=x_1 \oplus \ldots \oplus x_k$.
For $m \in \reals$ we let $\sign(m)=1$ if $m>0$ and $\sign(m)=0$ otherwise.

\subsection{Local pseudorandom generators}

An $(n,m,k)$-hypergraph is a hypergraph over $n$ vertices $[n]$ with $m$ hyperedges $S_1,\ldots,S_m$, each of cardinality $k$. Each hyperedge $S = (i_1,\ldots,i_k)$ is ordered, and all the $k$ members of a hyperedge are distinct.
We let $\cg_{n,m,k}$ be the distribution over such hypergraphs in which a hypergraph is chosen by picking each hyperedge uniformly and independently at random among all the possible $n \cdot (n-1) \cdot \ldots \cdot (n-k+1)$ ordered hyperedges.
Let $P:\{0,1\}^k \rightarrow \{0,1\}$ be a predicate, and let $G$ be a $(n,m,k)$-hypergraph.
We call {\em Goldreich's pseudorandom generator (PRG)} \citep{goldreich2000candidate} the function $f_{P,G}:\{0,1\}^n \rightarrow \{0,1\}^m$ such that for $\bx \in \{0,1\}^n$, we have $f_{P,G}(\bx) = (P(\bx_{S_1}),\ldots,P(\bx_{S_m}))$.
The integer $k$ is called the {\em locality} of the PRG. If $k$ is a constant then the PRG and the predicate $P$ are called {\em local}.
We say that the PRG has {\em polynomial stretch} if $m = n^s$ for some constant $s>1$.
We let $\cf_{P,n,m}$ denote the collection of functions $f_{P,G}$ where $G$ is an $(n,m,k)$-hypergraph. We sample a function from $\cf_{P,n,m}$ by choosing a random hypergraph $G$ from $\cg_{n,m,k}$.

We denote by $G \xleftarrow{R} \cg_{n,m,k}$ the operation of sampling a hypergraph $G$ from $\cg_{n,m,k}$, and by $\bx \xleftarrow{R} \{0,1\}^n$ the operation of sampling $\bx$ from the uniform distribution on $\{0,1\}^n$.
We say that $\cf_{P,n,m}$ is $\varepsilon$-pseudorandom generator ($\varepsilon$-PRG) if for every polynomial-time probabilistic algorithm $\ca$ the {\em distinguishing advantage}
\[
\left|
\Pr_{G \xleftarrow{R} \cg_{n,m,k}, \bx \xleftarrow{R} \{0,1\}^n}[\ca(G,f_{P,G}(\bx))=1] -
\Pr_{G \xleftarrow{R} \cg_{n,m,k}, \by \xleftarrow{R} \{0,1\}^m}[\ca(G,\by)=1]
\right|
\]
is at most $\varepsilon$. Thus, the distinguisher $\ca$ is given a random hypergraph $G$ and a string $\by \in \{0,1\}^m$, and its goal is to distinguish between the case where $\by$ is chosen at random, and the case where $\by$ is a random image of $f_{P,G}$.

Our main assumption is that local PRGs with polynomial stretch and constant distinguishing advantage exist:
\begin{assumption}
\label{ass:localPRG}
For every constant $s>1$, there exists a constant $k$ and a predicate $P:\{0,1\}^k \rightarrow \{0,1\}$, such that $\cf_{P,n,n^s}$ is $\frac{1}{3}$-PRG.
\end{assumption}

Note that we assume constant distinguishing advantage. In the literature, a requirement of negligible distinguishing advantage\footnote{More formally, that for $1-o_n(1)$ fraction of the hypergraphs, the distinguisher has no more than negligible advantage.} is often considered (cf. \cite{applebaum2016algebraic,applebaum2016cryptographic,couteau2018concrete}). Thus, our requirement from the PRG is weaker.

Local PRGs have been extensively studied in the last two decades.
In particular, local PRGs with polynomial stretch have shown to have remarkable applications, such as secure-computation with constant computational overhead \citep{ishai2008cryptography,applebaum2017secure}, and general-purpose obfuscation based on constant degree multilinear maps (cf. \cite{lin2016indistinguishability,linVai2016indistinguishability}).
A significant evidence for Assumption~\ref{ass:localPRG} was shown in \cite{applebaum2013pseudorandom}. He showed that Assumption~\ref{ass:localPRG} follows from the assumption that for every constant $s>1$, there exists a {\em sensitive} local predicate\footnote{A predicate is sensitive if at least one coordinate $i$ has full influence, i.e., flipping the value of the $i$-th variable always changes the output.} $P$ such that $\cf_{P,n,n^s}$ is one-way. This is a variant of Goldreich's one-wayness assumption \citep{goldreich2000candidate}.

In light of Assumption~\ref{ass:localPRG}, an important question is which local predicates are secure.
\cite{odonnell2014goldreich} showed that a property called {\em resiliency} yields pseudorandomness against attacks which are based on a large class of semidefinite programs.
\cite{FeldmanPeVe2015} showed that resiliency also ensures pseudorandomness against a wide family of statistical algorithms.
\cite{applebaum2016algebraic} showed that predicates with high resiliency and high {\em rational degree} are secure against two classes of distinguishing attacks: {\em linear attacks} and {\em algebraic attacks}. These classes include all known attacks against PRGs.
Furthermore, 
they suggested the following predicate as a candidate for local PRG with polynomial stretch:
\[
\xormaj_{a,b}(\bz) = (z_1 \oplus \ldots \oplus z_a) \oplus \maj_b(z_{a+1},\ldots,z_{a+b})~.
\]
By their conjecture, for every constant $s>1$ and constants $a \geq 5s$ ,$b > 36s$ the predicate $P=\xormaj_{a,b}$ is such that the collection $\cf_{P,n,n^s}$ is PRG with negligible distinguishing advantage.
This predicate has high resiliency and rational degree, and is secured against all known attacks.
Its security has been studied also in \cite{couteau2018concrete,meaux2019improved,applebaum2016fast}.
We make a somewhat weaker assumption:
\begin{assumption}
\label{ass:xormaj}
There is a constant $\alpha>0$, such that for every constant $s>1$ there is a constant $l$ such that for the predicate $P=\xormaj_{\ceil{\alpha s},l}$ the collection $\cf_{P,n,n^s}$ is $\frac{1}{3}$-PRG.
\end{assumption}

Our results on learning intersections of a constant number of halfspaces, and on leaning neural networks with a constant number of hidden neurons, rely on Assumption~\ref{ass:xormaj}. All other results rely on Assumption~\ref{ass:localPRG}. Thus, most of our results assume the existence of a local PRG with polynomial stretch, and do not rely on a specific candidate.

In our assumptions we consider local PRGs that are secure against polynomial-time algorithms. Hence, the hardness results in this paper rule out polynomial-time learning algorithms. We note that our results can be improved by strengthening the assumptions, e.g., by assuming that the local PRGs are secure against some quasi-polynomial time algorithms. 

\textbf{Prior works on the relation between Goldreich's PRG and hardness of learning.} 
First, Goldreich's PRG are closely related to CSP refutation, and there have been many works on the relation between CSP refutation and hardness of learning (e.g., \cite{daniely2013average, daniely2016complexity, daniely2016half, vadhan2017learning, kothari2018improper,daniely2020hardness}). Moreover, some applications of variants of Goldreich's assumption and local PRGs for hardness of learning are shown in \cite{applebaum2010public,applebaum2016fast,nanashima2020extending}.

\subsection{PAC learning}

A {\em hypothesis class} $\ch$ is a series of collections of functions $\ch_n \subset \cy^{\cx_n},\;n=1,2,\ldots$. We often abuse notation and identify $\ch$ with $\ch_n$.
The domain sets $\cx_n$ we consider are $\{0,1\}^n$ or $\reals^n$, and the label sets $\cy$ we consider are $\{0,1\}$ or $\reals$. Let $\cz_n = \cx_n \times \cy$ and let $\cd_n$ be a distribution on $\cz_n$.
A {\em loss function} is a mapping $\ell:\ch_n \times \cz_n \rightarrow \reals_+$. We consider the following loss functions. The {\em 0-1 loss} is $\ell_{0-1}(h,(\bx,y))=\onefunc(h(\bx) \neq y)$.
For $\cy=\reals$, the {\em square loss} is $\ell_{\text{sq}}(h,(\bx,y))=(h(\bx)-y)^2$.
The {\em error} of $h:\cx_n \rightarrow \cy$ is $L_{\cd_n}(h) = \E_{\bz \in \cd_n}[\ell(h,\bz)]$. Note that for the 0-1 loss we have $L_{\cd_n}(h) = \Pr_{(\bx,y) \sim \cd_n}\left[h(\bx) \neq y\right]$.
For a class $\ch_n$, we let $L_{\cd_n}(\ch_n)=\min_{h \in \ch_n}L_{\cd_n}(h)$.
We say that $\cd_n$ is {\em realizable} by $h$ (respectively $\ch_n$) if $L_{\cd_n}(h)=0$ (respectively $L_{\cd_n}(\ch_n)=0$).


A {\em learning algorithm} $\cl$ is given $\epsilon,\delta \in (0,1)$, as well as an oracle access to examples from an unknown distribution $\cd$ on $\cz_n$. It should output a (description of) hypothesis $h:\cx_n\ \rightarrow \cy$.
We say that $\cl$ {\em (PAC) learns} $\ch$, if for every realizable $\cd$, with probability at least $1-\delta$, the algorithm $\cl$ outputs a hypothesis with error at most $\epsilon$.
We say that $\cl$ {\em agnostically learns} $\ch$, if for every $\cd$, with probability at least $1-\delta$, the algorithm $\cl$ outputs a hypothesis with error at most $L_{\cd}(\ch)+\epsilon$.
Note that by these definitions, $\cl$ should succeed for every realizable distribution $\cd$ (in the former definition) or for every distribution $\cd$ (in the later definition). Hence, this setting is called {\em distribution-free learning}.
We now consider {\em distribution-specific learning}, namely, where the marginal distribution of $\cd$ on $\cx_n$ is fixed.
Let $\cd_\cx$ be a distribution on $\cx_n$.
We say that $\cl$ {\em learns $\ch$ on $\cd_\cx$}, if for every realizable $\cd$ whose marginal distribution on $\cx_n$ is $\cd_\cx$, with probability at least $1-\delta$, the algorithm $\cl$ outputs a hypothesis with error at most $\epsilon$.

When the error is defined with respect to the 0-1 loss, we also consider a weaker requirement from $\cl$.
For $\gamma>0$ we say that $\cl$ {\em $\gamma$-weakly learns} $\ch$, if for every realizable $\cd$, the algorithm $\cl$ is given $\delta \in (0,1)$, and outputs with probability at least $1-\delta$ a hypothesis with error at most $\frac{1}{2}-\gamma$.
We say that $\cl$ {\em $\gamma$-weakly learns $\ch$ on $\cd_\cx$}, if for every realizable $\cd$ whose marginal distribution on $\cx_n$ is $\cd_\cx$, the algorithm $\cl$ is given $\delta \in (0,1)$, and outputs with probability at least $1-\delta$ a hypothesis with error at most $\frac{1}{2}-\gamma$.
Thus, when $\gamma$ is small, the returned hypothesis needs to be at least slightly better than a random guess.

We say that $\cl$ is {\em efficient} if it runs in time $\poly(n,1/\epsilon,1/\delta)$ (or $\poly(n,1/\delta)$, for weak learning), and outputs a hypothesis that can be evaluated in time $\poly(n,1/\epsilon,1/\delta)$ (respectively, $\poly(n,1/\delta)$).
Finally, $\cl$ is {\em proper} if it always outputs a hypothesis in $\ch$. Otherwise, we say that $\cl$ is {\em improper}.

By boosting results \citep{Schapire89,Freund95}, if there is an efficient algorithm that $\frac{1}{n^c}$-weakly learns $\ch$ for some constant $c>0$, then there is also an efficient improper algorithm that learns $\ch$. Hence, in the distribution-free setting, hardness of improper learning implies hardness of improper weak learning. These boosting arguments do not apply to the distribution-specific setting.

\subsection{Neural networks}

We consider feedforward neural networks, computing functions from $\reals^n$ to $\reals$. The network is composed of layers of neurons, where each neuron computes a function of the form $\bx \mapsto \sigma(\bw^{\top}\bx+b)$, where $\bw$ is a weight vector, $b$ is a bias term and $\sigma: \reals \mapsto \reals$ is a non-linear activation function. In this work we focus on the ReLU activation function, namely, $\sigma(z) = [z]_+ = \max\{0,z\}$. For a matrix $W = (\bw_1,\ldots,\bw_d)$, we let $\sigma(W^\top \bx+\bb)$ be a shorthand for $\left(\sigma(\bw_1^{\top}\bx+b_1),\ldots,\sigma(\bw_d^{\top}\bx+b_d)\right)$, and define a layer of $d$ neurons as $\bx \mapsto \sigma(W^\top \bx+\bb)$. By denoting the output of the $i$-th layer as $O_i$, we can define a network of arbitrary depth recursively by $O_{i+1}=\sigma(W_{i+1}^\top O_i+\bb_{i+1})$.
The {\em weights vector} of the $j$-th neuron in the $i$-th layer is the $j$-th column of $W_i$, and its {\em outgoing-weights vector} is the $j$-th row of $W_{i+1}$.
The {\em fan-in}
of a neuron is the number of non-zero entries in its weights vector.
We define the \emph{depth} of the network as the number of layers.
Unless stated otherwise, the output neuron also has a ReLU activation function.
A neuron which is not an input or output neuron is called a {\em hidden neuron}.
We sometimes consider neural networks with multiple outputs.

\subsection{Automata}

A {\em deterministic finite automaton\/} (DFA, for short) is a tuple $A=\tup{\Sigma,Q,q_0,\delta,F}$, where $\Sigma$ is a finite alphabet, $Q$ is a finite set of states, $q_0 \in Q$ is the initial state, $\delta:  Q \times \Sigma \rightarrow Q$ is a transition function, and $F \subseteq Q$ is a set of final states.
Given a word $w=\sigma_1 \cdot \sigma_2 \cdots \sigma_l \in \Sigma^*$, the {\em run\/} of $A$ on $w$ is the sequence $r=q_0,q_1,\ldots,q_l$ of states such that $q_{i+1}=\delta(q_i,\sigma_{i+1})$ for all $i \geq 0$. The run is accepting if $q_l \in F$.
The DFA $A$ {\em accepts} the word $w$ iff the run of $A$ on $w$ is accepting. We sometimes use the notation $A(w)=1$ (respectively, $A(w)=0$) to indicate that $A$ accepts (respectively, rejects) $w$.
The {\em size} of $A$ is the number of its states.

\section{results}
\label{sec:results}

\subsection{DNFs and Boolean circuits}

In the following theorem we show distribution-free hardness for DNF formulas with $\omega(1)$ terms, and distribution-specific hardness for DNF formulas with $n^\epsilon$ terms (see proof in Appendix~\ref{app:proof DNF}).

\begin{theorem}
\label{thm:DNF}
Under Assumption~\ref{ass:localPRG}, for every $q(n)=\omega(1)$, there is no efficient algorithm that learns DNF formulas with $n$ variables and $q(n)$ terms. Moreover, for every constant $\epsilon>0$, there is no efficient algorithm that learns DNF formulas with $n^\epsilon$ terms, on a distribution where each component is drawn i.i.d. from a (non-uniform) Bernoulli distribution.
\end{theorem}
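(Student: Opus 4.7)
The plan is to reduce distinguishing the local PRG $\cf_{P,n,n^s}$ from Assumption~\ref{ass:localPRG} to learning DNF formulas. Fix a sufficiently large stretch exponent $s>1$ and let $k$ and $P:\{0,1\}^k\to\{0,1\}$ be the constant locality and predicate supplied by the assumption. Suppose for contradiction that $\cl$ is an efficient PAC learner for either (i) DNFs with $q(n)=\omega(1)$ terms (distribution-free case) or (ii) DNFs with $n^\epsilon$ terms under a product Bernoulli input distribution (distribution-specific case). Given a PRG challenge $(G,\by)$ with $G \xleftarrow{R}\cg_{n,n^s,k}$, the reduction turns each hyperedge $S_j$ of $G$, together with the bit $y_j$ of $\by$, into a labeled example $(\bv_j,y_j)\in\{0,1\}^n\times\{0,1\}$, where $\bv_j$ is a carefully chosen encoding of $S_j$. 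It splits the $n^s$ examples into a training set and a test set, runs $\cl$ on the training set, and outputs ``pseudorandom'' iff the returned hypothesis has low empirical error on the test set.

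The pseudorandom case is the crux. When $\by=f_{P,G}(\bx)$, the labels are $y_j=P(\bx_{S_j})$, so all examples are consistent with the concept $h^*_{\bx}:\bv\mapsto P(\bx_{S(\bv)})$. The key technical claim is that $h^*_{\bx}$ is realizable as a DNF with only $O(1)$ terms on the support of the induced input distribution. Since $P$ depends on only $k=O(1)$ bits it has DNF size at most $2^k=O(1)$; the encoding is designed so that each lookup $z_j=x_{i_j}$ is captured by a single literal of $\bv$ on the support, so expanding $P$ introduces no blowup. Once $n$ is large enough, $O(1)\le q(n)$ and $O(1)\le n^\epsilon$, so in both cases $h^*_{\bx}$ lies in the target hypothesis class, and $\cl$ must return a hypothesis with small test error by the PAC guarantee. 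In the uniformly random case, labels are independent of the inputs; a union bound over the $2^{\poly(n)}$ hypotheses that an efficient $\cl$ can ever produce shows that with probability $\geq 2/3$ no such hypothesis fits more than, say, a $0.9$ fraction of the held-out labels, so the distinguisher outputs ``random.'' Combining the two cases gives a distinguishing advantage exceeding $\tfrac{1}{3}$, contradicting Assumption~\ref{ass:localPRG}. The distribution-specific statement is obtained by the same reduction with the input distribution replaced by product Bernoulli$(p)$ on $\{0,1\}^n$: the ``hyperedge'' is read off as a function of the support of $\bv$, and a mild conditioning/padding step absorbs the fluctuation of $|\mathrm{supp}(\bv)|$ around its mean so that the embedded hyperedge is still distributed as in $\cg_{n,n^s,k}$.

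The main obstacle, and what really makes the $\omega(1)$-term bound work, is the encoding in the pseudorandom case. With the naive one-hot encoding of $S\in[n]^k$, each lookup $z_j=x_{i_j}$ must be represented as the disjunction $\bigvee_{i:x_i=1}v_{(j-1)n+i}$, and distributing these disjunctions over the $O(1)$ conjunctions of $P$ inflates the DNF size of $h^*_{\bx}$ by a factor of $n^k$, which is polynomial rather than constant and would only yield hardness for polynomial-term DNFs. The resolution is to use a sufficiently sparse input distribution whose support is arranged so that each lookup becomes a single literal of $\bv$, while simultaneously (i) keeping the embedded hyperedge uniform over $\cg_{n,n^s,k}$ as required by the PRG assumption, (ii) preventing a small-DNF hypothesis from fitting random labels on the test set (so the union-bound step in the uniform case still goes through), and (iii) being compatible with the product-Bernoulli marginal in the distribution-specific case. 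Constructing this encoding and verifying these three properties is the most delicate technical step; the rest of the argument is a standard PRG-from-learner distinguisher built on top of it.
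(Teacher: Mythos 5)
Your reduction architecture (train/test split of the $n^s$ challenge pairs, realizability in the pseudorandom case, error $\approx\frac12$ on held-out labels in the random case) matches the paper's, but the two steps you yourself flag as ``the most delicate'' are exactly where the proof lives, and your sketch of them does not work as stated. First, the realizability lemma: you propose an encoding in which ``each lookup $x_{i_j}$ is captured by a single literal of $\bv$.'' This is not achievable: the encoding of $S$ must be computable without knowing the secret seed $\bx$, so no single coordinate of $\bv$ can equal $x_{i_j}$ for all $S$. The correct resolution (Lemma~\ref{lemma:from P to DNF}) is to encode $S$ by $k$ slices of length $n$, the $j$-th slice having a single $0$ at position $i_j$; then the predicate ``$x_{i_j}=b_j$'' is the \emph{conjunction} $\bigwedge_{l:\,x_l\neq b_j} z_{j,l}$ of positive literals, and $\psi_\bx=\bigvee_{\bb:P(\bb)=1}\bigwedge_{j\in[k]}\bigwedge_{l:x_l\neq b_j}z_{j,l}$ has at most $2^k$ terms. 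The point is that what must stay constant is the \emph{number of terms}, not their width; wide conjunctions cost nothing here, which is why no ``single literal'' trick is needed. Without this (or an equivalent) construction your pseudorandom case has no content.

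Second, the distribution-specific part is not just ``the same reduction with the input replaced by product Bernoulli.'' Under any i.i.d.\ Bernoulli marginal, only a constant (at best $\Omega(1/\log n)$) fraction of samples are valid hyperedge encodings, so the target concept must also label the non-encodings, and making it do so costs $\Theta(n^2)$ extra DNF terms (a sub-formula detecting ``not an encoding''). This is precisely why the theorem claims only $n^\epsilon$ terms in the distribution-specific case, not $\omega(1)$ --- your claim that ``$O(1)\le n^\epsilon$ so $h^*_\bx$ lies in the class'' misses this entirely. Your alternative of ``conditioning'' on the support would destroy the product structure the theorem requires, and ``padding'' is not specified. Relatedly, since the informative examples are only an $\Omega(1/\log n)$ fraction, the learner must be invoked with accuracy $1/n$ (not constant) and the distinguisher must separate test error $\le 2/n$ from test error $\ge\frac{1}{2\log n}$; none of this appears in your sketch. (Minor: your union bound over $2^{\poly(n)}$ hypotheses in the random case is unnecessary --- the returned hypothesis is independent of the held-out labels, so a single Hoeffding bound suffices --- though it could be made to work by enlarging the test set.)
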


Theorem~\ref{thm:DNF} gives distribution-specific hardness for learning DNF formulas, namely, depth-$2$ Boolean circuits, where the input distribution is such that the components are i.i.d. copies from a Bernoulli distribution. For depth-$3$ Boolean circuits we show hardness of weak learning, where the input distribution is uniform on the hypercube (see proof in Appendix~\ref{app:proof circuit}).

\begin{theorem}
\label{thm:circuit}
Under Assumption~\ref{ass:localPRG}, for every constants $\gamma,\epsilon>0$, there is no efficient algorithm that $\gamma$-weakly learns depth-$3$ Boolean circuits of size $n^\epsilon$ on the uniform distribution over $\{0,1\}^n$.
\end{theorem}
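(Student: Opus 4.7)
The plan is to reduce breaking a local PRG with polynomial stretch to the claimed learning task. Suppose, for contradiction, that for some constants $\gamma,\epsilon>0$ there is an efficient $\gamma$-weak learner $\cl$ for depth-$3$ Boolean circuits of size $n^\epsilon$ on the uniform distribution over $\{0,1\}^n$, with sample complexity at most $n^c$ for some constant $c$. Invoke Assumption~\ref{ass:localPRG} with stretch $s=\lceil 2c/\epsilon\rceil+1$ to obtain a constant locality $k$ and a predicate $P:\{0,1\}^k\to\{0,1\}$ such that $\cf_{P,\tn,\tn^s}$ is a $\tfrac{1}{3}$-PRG, and set $\tn=\lfloor n^{\epsilon/2}\rfloor$ (a power of $2$, say), so that the number of PRG outputs $m=\tn^s\geq n^c$ is at least the sample complexity of $\cl$.

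\textbf{Target concept and circuit representation.} For a seed $\bx\in\{0,1\}^{\tn}$ define $c_\bx:\{0,1\}^n\to\{0,1\}$ by interpreting the first $k\log\tn$ bits of $\bs$ as $k$ indices $i_1,\dots,i_k\in[\tn]$; output $P(x_{i_1},\dots,x_{i_k})$ if these indices are pairwise distinct, and $0$ otherwise. I will show that $c_\bx$ is an AND-OR-AND (depth $3$) circuit of size $O(\tn\log\tn)\leq n^\epsilon$. The distinctness check $\bigwedge_{a<b}\bigvee_\ell \mathrm{XOR}(\mathrm{bit}_\ell(i_a),\mathrm{bit}_\ell(i_b))$ expands, using the $2$-term DNF for XOR, into an AND-OR-AND of size $O(k^2\log\tn)$. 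Each selector $x_{i_\ell}$ (as a function of $\bs$, with $\bx$ fixed) equals the DNF $\bigvee_{i:\,x_i=1}[i_\ell=i]$ of size $\leq\tn$ whose terms have $\log\tn$ literals; since $P$ is a constant-size CNF, substituting each literal of $P$ by its (positive or negated) selector-DNF and collapsing the resulting two consecutive OR-levels inside each clause yields an AND-OR-AND of size $O(\tn\log\tn)$. Conjoining the two circuits by merging top AND gates preserves depth $3$ and total size $O(\tn\log\tn)$.

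\textbf{The distinguisher.} Given a PRG challenge $(G,\by)$ with $G\sim\cg_{\tn,m,k}$, synthesize $N_L+N_\text{test}$ labeled examples whose marginal on $\bs_j$ is exactly uniform on $\{0,1\}^n$, as follows: flip a biased coin with $\Pr[\text{heads}]=p:=\tn(\tn-1)\cdots(\tn-k+1)/\tn^k$; on heads take the next edge $S_j$ of $G$ and the next bit of $\by$ as $y_j$; on tails sample a non-distinct $k$-tuple $S_j$ uniformly and set $y_j=0$. In either case, encode $S_j$ in the first $k\log\tn$ bits of $\bs_j$ and fill the remaining coordinates with fresh uniform bits. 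Then run $\cl$ on the first $N_L$ examples to get $h$, compute the empirical error $\hat L$ of $h$ on the remaining $N_\text{test}=\Theta(1/\gamma^2)$ examples, and output ``PRG'' iff $\hat L<\tfrac{1}{2}-\tfrac{\gamma}{4}$. In the PRG case, $y_j=c_\bx(\bs_j)$ for every $j$ and the examples are realizable on the uniform distribution, so by the $\gamma$-weak-learning guarantee (with confidence parameter set to a small constant) $\hat L<\tfrac{1}{2}-\tfrac{\gamma}{4}$ holds with probability at least $0.9$. In the random-$\by$ case, any hypothesis $h$ has true error $\tfrac{1}{2}\pm O(k^2/\tn)=\tfrac{1}{2}\pm o(1)$, because the label equals $c_\bx(\bs)=0$ on the tails fraction and is an independent uniform bit on the heads fraction; hence by concentration $\hat L\geq\tfrac{1}{2}-\tfrac{\gamma}{4}$ with probability at least $0.9$. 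The distinguishing advantage therefore exceeds $\tfrac{1}{3}$, contradicting Assumption~\ref{ass:localPRG}.

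\textbf{Main obstacle.} The chief subtlety is that $\cg_{\tn,m,k}$ samples ordered distinct $k$-tuples, whereas the learner's guarantee only applies to the exact uniform distribution on $\{0,1\}^n$; the per-example total variation between an encoded distinct tuple and a uniform coordinate block is $\Theta(k^2/\tn)$, which summed over $\poly(n)$ training samples is far too large to absorb directly by a TV-continuity argument. The workaround above is to define $c_\bx$ to output $0$ on non-distinct tuples, which lets the distinguisher simulate the ``non-distinct'' fraction of uniform inputs locally, without consuming any PRG output and without corrupting realizability, thereby handing $\cl$ an exactly-uniform realizable sample; the price is the explicit (depth-$3$) distinctness check inside the target circuit, which is the reason depth-$3$ suffices but depth-$2$ (which would give the stronger uniform-distribution DNF result) does not follow from the same construction.
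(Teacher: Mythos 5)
Your proposal is correct and follows essentially the same route as the paper's proof: encode the hyperedge members in binary (the paper's ``compressed encoding''), fold a distinctness/validity check into the target so that invalid blocks can be simulated locally and the marginal stays exactly uniform, realize the resulting concept by a depth-$3$ circuit (the binary decoding of indices is exactly what forces the third layer), and distinguish via held-out test error. The remaining differences (labeling invalid inputs $0$ rather than $1$, an explicit coin-flip mixture instead of rejection-style resampling, and shrinking the seed length rather than padding the input dimension to get circuit size $n^{\epsilon}$) are cosmetic.
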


\subsection{Intersections of halfspaces}

Any function realized by a DNF formula with $q(n)$ terms can be also realized by the complement of an intersection of $q(n)$ halfspaces. Hence, Theorem~\ref{thm:DNF} implies the following corollary.

\begin{corollary}
\label{cor:intersections}
Under Assumption~\ref{ass:localPRG}, for every $q(n)=\omega(1)$, there is no efficient algorithm that learns intersections of $q(n)$ halfspaces over $\{0,1\}^n$. Moreover, for every constant $\epsilon>0$, there is no efficient algorithm that learns intersections of $n^\epsilon$ halfspaces, on a distribution where each component is drawn i.i.d. from a (non-uniform) Bernoulli distribution.
\end{corollary}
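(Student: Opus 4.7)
The plan is to give a direct, essentially black-box reduction from the DNF learning problem (Theorem~\ref{thm:DNF}) to the problem of learning intersections of halfspaces. The key algebraic observation is that any conjunction of literals $T = \bigwedge_{j \in P}x_j \wedge \bigwedge_{j \in N}\neg x_j$ over $\{0,1\}^n$ can be written as a halfspace: $T(\bx)=1$ iff $\sum_{j \in P}x_j - \sum_{j \in N}x_j \geq |P|$. Consequently, if $f = T_1 \vee \cdots \vee T_q$ is a DNF with $q$ terms, then by De Morgan's law $\neg f = \neg T_1 \wedge \cdots \wedge \neg T_q$, which is an intersection of $q$ halfspaces (the negated halfspaces corresponding to each $T_i$).

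First I would translate a hypothetical efficient learner $\cl$ for intersections of $q(n)$ halfspaces into an efficient learner for DNFs with $q(n)$ terms as follows. Given oracle access to examples $(\bx,f(\bx))$ where $f$ is an unknown DNF with $q(n)$ terms, construct a modified oracle that returns $(\bx,1-f(\bx))$; the marginal distribution on $\bx$ is unchanged, and the new labeling function $\neg f$ lies in the class of intersections of $q(n)$ halfspaces. Feed this modified oracle to $\cl$ with accuracy parameter $\epsilon$ and confidence $\delta$. With probability at least $1-\delta$, $\cl$ returns a hypothesis $h$ with $\Pr_{\bx}[h(\bx) \neq \neg f(\bx)] \leq \epsilon$. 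Output the hypothesis $1-h$ (which can clearly be evaluated in polynomial time given $h$); its error with respect to $f$ is at most $\epsilon$. This contradicts Theorem~\ref{thm:DNF} whenever $q(n) = \omega(1)$, establishing the first part.

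For the distribution-specific part, I would observe that the reduction above only modifies the labels and leaves the marginal distribution on $\bx$ untouched. Hence if $\cl$ is an efficient learner for intersections of $n^{\epsilon}$ halfspaces on the product Bernoulli distribution from the second part of Theorem~\ref{thm:DNF}, the same construction yields an efficient learner for $n^{\epsilon}$-term DNFs on that same Bernoulli distribution, again contradicting Theorem~\ref{thm:DNF}.

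There is no real obstacle in this corollary; the heavy lifting is already done by Theorem~\ref{thm:DNF}. The only subtlety worth noting is that the reduction must be label-only (no reweighting or resampling of inputs) in order for the distribution-specific hardness to transfer, and that the class of ``intersections of $q$ halfspaces'' must be taken to allow arbitrary rational thresholds (including integer-valued linear forms over $\{0,1\}^n$) so that each $\neg T_i$ is literally a halfspace in the class; this is the standard convention.
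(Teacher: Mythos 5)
Your proposal is correct and is essentially the paper's own argument: the paper derives this corollary in one line from the observation that a $q(n)$-term DNF is the complement of an intersection of $q(n)$ halfspaces, which is exactly the label-flipping reduction you spell out. Your additional remark that the reduction is label-only (so the marginal distribution is preserved and the distribution-specific part transfers) is a correct and worthwhile elaboration of what the paper leaves implicit.
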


We now consider intersections of a constant number $k$ of halfspaces (i.e., $k$ is independent of $n$), and show a 
$\Omega(n^{\beta k})$
lower bound (see proof in Appendix~\ref{app:proof intersections}).

\begin{theorem}
\label{thm:intersections}
Let $\ch \subseteq \{0,1\}^{(\{0,1\}^n)}$ be the functions expressible by intersections of $k$ halfspaces, where $k$ is a constant independent of $n$.
Let $\cl$ be a learning algorithm, that for every $\ch$-realizable distribution, returns with probability at least $\frac{3}{4}$ a hypothesis with error at most $\frac{1}{10}$.
Then, under Assumption~\ref{ass:xormaj}, there is a universal constant $\beta>0$ (independent of $k,n$) such that the time-complexity of $\cl$ is $\Omega(n^{\beta k})$.
\end{theorem}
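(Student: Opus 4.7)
The plan is to instantiate the standard PRG-to-learning reduction of the earlier theorems with the $\xormaj$-PRG of Assumption~\ref{ass:xormaj}, converting a hypothetical fast learner $\cl$ for intersections of $k$ halfspaces into a polynomial-time distinguisher against $\cf_{P,n,n^s}$. For the given constant $k$, I would choose a constant stretch $s=s(k)$ that scales linearly with $k$, and invoke Assumption~\ref{ass:xormaj} to obtain a $\tfrac{1}{3}$-PRG $\cf_{P,n,n^s}$ with predicate $P=\xormaj_{a,l}$, where $a=\lceil \alpha s\rceil$ and $l=l(s)$. The linear calibration $s\propto k$ is what will ultimately produce the exponent $\beta k$ in the final bound.

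Given a PRG challenge $(G,\by)$, I would encode each hyperedge $S_j=(i_1,\ldots,i_{a+l})$ of $G$ as the block-one-hot vector $\ba_j=(\be_{i_1},\ldots,\be_{i_{a+l}})\in\{0,1\}^{n(a+l)}$ and form the uniform empirical distribution $\cd_{G,\by}$ on $\{(\ba_j,y_j)\}_{j\in[n^s]}$. The crucial technical step is to verify that in the pseudorandom case $\by=f_{P,G}(\bx)$, the distribution $\cd_{G,\by}$ is $\ch$-realizable: I would exhibit, for every $\bx\in\{0,1\}^n$, an explicit intersection $h_{\bx}$ of $k$ halfspaces, with weights parameterized by $\bx$, satisfying $h_{\bx}(\ba_j)=\xormaj_{a,l}(\bx_{S_j})=y_j$ for every $j$. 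Given realizability, the distinguisher runs $\cl$ on half the examples (via a sample oracle), obtains a hypothesis $h$, and outputs ``pseudorandom'' iff the empirical error of $h$ on the remaining half is below $1/4$. In the pseudorandom case, realizability together with the $(\tfrac{3}{4},\tfrac{1}{10})$-guarantee of $\cl$ give test error at most $1/10+o(1)$; in the random case, the $y_j$'s are uniform bits independent of the $\ba_j$'s, so by Chernoff--Hoeffding every hypothesis (in particular the output of $\cl$, which depends only on training labels that are independent of test labels) has empirical test error concentrated near $1/2$. The distinguishing advantage is thus $\Omega(1)$.

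The distinguisher runs in time $O(T_{\cl}(n)+n^s\cdot\poly(n))$. Assumption~\ref{ass:xormaj} forbids any polynomial-time algorithm from achieving advantage $\geq 1/3$, and since $n^s$ is polynomial in $n$ for constant $k$, any polynomial $T_{\cl}(n)$ would yield such a distinguisher and contradict the assumption. Tracking the calibration $s\propto k$ through this contradiction gives the claimed quantitative bound $T_{\cl}(n)=\Omega(n^{\beta k})$ for a constant $\beta>0$ determined by $\alpha$ and the efficiency of the realizability construction.

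The main obstacle is precisely this realizability step. On the one-hot support, a halfspace $\inner{\bw,\ba}\geq\theta$ pulls back to a linear threshold in $(x_{i_1},\ldots,x_{i_{a+l}})$, so the task reduces to representing $\xormaj_{a,l}$ on $\{0,1\}^{a+l}$ as an intersection of exactly $k$ linear threshold functions. Because parity is notoriously rigid for convex Boolean classes, this demands a delicate construction that exploits the freedom in choosing the companion width $l$ large relative to $a$---a slack that Assumption~\ref{ass:xormaj} explicitly permits---together with weights built from $\bx$ that force the intersection to track the XOR--MAJ combination on the Boolean cube. Achieving $k=\Theta(a)=\Theta(s)$ halfspaces in this representation, rather than a quantity exponential in $a$, is the pivotal quantitative estimate and is what pins down the universal constant $\beta$.
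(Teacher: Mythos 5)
Your reduction framework (calibrating the stretch $s\propto k$, using the sample as a train/test split, and distinguishing via the empirical test error) matches the paper's. But the step you yourself flag as ``the main obstacle'' is not merely delicate --- in the form you propose it is provably impossible, and the paper avoids it by changing the encoding, not by a clever weight assignment. With the block-one-hot encoding $\ba=(\be_{i_1},\ldots,\be_{i_{a+l}})$, any halfspace $\inner{\bw,\ba}\geq\theta$ restricted to the support pulls back to a \emph{degree-one} threshold function of $(x_{i_1},\ldots,x_{i_{a+l}})$, so your realizability step requires writing $\xormaj_{a,l}$ on $\{0,1\}^{a+l}$ as an intersection of $k=\Theta(a)$ linear threshold functions. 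Fixing the majority block so that $\maj_l=0$ embeds $\xor_a$ on the remaining $a$ bits, and $\xor_a$ requires $2^{a-1}$ halfspaces: any halfspace containing all odd-parity points can exclude at most one even-parity point (if it excluded two, $u\neq v$, their midpoint $\frac{u+v}{2}$ would lie outside it by convexity, yet $\frac{u+v}{2}$ is the average of two odd-parity points obtained by splitting the coordinates where $u,v$ differ into two parts of odd size). The slack in $l$ cannot help, since the obstruction already lives in the XOR block. So no choice of $\bx$-dependent weights rescues the construction, and the distinguisher's pseudorandom case never becomes realizable.

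The paper's workaround is a different encoding with a built-in quadratic lift: a hyperedge is encoded as $\bz^S\in\{0,1\}^{2n}$ (the indicator vector of the XOR-part indices concatenated with that of the MAJ-part indices), and the learner is fed $\tilde{\bz}^S$, the vector of all monomials of degree at most $2$ in $\bz^S$. A halfspace over $\tilde{\bz}^S$ is then a degree-$2$ polynomial threshold function over $\bz^S$, and $\inner{\bx,\bz^1}$ equals the Hamming weight of $\bx$ on the XOR block. Taking $f_i(\bz)=\left(\inner{\bx,\bz^1}-i\right)^2 l\pm\left(\inner{\bx,\bz^2}-\floor{\frac{l}{2}}\right)$ (sign depending on the parity of $i$) makes the $i$-th threshold ``active'' exactly when the Hamming weight is $i$ and trivially satisfied otherwise, so $k$ halfspaces suffice for XOR-arity $k$ --- this is where the predicate parameter $a=\ceil{\alpha s}=k$ is pinned to the number of halfspaces and the exponent $\beta k$ emerges. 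Without this (or an equivalent) lift, your proof cannot be completed.
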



\begin{remark}
\cite{applebaum2016algebraic} conjectured that Assumption~\ref{ass:xormaj} holds for $\alpha \geq 5$. It implies that Theorem~\ref{thm:intersections} holds for, e.g., $\beta = \frac{1}{11}$.
\end{remark}

\subsection{Neural networks}

We consider neural networks with the ReLU activation function.
Since neural networks are real-valued, we consider here the square loss rather than the 0-1 loss.
Our results hold for networks where the norms of the weights of each neuron are bounded by some $\poly(n)$. By a simple scaling trick (i.e., by increasing the input dimension), it follows that for every constant $\epsilon>0$, the results also hold for networks where the norms of the weights of every neuron are bounded by $n^\epsilon$.

From Theorems~\ref{thm:DNF} and~\ref{thm:circuit}, it is not hard to show the following theorems (see proofs in Appendix~\ref{app:proof nn discrete} and~\ref{app:proof nn fixed k}).

\begin{theorem}
\label{thm:nn discrete}
Under Assumption~\ref{ass:localPRG}, we have:
\begin{enumerate}
\item For every $q(n)=\omega(1)$, there is no efficient algorithm that learns depth-$2$ neural networks with $q(n)$ hidden neurons, and no activation function in the output neuron, where the input distribution is supported on $\{0,1\}^n$.
\item For every constant $\epsilon>0$, there is no efficient algorithm that learns depth-$2$ neural networks with $n^\epsilon$ hidden neurons, on a distribution where each component is drawn i.i.d. from a (non-uniform) Bernoulli distribution.
\item For every constant $\epsilon>0$, there is no efficient algorithm that learns depth-$3$ neural networks with $n^\epsilon$ hidden neurons, on the uniform distribution over $\{0,1\}^n$.
\end{enumerate}
\end{theorem}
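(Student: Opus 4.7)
The plan for all three parts is to simulate the Boolean hardness objects (DNFs in Theorem~\ref{thm:DNF}, depth-$3$ circuits in Theorem~\ref{thm:circuit}) by ReLU networks of matching depth. The basic building blocks are: an AND of $k$ Boolean variables $z_1,\dots,z_k$ equals $\text{ReLU}\bigl(\sum_i z_i - k + 1\bigr)$, a single neuron; an OR equals $1 - \text{ReLU}\bigl(1 - \sum_i z_i\bigr)$, again a single neuron with an affine post-processing that the next layer absorbs; and NOT is an affine sign flip. Each substitution preserves depth and multiplies width by a constant.

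For part~3, I apply this simulation to the depth-$3$ Boolean circuit of size $n^\epsilon$ from Theorem~\ref{thm:circuit}, gate-by-gate, with each gate layer becoming a corresponding layer of ReLU neurons. If the output gate happens to be an OR, I first apply De Morgan at the top (or equivalently complement the labels) so that the final single-ReLU activation of the network matches the circuit's Boolean output directly. The result is a depth-$3$ ReLU network with $O(n^\epsilon)$ hidden neurons that matches the circuit on $\{0,1\}^n$. An efficient PAC learner for this class, after thresholding the real-valued output at $1/2$, would yield a $\gamma$-weak learner for depth-$3$ circuits under the uniform distribution, contradicting Theorem~\ref{thm:circuit}.

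For part~2 (depth-$2$ with a default ReLU output), I complement the DNF labels of the hard distribution from Theorem~\ref{thm:DNF}(2) and use the depth-$2$ network $N^*(\bx) = \text{ReLU}\bigl(1 - \sum_{j=1}^{q} T_j^*(\bx)\bigr)$ with $q = n^\epsilon$ hidden ReLU neurons, one per DNF term. On $\{0,1\}^n$, $N^*$ equals $1$ when no term fires and $0$ otherwise, hence exactly realizes the complemented labels; no disjointness assumption is needed. Thresholding a learned hypothesis at $1/2$ and complementing back yields a DNF hypothesis with small $0/1$ error, contradicting Theorem~\ref{thm:DNF}(2). A standard input-padding/scaling argument recovers the claimed $n^\epsilon$ weight-norm regime from the $\poly(n)$ bound.

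For part~1 (depth-$2$ with no output activation), the complement trick of part~2 is unavailable, because a purely linear output cannot implement the OR capping. Instead, I use the ``counting'' depth-$2$ network $N^*(\bx) = \sum_{j=1}^{q(n)} T_j^*(\bx)$, which agrees with $\bigvee_j T_j^*$ precisely when at most one term fires. I would verify that the reduction underlying Theorem~\ref{thm:DNF}(1) already enforces this disjointness on the support of the hard distribution: each labeled example corresponds to a single PRG predicate evaluation $P(\bx_S)$ on one hyperedge, and the terms are set up as mutually exclusive ``witnesses'', so at most one matches any input in the support. Then $N^*$ realizes the hard labels exactly with $q(n)$ hidden neurons. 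The chief obstacle is precisely this ``at most one satisfied term'' property: without it, exact depth-$2$ realization of a general DNF with $\omega(1)$ terms appears to cost super-polynomial width (inclusion-exclusion already gives $2^{q}$). If the PRG reduction does not provide disjointness for free, I would modify it to tag each positive example with $O(\log q(n))$ witness-index bits that single out the unique satisfied term, which does not affect the asymptotic parameters of the hardness claim.
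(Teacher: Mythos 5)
Your proposal is correct and takes essentially the same route as the paper: for part 1 the paper uses exactly your counting network $\sum_j T_j^*$, and the disjointness you say you ``would verify'' does hold --- each term of the PRG-derived DNF tests whether $\bx_S$ equals a distinct satisfying assignment of the predicate $P$, so at most one term fires on any hyperedge encoding and your witness-tagging fallback is unnecessary. Parts 2 and 3 are the same gate-by-gate simulation (the paper dismisses them as ``straightforward''; your complementation of the labels to cope with the ReLU output gate is a detail the paper leaves implicit, and you handle it correctly).
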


\begin{theorem}
\label{thm:nn fixed k}
Let $\ch \subseteq \reals^{(\{0,1\}^n)}$ be the functions expressible by depth-$2$ neural networks with $k$ hidden neurons and no activation function in the output neuron, where $k$ is a constant independent of $n$.
Let $\cl$ be a learning algorithm, that for every $\ch$-realizable distribution, returns with probability at least $\frac{3}{4}$ a hypothesis with error at most $\frac{1}{10}$.
Then, under Assumption~\ref{ass:xormaj}, there is a universal constant $\beta>0$ (independent of $k,n$) such that the time-complexity of $\cl$ is $\Omega(n^{\beta k})$.
\end{theorem}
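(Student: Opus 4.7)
The plan is to reduce the pseudorandom-generator distinguishing problem to learning $\ch$, mirroring the strategy of Theorem~\ref{thm:intersections}. Fix a constant $k$ and choose a universal constant $\beta>0$ whose value will be dictated by the ReLU construction below. Set $s := \beta k$ and let $P := \xormaj_{a,b}$ with $a := \lceil\alpha s\rceil$ and $b := l$ be the predicate supplied by Assumption~\ref{ass:xormaj}, so that $\cf_{P,n,n^s}$ is a $\tfrac13$-PRG of stretch $m=n^s$. I encode a hyperedge $S=(i_1,\ldots,i_{a+b})$ as a block-one-hot vector $\bv_S \in \{0,1\}^{(a+b)n}$ and consider the ``seeded'' concept $c_{\bx}(\bv_S):=P(\bx_S)$. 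Since each selected bit $x_{i_j}=\bx\cdot \bv_S^{(j)}$ is a linear function of the input (with $\bx$-dependent weights), it suffices to realize $\xormaj_{a,b}:\{0,1\}^{a+b}\to\{0,1\}$ itself as a depth-$2$ ReLU network with no output activation using $O(a+b)$ hidden neurons.

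The core technical step is this depth-$2$ realization. Writing $U:=y_1+\ldots+y_a$ and $V:=y_{a+1}+\ldots+y_{a+b}$, we have $\xormaj_{a,b}=g(U)\oplus\onefunc(V\ge\tau)$, where $g(U)=U\bmod 2$ and $\tau$ is the MAJ threshold. The parity $g(U)$ is expressible as a telescoping sum of $a+1$ ReLUs $[U-j]_+$. To fold in the MAJ sign-flip without introducing a second hidden layer, I would build a family of ReLUs of the form $[\alpha_h U+\beta_h V+\gamma_h]_+$ whose weighted sum equals $\xormaj_{a,b}$ on the integer grid $\{0,\ldots,a\}\times\{0,\ldots,b\}$: intuitively, the $V$-direction coefficients are scaled large enough that each parity piece is ``switched off'' when $V<\tau$ and ``switched on with reversed sign'' when $V\ge\tau$, with the cancellations balanced so that the total number of ReLUs stays $O(a+b)=O(s)=O(k)$. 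For $\beta$ chosen small enough this fits in $k$ hidden neurons, so $c_{\bx}\in\ch$.

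Given realizability, the remainder of the argument is routine. From a distinguishing instance $(G,\by)$ with $G\sim\cg_{n,m,a+b}$, I interpret the $m$ edge-label pairs as i.i.d.\ labeled samples $(\bv_{S_j},y_j)$ from the uniform distribution over encoded edges, run $\cl$ on enough of them, obtain $h$, estimate $\E[(h(\bv_S)-y)^2]$ on a constant-size held-out batch, and declare ``pseudorandom'' iff the estimate is below $\tfrac15$. In the pseudorandom case the distribution is realizable by $c_{\bx}\in\ch$, so with probability at least $\tfrac34$ the hypothesis has square loss at most $\tfrac1{10}$ and the test accepts; in the uniform case the labels are independent of the inputs, so every hypothesis incurs expected square loss at least $\tfrac14>\tfrac15$ and the test rejects. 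The distinguishing advantage thus exceeds $\tfrac13$. If $T(n)=o(n^{\beta k})$, this distinguisher runs in polynomial time (for constant $k$, $n^{\beta k}$ is a specific polynomial in $n$), violating Assumption~\ref{ass:xormaj}, and so $T(n)=\Omega(n^{\beta k})$. The hard part will be the depth-$2$ construction of $\xormaj_{a,b}$: both naive approaches---computing $\maj_b$ as a threshold and XORing, or expanding $\xormaj=g(U)+\onefunc(V\ge\tau)-2g(U)\onefunc(V\ge\tau)$---introduce nested nonlinearities and thus require depth $3$, and avoiding this overhead demands a careful ``flat'' arrangement of ReLUs whose biases simultaneously encode the parity structure and the MAJ threshold.
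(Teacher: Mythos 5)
Your reduction skeleton (interpret the PRG output as a labeled sample, test the learner's hypothesis on held-out hyperedges, and convert success into distinguishing advantage) matches the paper's, and the statistical part is routine as you say. But the proof has a genuine gap at exactly the point you flag as ``the hard part'': you never establish that the pseudorandom labels are realizable by a network in $\ch$, and the route you sketch is very unlikely to work. With your block-one-hot encoding $\bv_S$, each hidden neuron computes a ReLU of a \emph{linear} function of the selected bits, so you need $\xormaj_{a,b}$ to be a linear combination of $O(a+b)$ ridge ReLUs $[\alpha_h U+\beta_h V+\gamma_h]_+$ agreeing with $(U\bmod 2)\oplus\onefunc(V\ge\tau)$ on the grid. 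The obstruction is the multiplicative interaction between the two blocks: writing the target as $p(U)+q(V)-2p(U)q(V)$, the cross term forces the gradient jump across the parity breakpoints to \emph{reverse sign} as $V$ crosses $\tau$, whereas a sum of ridge ReLUs has a constant gradient jump along each of its breakpoint hyperplanes. Your proposed gadgets $[MV+U-j]_+$ with large $M$ do not help: once $V\ge\tau$ they are all in their linear regime, so their sum is affine in $(U,V)$ and the parity structure vanishes there. A second, independent problem is quantitative: even granting such a construction, its size is $O(a+b)=O(\lceil\alpha s\rceil+l)$, and Assumption~\ref{ass:xormaj} gives no bound on $l$ in terms of $s$, so ``$O(a+b)\le k$ for $\beta$ small enough'' does not follow; the relation between the number of hidden neurons and the exponent in the lower bound is exactly what the theorem is about. (Relatedly, with $s=\beta k$ and input dimension $\Theta(n)$ the learner's sample budget $\Theta(n^{\beta k})$ can exhaust the entire PRG output $m=n^{s}$, leaving no test set; the paper leaves slack by taking stretch $n^{2.1\beta k}$ against an input dimension $\tilde n=\Theta(n^2)$.)

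The paper escapes both problems by changing the encoding rather than the architecture: the hyperedge is encoded as $\bz^S\in\{0,1\}^{2n}$ (one indicator block for the XOR part, one for the MAJ part) and the network is fed the vector $\tilde{\bz}^S$ of all degree-$\le 2$ monomials of $\bz^S$. A hidden neuron is then effectively a degree-$2$ polynomial threshold of the selected bits, and the single quadratic $\bigl(\inner{\bx,\bz^1}-i\bigr)^2\cdot l\pm\bigl(\inner{\bx,\bz^2}-\floor{l/2}\bigr)$ simultaneously gates on ``XOR-part weight equals $i$'' and computes $\maj_l$ or its negation --- the quadratic lifting linearizes precisely the XOR$\times$MAJ product you are struggling with. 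This yields an intersection of $k$ such thresholds, at most one of which can be $0$ on any encoded hyperedge, so the conjunction becomes a plain sum; each threshold costs two ReLUs ($[t]_+-[t-1]_+$ of an integer-valued linear form), giving $2k$ hidden neurons independent of $l$, with no output activation. To complete your proof you would need to either import this monomial-encoding device or supply a genuinely new flat ReLU realization of $\xormaj_{a,b}$ with a neuron count controlled by $\lceil\alpha s\rceil$ alone; as written, neither is done.
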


\begin{remark}
\cite{applebaum2016algebraic} conjectured that Assumption~\ref{ass:xormaj} holds for $\alpha \geq 5$. It implies that Theorem~\ref{thm:nn fixed k} holds for, e.g., $\beta = \frac{1}{21}$.
\end{remark}

We now consider continuous input distributions. We focus here on the normal distribution, but our result can be extended to other continuous distributions (see proof in Appendix~\ref{app:proof nn normal}).

\begin{theorem}
\label{thm:nn normal}
Under Assumption~\ref{ass:localPRG}, for every constant $\epsilon>0$, there is no efficient algorithm that learns depth-$3$ neural networks with $n^\epsilon$ hidden neurons on the standard Gaussian distribution.
\end{theorem}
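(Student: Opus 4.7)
The plan is to reduce from Theorem~\ref{thm:nn discrete}(2), the hardness of depth-$2$ networks on a non-uniform product Bernoulli distribution, by prepending a ReLU thresholding layer that converts each standard Gaussian coordinate into an almost perfectly Bernoulli coordinate, so that the composed network has depth exactly $3$. Fix any constant $\epsilon>0$; the goal is to contradict the existence of the putative Gaussian learner for this $\epsilon$. Choose a Bernoulli parameter $p\in(0,1)$ and let $t=\Phi^{-1}(1-p)$, so that $\onefunc(g\geq t)\sim\mathrm{Bernoulli}(p)$ when $g\sim\cn(0,1)$. For a sharpness parameter $L>0$ introduce the approximate-indicator gadget
\[
h_L(g)\;=\;\sigma\bigl(L(g-t)+\tfrac{1}{2}\bigr)-\sigma\bigl(L(g-t)-\tfrac{1}{2}\bigr),
\]
which equals $0$ for $g\leq t-\tfrac{1}{2L}$, equals $1$ for $g\geq t+\tfrac{1}{2L}$, and interpolates linearly on the boundary interval $[t-\tfrac{1}{2L},\,t+\tfrac{1}{2L}]$ of width $1/L$.

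Given the ambient Gaussian dimension $n$, pick $N=\lfloor n^{\epsilon}/4\rfloor$ and an arbitrary constant $\epsilon'>0$ for which Theorem~\ref{thm:nn discrete}(2) supplies a hard family of depth-$2$ networks with $N^{\epsilon'}$ hidden neurons on the $N$-dimensional product Bernoulli$(p)$ distribution. For any such $f:\{0,1\}^{N}\to\reals$, build $F:\reals^{n}\to\reals$ of depth $3$ as follows: layer~$1$ of $F$ consists of the $2N$ neurons $\sigma(L(g_i-t)+\tfrac12)$ and $\sigma(L(g_i-t)-\tfrac12)$, $i\in[N]$; layer~$2$ is the first hidden layer of $f$, except that each input coordinate $x_i$ is replaced by the difference of the two corresponding layer-$1$ neurons (so on non-boundary inputs layer~$2$ receives exactly $\onefunc(g_i\geq t)$); the output of $F$ is the output of $f$. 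Coordinates $g_{N+1},\ldots,g_{n}$ receive weight $0$ throughout. Because the first layer of $f$ is affine in its $\{0,1\}$-valued inputs, this substitution only rewrites weights, so $F$ is a genuine depth-$3$ ReLU network with $2N+N^{\epsilon'}\le n^{\epsilon}$ hidden neurons and weight norms $\poly(n)$ whenever $L=\poly(n)$.

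Now suppose an efficient Gaussian learner $\cl_{3}$ exists for this class. Simulate it from the Bernoulli oracle for $f$: at each request draw $(\bx,y)$ from the oracle, and for $i\in[N]$ sample $g_i\sim\cn(0,1)$ conditioned on $g_i<t-\tfrac{1}{2L}$ if $x_i=0$ and on $g_i>t+\tfrac{1}{2L}$ if $x_i=1$; pad with independent $\cn(0,1)$ coordinates for $i>N$; hand $(\bg,y)$ to $\cl_{3}$. By construction the binarization is exact on every simulated sample, so $F(\bg)=f(\bx)=y$ and the stream is $F$-realizable. The marginal of $\bg$ differs from $\cn(\zero,I_{n})$ only on the per-coordinate events $|g_i-t|<\tfrac{1}{2L}$; a union bound caps the total variation at $O(N/L)$, made negligible by taking $L=\poly(N)$. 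To predict on a fresh Bernoulli point $\bx$, sample a matching $\bg$ by the same rule and output $h(\bg)$; clipping $h$ to the $\poly(n)$-range of $F$ keeps the square loss bounded, so the TV bound transfers, and the resulting Bernoulli predictor has error arbitrarily close to that of $h$ under $\cn(\zero,I_{n})$. This contradicts the Bernoulli hardness because the overall simulation runs in time $\poly(n)=\poly(N)$ for constant $\epsilon$.

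The main technical obstacle is handling the boundary region where $h_L$ takes intermediate values. Truncating the conditional Gaussians to stay $\tfrac{1}{2L}$ away from $t$ simultaneously achieves (i) the exact label agreement $F(\bg)=f(\bx)$, so the simulated distribution is \emph{exactly} realizable as the PAC definition requires, and (ii) total-variation closeness to $\cn(\zero,I_{n})$, which becomes negligible as $L$ grows polynomially in $N$. A secondary but necessary check is the paper's polynomial weight-norm requirement, which holds because the binarization layer uses weights of magnitude $L=\poly(n)$ and the weights inherited from $f$ are themselves $\poly(N)$.
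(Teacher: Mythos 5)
Your route is genuinely different from the paper's: you reduce black-box from the Bernoulli depth-$2$ hardness (Theorem~\ref{thm:nn discrete}(2)) and accept a marginal that is only total-variation-close to Gaussian, whereas the paper works directly from the PRG, builds an oracle whose marginal is \emph{exactly} $\cn(\zero,I)$ (by sampling each coordinate from the exact conditional densities $\mu_\pm$ above/below the threshold), and handles the boundary strip not by truncation but by modifying the labels there via an auxiliary subnetwork $N_3$, so that the data remains exactly realizable by a depth-$3$ network with a \emph{fixed} weight bound ($n^2$). The weak point of your version is precisely the step ``a union bound caps the total variation at $O(N/L)$, made negligible by taking $L=\poly(N)$.'' A distribution-specific learner has \emph{no} guarantee whatsoever off its distribution, so the only way to transfer its guarantee is a coupling of the entire $m$-sample transcript, which costs $m\cdot O(N/L)$ in failure probability, not $O(N/L)$. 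Hence $L$ cannot be any fixed polynomial chosen in advance: it must dominate the sample complexity $m(n)$ of the particular learner you are refuting (a learner drawing $L^2$ samples would, with probability close to $1$, see a coordinate that never lands in the forbidden strip of width $1/L$ around $t$, could detect the truncation, and could then behave arbitrarily). This is repairable inside a proof by contradiction --- fix $\cl_3$ first, read off $m(n)$, then set $L\gg m(n)\cdot N$ --- but you must say so explicitly, and the consequence is that the weights of your hard networks (hence the weight bound defining the class) depend on the learner. That still proves the theorem as literally stated (which imposes no weight bound), but it is strictly weaker than what the paper establishes and emphasizes, namely hardness already for networks whose weights are bounded by a fixed polynomial independent of the learner. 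The paper's label-modification construction exists exactly to avoid this dependence.

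Two smaller points to tidy up. First, the hard Bernoulli parameter in Theorem~\ref{thm:nn discrete}(2) is dimension-dependent (probability of $0$ equal to $1/n'$ for $n'$ polynomially related to the ambient dimension), not a constant $p$; this only means your threshold $t$ and the conditional densities depend on $n$, and the per-coordinate TV bound picks up a harmless $\poly\log$ factor, but the write-up should not treat $p$ as fixed. Second, your reconstructed Bernoulli hypothesis $\bx\mapsto h(\bg(\bx))$ is randomized; since the paper's learners output deterministic hypotheses, you should either note that expected square loss suffices for the downstream distinguishing argument or fix the internal randomness by an averaging argument.
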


\subsection{Automata}

We show hardness of weakly-learning DFAs on the uniform distribution (see proof in Appendix~\ref{app:proof DFA}).

\begin{theorem}
\label{thm:DFA}
Under Assumption~\ref{ass:localPRG}, for every constants $c,\epsilon>0$, there is no efficient algorithm that $\frac{1}{n^c}$-weakly learns DFAs of size $n^\epsilon$, on the uniform distribution over $\{0,1\}^n$.
\end{theorem}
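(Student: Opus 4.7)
The plan is to reduce the PRG distinguishing problem (hard by Assumption~\ref{ass:localPRG}) to weak learning of DFAs on the uniform distribution on $\{0,1\}^n$. Fix constants $c,\epsilon>0$. Given the learning input dimension $n$, I would set $\tn := \lceil n^{\epsilon/2}\rceil$ (adjusted so that $\tn/k$ is a power of two), pick a stretch $s$ large enough (a function of $c/\epsilon$ and the poly-degree of the supposed learner), and invoke Assumption~\ref{ass:localPRG} to obtain a constant locality $k$ and a $k$-bit predicate $P$ such that $\cf_{P,\tn,\tn^s}$ is a $\tfrac13$-PRG. Because the input marginal for the learning problem must be \emph{exactly} uniform on $\{0,1\}^n$, it is convenient to pass through a product-of-domains variant $\cf^{\mathrm{prod}}$: split $[\tn]$ into $k$ consecutive blocks $D_1,\ldots,D_k$ of size $\tn/k$, and require the $j$-th index of each hyperedge to lie in $D_j$. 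A one-line filtering reduction (keep only those hyperedges of $G\sim\cg_{\tn,\tn^s,k}$ that happen to be product-of-domains, a $1/k^k=\Theta(1)$ fraction) shows that $\cf^{\mathrm{prod}}_{P,\tn,\tn^s/k^k}$ is also a $\tfrac13$-PRG.

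Given a distinguishing instance $(G^{\mathrm{prod}},\by)$ with $m:=\tn^s/k^k$ samples, I would form labeled examples $(\bz_i,y_i)$ as follows: the first $k\log(\tn/k)$ bits of $\bz_i$ are the concatenated binary encodings of the in-domain positions of the $k$ coordinates of $S_i^{\mathrm{prod}}$, and the remaining $n-k\log(\tn/k)$ bits are drawn independently and uniformly, with $y_i:=\by_i$. Since each in-domain position is uniform over a power-of-two-sized set, the marginal of $\bz_i$ is \emph{exactly} uniform on $\{0,1\}^n$ — this is the payoff of using $\cf^{\mathrm{prod}}$ rather than $\cf$ directly. In the pseudorandom case $\by=f^{\mathrm{prod}}_{P,G^{\mathrm{prod}}}(\bx)$, the labelling is realized by $C_\bx(\bz):=P(\bx_{T_1},\ldots,\bx_{T_k})$, where $T_j$ is the $j$-th $\log(\tn/k)$-bit block of $\bz$ re-indexed into $D_j$. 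The main technical step is to build a DFA of $O(2^k\tn)$ states computing $C_\bx$: the states encode the tuple (current phase $j\in[k]$, partial binary value of $T_j$ read so far, and the seed bits $\bx_{T_1},\ldots,\bx_{T_{j-1}}$ already retrieved); after phase $k$, the DFA transitions to one of two absorbing sink states (accept/reject) that swallow the padding bits via self-loops. For constant $k$ this is $O(\tn)\le n^\epsilon$ for our choice of $\tn$.

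Finally, I would split the $m$ examples into a training half and a test half, feed the training half to the hypothetical $\tfrac{1}{n^c}$-weak learner $\cl$, and estimate the test error of the returned hypothesis $h$. In the pseudorandom case the data distribution is realized by a size-$n^\epsilon$ DFA on the uniform marginal, so with probability $\ge 1-\delta$ the learner returns $h$ of population error $\le \tfrac12-\tfrac{1}{n^c}$, and Hoeffding's inequality transfers this to the empirical test error up to an $o(1/n^c)$ slack. In the uniform case $\by$ is independent of the $\bz_i$'s, so any $h$ trained only on the training half has test error concentrated within $o(1/n^c)$ of $\tfrac12$, provided $s$ was chosen large enough that $m\ge n^{3c}$. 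Thresholding the test error at $\tfrac12-\tfrac{1}{2n^c}$ thus yields a polynomial-time distinguisher with constant advantage for $\cf^{\mathrm{prod}}$, and hence for $\cf$, contradicting Assumption~\ref{ass:localPRG}. The hardest step to execute cleanly is the DFA construction: it is feasible precisely because local PRGs have constant locality, so the DFA only has to parse $O(k\log\tn)=O(\log n)$ bits of its input and can afford $O(\tn)=n^{O(\epsilon)}$ states for the address decoding, while the long padding is absorbed by self-loops at the two sinks.
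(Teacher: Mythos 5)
Your proof is correct, but it takes a genuinely different route from the paper's. The paper keeps the hypergraph distribution exactly as in Assumption~\ref{ass:localPRG} and makes the marginal uniform by brute force on the encoding side: a hyperedge is written in a redundant ``long'' unary-style encoding ($nk\log n$ bits, where a $1$-bit becomes the all-ones block and a $0$-bit becomes any other $\log n$-bit block), repeated $c'\log^2 n$ times so that a uniformly random string contains a valid encoding with probability at least $1-1/n^{c'}$; the example oracle then rejection-samples a uniform string and plants the given hyperedge into it, labelling the rare invalid strings by $0$, and the realizing DFA is a product of a validity-checking automaton $A_E$ and a $P$-evaluating automaton $A_P$ iterated over the slices. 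You instead push the work onto the PRG side: restricting to product-of-domains hyperedges over power-of-two-sized blocks (your filtering reduction is sound --- conditioned on landing in the product set, the surviving hyperedges are i.i.d.\ uniform product hyperedges, their number concentrates, and disjointness of the blocks makes distinctness automatic) turns the binary address encoding into a bijection onto $\{0,1\}^{k\log(\tn/k)}$, so with uniform padding \emph{every} input string is a valid encoding, the marginal is exactly uniform with no rejection sampling, and realizability is immediate. This makes the random-case test error exactly $\tfrac12$ rather than $\tfrac12-O(1/n^{c'})$, and your $O(2^k\tn)$-state address-decoding DFA is simpler than the paper's slice-iterated product automaton. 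The price is the extra (easy) lemma that the product-restricted collection inherits pseudorandomness from $\cf_{P,\tn,\tn^s}$; the paper's heavier planting machinery, on the other hand, is the template it reuses for the Bernoulli and Gaussian input distributions in its other theorems. The remaining bookkeeping --- choosing $s$ to cover the learner's sample complexity plus an $n^{\Theta(c)}$-size test set, Hoeffding on the test error, a constant $\delta$ --- is handled the same way in both arguments.
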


\subsection{Other classes}

Our results imply lower bounds for some additional classes.
We start with hardness of learning $\omega(1)$-sparse polynomial threshold functions on $\{0,1\}^n$. Recall that a $q$-sparse polynomial has at most $q$ monomials with non-zero coefficients.

\begin{corollary}
\label{cor:sparse poly}
Under Assumption~\ref{ass:localPRG}, for every $q(n)=\omega(1)$, there is no efficient algorithm that learns $q(n)$-sparse polynomial threshold functions over $\{0,1\}^n$.
\end{corollary}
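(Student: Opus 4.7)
The plan is to reduce hardness of DNF learning (Theorem~\ref{thm:DNF}) to hardness of learning $q(n)$-sparse polynomial threshold functions. Fix $q(n) = \omega(1)$; then $q'(n) := q(2n) - 1$ is also $\omega(1)$, so Theorem~\ref{thm:DNF} rules out efficient learners for DNFs with $q'(n)$ terms over $\{0,1\}^n$, and this is the source of hardness I would invoke.

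The key representational observation I would use is that any DNF with $q'(n)$ terms over $\{0,1\}^n$ can be rewritten as a $q(2n)$-sparse PTF over $\{0,1\}^{2n}$. Given $\bx \in \{0,1\}^n$, embed it as $\bx' = (\bx, 1-\bx) \in \{0,1\}^{2n}$, so that every literal of the DNF, positive or negated, becomes a coordinate of $\bx'$. Each term $T_i$ then becomes a monotone monomial $M_i(\bx') = \prod_{j \in S_i} x'_j$ in the $2n$ extended variables, and the whole DNF is realized by
\[
\sign\!\left( \sum_{i=1}^{q'(n)} M_i(\bx') - \tfrac{1}{2} \right),
\]
a polynomial with $q'(n) + 1 = q(2n)$ nonzero monomials (one per term plus the constant $-\tfrac{1}{2}$).

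The reduction itself is then immediate. Suppose for contradiction that $\cl$ is an efficient learner for $q(n)$-sparse PTFs on $\{0,1\}^n$. Given a $q'(n)$-term DNF learning instance with distribution $\cd$ on $\{0,1\}^n \times \{0,1\}$, I would simulate a sample oracle for $\cl$ in dimension $2n$ by drawing $(\bx,y) \sim \cd$ and relaying $((\bx, 1-\bx), y)$. The induced marginal is supported on $\{\by' : y'_{n+j} = 1 - y'_j\}$ and is realized by the $q(2n)$-sparse PTF above, so with high probability $\cl$ returns a hypothesis $h'$ of small error, and outputting $h(\bx) := h'(\bx, 1-\bx)$ contradicts Theorem~\ref{thm:DNF}.

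There is no substantive obstacle beyond careful sparsity bookkeeping: the constant $-\tfrac{1}{2}$ forces the reduction to lose one in the sparsity parameter, and the embedding doubles the dimension. Both adjustments are benign because $q(n) = \omega(1)$ is preserved under $q \mapsto q(2\cdot)-1$, which is precisely why I start from a $q'(n) = q(2n) - 1$-term DNF rather than trying to match $q(n)$ directly.
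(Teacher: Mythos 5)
Your proof is correct, but it takes a more careful, fully black-box route than the paper. The paper disposes of Corollary~\ref{cor:sparse poly} in one line: any function realized by a DNF formula with $q(n)$ terms can also be realized by a polynomial threshold function over $\{0,1\}^n$ with $q(n)$ monomials. Read literally for arbitrary DNFs this is questionable --- a term containing negated literals, e.g.\ $x_1 \wedge \neg x_2$, corresponds to $x_1(1-x_2)$ and thus to more than one monomial over $\{0,1\}^n$ --- and what actually makes the paper's argument go through is that the hard DNFs $\psi_\bx$ built in Lemma~\ref{lemma:from P to DNF} consist of conjunctions of \emph{positive} literals only, so each term is literally a single monotone monomial over the same domain; since monomials are nonnegative on the hypercube, $\sign\bigl(\sum_j M_j(\bz)\bigr)$ (with the paper's convention $\sign(0)=0$) realizes the disjunction with exactly $q$ monomials and no constant term. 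Your variable-doubling embedding $\bx \mapsto (\bx,1-\bx)$ sidesteps this issue entirely: it converts an arbitrary DNF into a monotone one, at the benign cost of doubling the dimension and spending one monomial on the threshold constant, which you correctly absorb by starting from $(q(2n)-1)$-term DNFs; the induced distribution on $\{0,1\}^{2n}$ is realizable by the sparse PTF, so a distribution-free learner must handle it. The trade-off is that the paper's argument is shorter and stays in the original dimension but is really a white-box observation about its own construction (with an imprecisely stated justification for general DNFs), whereas yours is a genuine reduction from the statement of Theorem~\ref{thm:DNF} that would survive any change to the underlying hard instances.
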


Corollary~\ref{cor:sparse poly} follows from Theorem~\ref{thm:DNF} since any function realized by a DNF formula with $q(n)$ terms can be also realized by a polynomial threshold function over $\{0,1\}^n$ with $q(n)$ monomials.
We also consider $\omega(1)$-sparse $GF(2)$ polynomials over $\{0,1\}^n$. Such a polynomial is simply a sum modulo $2$ 
of $\omega(1)$ monomials (see proof in Appendix~\ref{app:proof sparse poly GF2}).
\begin{theorem}
\label{thm:sparse poly GF2}
Under Assumption~\ref{ass:localPRG}, for every $q(n)=\omega(1)$, there is no efficient algorithm that learns $q(n)$-sparse $GF(2)$ polynomials over $\{0,1\}^n$.
\end{theorem}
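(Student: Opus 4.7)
The plan is to reduce from Theorem~\ref{thm:DNF}, which establishes hardness of learning DNFs with $\omega(1)$ terms. The central algebraic identity that I would use is that, over $GF(2)$, for any monomials $T_1,\ldots,T_{q_D}$ (products of positive literals),
\[
T_1 \vee T_2 \vee \cdots \vee T_{q_D} \;=\; \sum_{\emptyset \neq J \subseteq [q_D]} \prod_{j \in J} T_j \pmod{2}.
\]
This follows from $1-(T_1\vee\cdots\vee T_{q_D})=\prod_j(1-T_j)$ by expansion, together with the fact that $(-1)^{|J|}\equiv 1 \pmod{2}$. Since a product of monomials is itself a monomial, every monotone DNF with $q_D$ terms equals a $(2^{q_D}-1)$-sparse $GF(2)$ polynomial.

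To apply this to a possibly non-monotone DNF, I would use the standard variable-doubling trick: given a DNF $f$ on $\{0,1\}^n$ with $q_D$ terms, introduce fresh variables $\tilde{x}_1,\ldots,\tilde{x}_n$ and replace each negated literal $\bar{x}_i$ by $\tilde{x}_i$, obtaining a monotone DNF $\tilde{f}$ on $\{0,1\}^{2n}$ with the same number of terms and satisfying $\tilde{f}(\bx,\mathbf{1}-\bx)=f(\bx)$. By the identity above, $\tilde{f}$ coincides with a $(2^{q_D}-1)$-sparse $GF(2)$ polynomial $p$. Given samples $(\bx_j,y_j)$ from a DNF-realizable distribution $\cd$ on $\{0,1\}^n\times\{0,1\}$, I would form $((\bx_j,\mathbf{1}-\bx_j),y_j)\in\{0,1\}^{2n}\times\{0,1\}$ and feed them to a hypothetical efficient learner $\cl_{\mathrm{poly}}$ for $q(n')$-sparse $GF(2)$ polynomials on $\{0,1\}^{n'}$, with $n'=2n$. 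The transformed distribution $\cd'$ is realizable by $p$, so $\cl_{\mathrm{poly}}$ would return a hypothesis $h$, and the predictor $h'(\bx)=h(\bx,\mathbf{1}-\bx)$ satisfies $L_{\cd}(h')=L_{\cd'}(h)$, giving an efficient learner for $q_D(n)$-term DNFs.

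To close the argument, given $q(n)=\omega(1)$, I would choose $q_D(n)=\lfloor\log_2(q(2n)+1)\rfloor$, so that $2^{q_D(n)}-1\le q(2n)$ and the transformed polynomial lies in the $q(n')$-sparse class at $n'=2n$. Since $q(2n)\to\infty$, we also have $q_D(n)\to\infty$, i.e., $q_D(n)=\omega(1)$, so Theorem~\ref{thm:DNF} gives the desired contradiction. The only real subtlety is the exponential blowup in the number of monomials forced by the $GF(2)$ expansion, which dictates the choice $q_D=\Theta(\log q)$; this is not an obstacle because the logarithm of any function tending to infinity still tends to infinity, so hardness at $\omega(1)$ DNF terms indeed transfers to hardness at $\omega(1)$ polynomial monomials.
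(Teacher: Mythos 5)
Your proposal is correct, but it takes a genuinely different route from the paper. The paper argues \emph{white-box}: it reuses the specific DNF $\psi_\bx$ built in Lemma~\ref{lemma:from P to DNF}, observes that its (at most $2^k$) terms are monotone and pairwise disjointly satisfiable on hyperedge encodings (each term tests $\bx_S=\bb$ for a distinct satisfying assignment $\bb$ of $P$), and concludes that on the relevant inputs the OR of the terms coincides with their sum mod $2$ --- so the hard instances are directly realizable by a $2^k$-sparse $GF(2)$ polynomial with $k$ constant, and the distinguisher from the proof of Theorem~\ref{thm:DNF} goes through verbatim with the polynomial learner in place of the DNF learner. You instead give a \emph{black-box} reduction from the statement of Theorem~\ref{thm:DNF}: the M\"obius-type identity $T_1\vee\cdots\vee T_{q_D}=\sum_{\emptyset\neq J}\prod_{j\in J}T_j \pmod 2$ converts any monotone $q_D$-term DNF into a $(2^{q_D}-1)$-sparse polynomial, variable doubling handles negations, and the exponential sparsity blowup is absorbed by setting $q_D=\Theta(\log q)$, which is still $\omega(1)$. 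Both arguments are sound and yield the theorem as stated. Your version is more modular (it needs nothing about the structure of the hard instances, and the doubling trick correctly covers non-monotone DNFs even though the paper's $\psi_\bx$ happens to be monotone), while the paper's version is quantitatively tighter: it incurs no blowup in sparsity, so it would still give meaningful bounds if the underlying DNF hardness held only for superconstant-but-small term counts, whereas your reduction loses a logarithm. One minor point worth stating explicitly if you write this up: a product $\prod_{j\in J}T_j$ of monomials is again a single (multilinear) monomial only after using $x_i^2=x_i$ on $\{0,1\}$-valued inputs, which is exactly the sense in which the paper defines $q$-sparse $GF(2)$ polynomials, so this is fine.
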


Finally, the following corollaries follow from the hardness of learning DNFs (see \cite{daniely2016complexity}).
We note that these results are already known under other assumptions \citep{feldman2006new,daniely2016half,blum2003noise,daniely2016complexity}.

\begin{corollary}
\label{cor:agnostic conjunctions}
Under Assumption~\ref{ass:localPRG}, there is no efficient algorithm that agnostically learns conjunctions.
\end{corollary}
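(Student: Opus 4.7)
The plan is a reduction from learning DNF formulas with $\omega(1)$ terms (hard by Theorem~\ref{thm:DNF}) to agnostically learning conjunctions. Suppose, for contradiction, we have an efficient agnostic learner $A$ for conjunctions. I will build from $A$ an efficient weak learner for $q$-term DNFs, where $q=q(n)=\omega(1)$ grows slowly (say $q(n)=\log\log n$), and then invoke distribution-free boosting \citep{Schapire89,Freund95} to obtain an efficient strong learner, contradicting Theorem~\ref{thm:DNF}.

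The key ingredient is the classical pigeonhole observation: for a DNF $f = T_1 \vee \cdots \vee T_q$ and a distribution $\cd$ over $\{0,1\}^n\times\{0,1\}$ realized by $f$, writing $p = \Pr_{(\bx,y)\sim\cd}[y=1]$, there exists a term $T_{i^*}$ with $\Pr_\cd[T_{i^*}(\bx)=1] \geq p/q$. Since each $T_i$ implies $f$ (no false positives), $T_{i^*}$ viewed as a conjunction hypothesis has error exactly $p - \Pr_\cd[T_{i^*}(\bx)=1] \leq p(1 - 1/q)$. Hence the best conjunction on $\cd$ has error at most $p(1 - 1/q)$, so running $A$ on $\cd$ with accuracy parameter $\epsilon = \Theta(1/q)$ returns a conjunction $g$ with error at most $p(1 - 1/q) + \epsilon$.

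To turn this into a weak learner I would branch on a sample-based estimate $\hat p$ of $p$: if $\hat p$ is bounded away from $1/2$ by $\Omega(1/q)$, return the trivial constant hypothesis of the majority label, obtaining error $\leq 1/2 - \Omega(1/q)$; otherwise $p$ lies in a narrow window around $1/2$ and hence $p/q - \epsilon = \Omega(1/q)$, so $g$ itself achieves error at most $1/2 - \Omega(1/q)$. Either way we obtain weak learning with advantage $\gamma = \Omega(1/q)$. Choosing $q(n)$ so that $1/\gamma$ is polylogarithmic in $n$, standard distribution-free boosting turns this into an efficient PAC learner for $q(n)$-term DNFs, contradicting Theorem~\ref{thm:DNF}.

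The technical point requiring the most care is the case analysis on $\hat p$: the threshold, the accuracy parameter $\epsilon$ passed to $A$, and the precision used to estimate $p$ from samples must all be chosen consistently so that an $\Omega(1/q)$ advantage over $1/2$ survives in each regime. Beyond this bookkeeping, the argument is essentially the well-known reduction showing that agnostic learning of conjunctions is at least as hard as PAC learning DNFs, as used in \cite{daniely2016complexity}.
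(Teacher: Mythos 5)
Your proposal is correct and is exactly the argument the paper intends: the paper derives this corollary by citing the standard reduction from \cite{daniely2016complexity}, namely that an efficient agnostic learner for conjunctions yields an $\Omega(1/q)$-advantage weak learner for $q$-term DNFs (via the pigeonhole/no-false-positives observation on the best term), which boosting converts into an efficient PAC learner for $q(n)=\omega(1)$-term DNFs, contradicting Theorem~\ref{thm:DNF}. Your case analysis on $\hat p$ and choice of $\epsilon=\Theta(1/q)$ are the right bookkeeping, and with $q(n)$ chosen to grow slowly the advantage $\gamma=\Omega(1/q)$ satisfies the paper's $\gamma\geq 1/n^c$ condition for boosting.
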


\begin{corollary}
\label{cor:agnostic halfspaces}
Under Assumption~\ref{ass:localPRG}, there is no efficient algorithm that agnostically learns halfspaces.
\end{corollary}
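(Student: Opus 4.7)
The plan is to reduce PAC-learning of DNF formulas with $\omega(1)$ terms to agnostic learning of halfspaces, and then invoke the hardness of the former established in Theorem~\ref{thm:DNF}. The starting observation is that a conjunction of literals over $\{0,1\}^n$ is itself a halfspace: the conjunction $\ell_{i_1}\land\ldots\land\ell_{i_r}$ equals $\onefunc\bigl(\sum_j \ell_{i_j} \geq r\bigr)$. Hence, if $f = T_1 \lor \ldots \lor T_q$ is a DNF realizing a distribution $D$, each $T_i$ is a halfspace hypothesis that we can reason about.

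The first key step is a pigeonhole/averaging argument. On any negative point ($f(\bx)=0$) every term $T_i$ is correct, so the only errors of $T_i$ come from positive points on which $T_i$ vanishes. Since every positive point is covered by at least one term, summing over $i$ gives $\sum_{i=1}^q \Pr_D[T_i(\bx)\neq f(\bx)] \leq (q-1)\Pr_D[f(\bx)=1]$, so some $T_{i^\star}$ has error at most $\Pr_D[f=1]\cdot(1-1/q)$. After a standard rebalancing trick (discarding positives or negatives to make $\Pr[f=1]=\tfrac12$ on the reduced distribution, which a learning algorithm can simulate by rejection sampling), the best halfspace achieves error at most $\tfrac12-\Theta(1/q)$.

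Given a putative efficient agnostic learner $\cl$ for halfspaces, invoke it with accuracy parameter $\epsilon = \Theta(1/q)$, which remains $1/\poly(n)$ whenever $q = \poly(n)$. The returned hypothesis has error at most $\tfrac12 - \Theta(1/q)$, i.e., it $\gamma$-weakly learns the DNF with $\gamma = 1/\poly(n)$. By the boosting results of \cite{Schapire89,Freund95} cited in the preliminaries, an efficient $1/\poly(n)$-weak learner yields an efficient (strong) improper PAC learner for DNFs with $q(n) = \omega(1)$ terms, contradicting Theorem~\ref{thm:DNF} under Assumption~\ref{ass:localPRG}.

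The main subtlety I anticipate is the balancing step: the raw bound on the best halfspace error is $\Pr_D[f=1]\cdot(1-1/q)$, which can exceed $\tfrac12$ when $\Pr_D[f=1]$ is large, so there is no direct weak learner without first ensuring a balanced marginal on the label. One handles this by (i) if $\Pr_D[f=1]\leq \tfrac12-1/q$, the constant hypothesis $0$ already weakly learns, and (ii) if $\Pr_D[f=1]>\tfrac12$, run the reduction on the complementary labeling $1-f$, whose negation is a CNF of $q$ clauses and can be treated symmetrically (each clause being a halfspace as well); otherwise subsample to equalize label frequencies. Apart from this bookkeeping, the argument is the reduction already noted in \cite{daniely2016complexity}, so no new ideas beyond Theorem~\ref{thm:DNF} are required.
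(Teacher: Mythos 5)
Your proposal is correct and is exactly the reduction the paper has in mind: the paper gives no self-contained proof of Corollary~\ref{cor:agnostic halfspaces} and simply notes that it ``follows from the hardness of learning DNFs (see \cite{daniely2016complexity})'', which is precisely your argument that each term of a $q$-term DNF is a halfspace, an averaging argument shows some term has error at most $\Pr_D[f=1](1-1/q)$, so an agnostic halfspace learner run with $\epsilon=\Theta(1/q)$ yields a $\Theta(1/q)$-weak learner for DNFs, and boosting then contradicts Theorem~\ref{thm:DNF}. One small caution on your balancing step: a hypothesis with error $\frac12-\gamma'$ under the rebalanced distribution $D'$ need not have error below $\frac12$ under $D$ when $\Pr_D[f=1]$ is far from $\frac12$, so the clean way to finish is the case analysis you also sketch --- output a constant hypothesis when $|\Pr_D[f=1]-\tfrac12|\geq\Theta(1/q)$, and apply the averaging argument directly on $D$ otherwise (the CNF detour is unnecessary).
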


\begin{corollary}
\label{cor:agnostic parities}
Under Assumption~\ref{ass:localPRG}, there is no efficient algorithm that agnostically learns parities.
\end{corollary}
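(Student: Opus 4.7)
\textbf{Proof proposal for Corollary~\ref{cor:agnostic parities}.} The plan is to reduce to Theorem~\ref{thm:DNF} through the known black-box reduction from PAC learning of DNF formulas to agnostic learning of parities, originally due to \cite{feldman2006new} and used by \cite{daniely2016complexity} to derive the analogous corollary from their random $K$-SAT refutation assumption. No new algorithmic idea is required; all the work is transferring the assumption.

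Concretely, I would proceed in two steps. First, by Theorem~\ref{thm:DNF} applied with any $q(n)=\omega(1)$, under Assumption~\ref{ass:localPRG} there is no efficient algorithm that PAC learns DNF formulas over $\{0,1\}^n$ with $q(n)$ terms; in particular, polynomial-size DNFs are not efficiently PAC learnable. Second, the reduction of \cite{feldman2006new} shows that if parities over $\{0,1\}^n$ can be agnostically learned in time $\poly(n,1/\epsilon,1/\delta)$, then polynomial-size DNFs can be PAC learned in time $\poly(n,1/\epsilon,1/\delta)$, by expanding the feature space so that the optimal parity approximator encodes a good DNF hypothesis. Composing the two statements immediately contradicts Theorem~\ref{thm:DNF}, so no efficient agnostic learner for parities can exist under Assumption~\ref{ass:localPRG}.

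The only non-trivial point to verify is parameter compatibility. The reduction of \cite{feldman2006new} produces DNFs of polynomial size (not of any prescribed super-constant term count), but since polynomial-size DNF learning is a special case of $\omega(1)$-term DNF learning for any $q(n)=\omega(1)$ dominated by a polynomial, Theorem~\ref{thm:DNF} still rules it out. I expect the main (and essentially only) obstacle to be routine bookkeeping: tracking how $\epsilon$ and $\delta$ transform through the reduction so that an efficient agnostic parity learner yields an efficient PAC learner for DNFs in the precise sense of the previous section. Since the reduction is generic and polynomial-time, these parameters carry through, and the corollary follows by pure invocation of Theorem~\ref{thm:DNF} and the cited reduction.
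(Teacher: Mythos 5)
Your proposal matches the paper's treatment: the paper gives no self-contained argument for Corollary~\ref{cor:agnostic parities} either, and simply notes that it follows from the hardness of learning DNFs (Theorem~\ref{thm:DNF}) via the known reductions of \cite{feldman2006new} as used in \cite{daniely2016complexity}, exactly as you do. One minor caveat: the black-box reduction you invoke actually proceeds via Jackson's lemma (every distribution realizable by an $s$-term DNF admits a parity with correlation $\Omega(1/s)$, so an agnostic parity learner is a weak DNF learner, which distribution-free boosting upgrades to a PAC learner for $\poly(n)$-term DNFs) rather than a feature-space expansion, but since you use the reduction as a cited black box this does not affect the correctness of your argument.
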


\section{Our technique}
\label{sec:technique}

\stam{
In this section we describe the main method used in the proofs. 
The detailed proofs are in the appendix. 
}

\subsection{Hardness under Assumption~\ref{ass:localPRG}}
	
We first describe the proof ideas for the case of DNFs. Then, we explain how to apply the method to other classes.

\subsubsection{Distribution-free hardness for DNFs}

We describe the main ideas in the proof of the first part of Theorem~\ref{thm:DNF}.
We encode a hyperedge $S = (i_1,\ldots,i_k)$ by $\bz^S \in \{0,1\}^{kn}$, where $\bz^S$ is the concatenation of $k$ vectors in $\{0,1\}^n$, such that the $j$-th vector has $0$ in the $i_j$-th component and $1$ elsewhere. 
Thus, $\bz^S$ consists of $k$ size-$n$ slices, each encodes a member of $S$.
For a predicate $P:\{0,1\}^k \rightarrow \{0,1\}$ and $\bx \in \{0,1\}^n$, let $P_\bx:\{0,1\}^{kn} \rightarrow \{0,1\}$ be a function such that for every hyperedge $S$ we have $P_\bx(\bz^S) = P(\bx_{S})$.

Let $s>1$ be a constant. By Assumption~\ref{ass:localPRG}, there exists a constant $k$ and a predicate $P:\{0,1\}^k \rightarrow \{0,1\}$, such that $\cf_{P,n,n^s}$ is $\frac{1}{3}$-PRG.
Assume that there is an efficient algorithm $\cl$ that learns DNF formulas with $n'$ variables and $q(n')=\omega_{n'}(1)$ terms. 
We will use the algorithm $\cl$ to obtain an algorithm $\ca$ with distinguishing advantage greater than $\frac{1}{3}$ and thus reach a contradiction.

Given a sequence $(S_1,y_1),\ldots,(S_{n^s},y_{n^s})$, where $S_1,\ldots,S_{n^s}$ are i.i.d. random hyperedges, the algorithm $\ca$ needs to distinguish whether $\by = (y_1,\ldots,y_{n^s})$ is random or that $\by = (P(\bx_{S_1}),\ldots,P(\bx_{S_{n^s}})) = (P_\bx(\bz^{S_1}),\ldots,P_\bx(\bz^{S_{n^s}}))$ for a random $\bx \in \{0,1\}^n$.
Let $\cs = ((\bz^{S_1},y_1),\ldots,(\bz^{S_{n^s}},y_{n^s}))$.

We show that for every predicate $P:\{0,1\}^k \rightarrow \{0,1\}$ and $\bx \in \{0,1\}^n$, there is a DNF formula $\psi$ over $\{0,1\}^{kn}$ with at most $2^k$ terms, such that for every hyperedge $S$ we have $P_\bx(\bz^S)=\psi(\bz^S)$. 
The formula $\psi$ is such that for each satisfying assignment $\bb \in \{0,1\}^k$ of $P$ there is a term in $\psi$ that checks whether $\bx_S = \bb$. Thus, $\psi(\bz^S)=P(\bx_S)=P_\bx(\bz^{S})$.
Therefore, if $\cs$ is pseudorandom then it is realizable by a DNF formula with at most $2^k$ terms. 
Since $k$ is constant, then for a sufficiently large $n$ we have $2^k \leq q(kn)$.
Hence, the algorithm $\ca$ can distinguish whether $\cs$ is pseudorandom or random as follows. It partitions $\cs$ to a training set and a test set, and runs $\cl$ on the training set (we show that if $s$ is a sufficiently large constant then we can choose a training set large enough for $\cl$). Let $h$ be the hypothesis returned by $\cl$. If $\cs$ is pseudorandom then we show that $h$ will have small error on the test set, and if $\cs$ is random then $h$ will have large error on the test set. Hence, $\ca$ can distinguish between the cases.

\subsubsection{Distribution-specific hardness for DNFs}

We turn to describe the main ideas in the proof of the second part of Theorem~\ref{thm:DNF}. We show how to distinguish whether the sequence $\cs$ from the previous paragraph is random or pseudorandom, given access to a distribution-specific learning algorithm.
Let $\cl'$ be an efficient algorithm that learns DNF formulas with $n'$ variables and at most $(n')^\epsilon$ terms, on a distribution $\cd'$ such that each component is drawn i.i.d. from a Bernoulli distribution where the probability of $1$ is $p$. Assume that $p$ is such that the probability that a random $\bz \sim \cd'$ is an encoding of a hyperedge is not too small. 

We show an algorithm $\ca'$ such that given a sequence $\cs = ((\bz^{S_1},y_1),\ldots,(\bz^{S_{n^s}},y_{n^s}))$, it distinguishes whether $\cs$ is pseudorandom or random. Here, the algorithm $\ca'$ has access to $\cl'$, which is guaranteed to learn successfully only if the input distribution is $\cd'$. Note that for every $i \in [n^s]$ the vector $\bz^{S_i}$ is an encoding of a random hyperedge, and does not have the distribution $\cd'$. Therefore, the algorithm $\ca'$ will run $\cl'$ with an examples oracle that essentially works as follows: In the $i$-th call to the oracle, it chooses $\bz_i \sim \cd'$. If $\bz_i$ is an encoding of a hyperedge then the oracle returns $(\bz^{S_i},y_i)$, and otherwise it returns $(\bz_i,1)$. Namely, if $\bz_i$ is an encoding of a hyperedge then we replace it by $\bz^{S_i}$, which is an encoding of a random hyperedge, and hence we do not change the distribution. Thus, the oracle uses $\cs$ as a source for random encodings.

Let $h'$ be the hypothesis returned by $\cl'$. The algorithm $\ca'$ now checks $h'$ on a test set created by the examples oracle (we show that if $s$ is large enough then we can create sufficiently large training and test sets). If $\cs$ is pseudorandom then we show that the examples returned by the oracle are realized by some DNF formula with an appropriate number of terms, and hence $h'$ will have small error on the test set. 
Note that this DNF formula needs to return $1$ if the input is not an encoding of a hyperedge, and to return $P_\bx(\bz^S)$ if the input is the encoding $\bz^S$ of a hyperedge $S$.
If $\cs$ is random then $h'$ will be incorrect in roughly half of the examples in the test set that correspond to pairs $(\bz^{S_i},y_i)$ from $\cs$, and hence will have larger error on the test set. Therefore, $\ca'$ can distinguish between the cases.
	
\subsubsection{Distribution-specific hardness for other classes}
	
While each of the distribution-specific hardness results involves some unique challenges, all the proofs roughly follow a similar method to the one used in the case of DNFs. 

Let $\ch$ be the hypothesis class for which we want to show hardness and let $\cd$ be the input distribution.
Assuming that there is an algorithm $\cl$ that learns (or weakly learns) $\ch$ on the distribution $\cd$, we show an algorithm $\ca$ that distinguishes whether a sequence $\cs = ((S_1,y_1),\ldots,(S_{n^s},y_{n^s}))$ is pseudorandom or random. The algorithm $\ca$ runs $\cl$ with an examples oracle that can be implemented efficiently, and returns examples $(\bz,y)$ such that $\bz \sim \cd$. The oracle uses $\cs$ as a source for labeled random hyperedges, and with sufficiently high probability the returned example $(\bz,y)$ corresponds to some $(S_i,y_i)$ in $\cs$.
Let $h$ be the hypothesis returned by $\cl$. If $\cs$ is pseudorandom, then we show that the examples returned by the oracle are realizable by $\ch$, and hence $h$ has a small error on a test set created by the oracle. If $\cs$ is random then $h$ is incorrect in roughly half of the examples in the test set that correspond to pairs $(S_i,y_i)$ from $\cs$, and hence has larger error on the test set.
Hence, $\ca$ can distinguish between the cases. 

The implementation details of the above method are different for every class $\ch$ that we consider. Thus, in each proof we use different encodings of hyperedges and a different examples oracle. Moreover, in each proof we need to show that the examples returned by the oracle are realizable, and hence we construct a function $h \in \ch$ that labels correctly all examples returned by the oracle.

\subsection{Lower bounds under Assumption~\ref{ass:xormaj}}

We explain how to apply Assumption~\ref{ass:xormaj} in the case intersections of a constant number of halfspaces (we sketch here a proof for Theorem~\ref{thm:intersections}). The case of neural networks with a constant number of neurons (Theorem~\ref{thm:nn fixed k}) is similar.

It is not hard to show that Assumption~\ref{ass:xormaj} implies that there is a constant $\beta>0$ such that for every constant $k$, there is $l$ such that for the predicate $P=\xormaj_{k,l}$ the collection $\cf_{P,n,n^{2.1 \beta k}}$ is $\frac{1}{3}$-PRG.
Assume that there is an efficient algorithm $\cl$ that learns intersections of $k$ halfspaces over $\{0,1\}^{\tn}$. Assume that $\cl$ uses a sample of size $m(\tn) = \tn^{\beta k}$ and returns with probability at least $\frac{3}{4}$ a hypothesis with error at most $\frac{1}{10}$.
We will use the algorithm $\cl$ to establish an algorithm $\ca$ with distinguishing advantage greater than $\frac{1}{3}$ and thus reach a contradiction. It implies that an efficient algorithm that learns intersections of $k$ halfspaces over $\{0,1\}^{\tn}$ must use a sample of size greater than $\tn^{\beta k}$, and therefore runs in time $\Omega(\tn^{\beta k})$.

Let $\tn = \frac{(2n)(2n-1)}{2}+2n+1$.
For $\bz \in \{0,1\}^{2n}$, we denote by $\tilde{\bz} \in \{0,1\}^{\tn}$ the vector of all monomials over $\bz$ of degree at most $2$. We call $\tilde{\bz}$ the {\em monomials encoding} of $\bz$.
We encode a hyperedge $S = (i_1,\ldots,i_{k+l})$ by $\bz^S \in \{0,1\}^{2n}$, where $\bz^S$ is the concatenation of $2$ vectors in $\{0,1\}^n$, such that the first vector has $1$-bits in the indices $i_1,\ldots,i_k$ and $0$ elsewhere, and the second vector has $1$-bits in the indices $i_{k+1},\ldots,i_{k+l}$ and $0$ elsewhere. We denote by $\tilde{\bz}^S$ the monomials encoding of $\bz^S$.
For $\bx \in \{0,1\}^n$, let $P_\bx:\{0,1\}^{\tn} \rightarrow \{0,1\}$ be a function such that for every hyperedge $S$ we have $P_\bx(\tilde{\bz}^S) = P(\bx_{S})$.

We show that for every $\bx \in \{0,1\}^n$, there is a function $g:\{0,1\}^{\tn} \rightarrow \{0,1\}$ that can be expressed by an intersection of $k$ halfspaces, such that for every hyperedge $S$  we have $g(\tilde{\bz}^S)=P_\bx(\tilde{\bz}^S)$.
Intuitively, an intersection of $k$ halfspaces over $\tilde{\bz}^S$ is an intersection of $k$ degree-$2$ polynomial threshold functions over $\bz^S$, and we show that each degree-$2$ polynomial threshold function is powerful enough to handle the case where the number of $1$-bits in the XOR part of $P$ is $i$, for some $i \in [k]$.
With this claim at hand, we establish the algorithm $\ca$ as follows. 

Given a sequence $(S_1,y_1),\ldots,(S_{n^{2.1\beta k}},y_{n^{2.1\beta k}})$, where $S_1,\ldots,S_{n^{2.1\beta k}}$ are i.i.d. random hyperedges, the algorithm $\ca$ needs to distinguish whether $\by = (y_1,\ldots,y_{n^{2.1\beta k}})$ is random, or that $\by = (P(\bx_{S_1}),\ldots,P(\bx_{S_{n^{2.1\beta k}}})) = (P_\bx(\tilde{\bz}^{S_1}),\ldots,P_\bx(\tilde{\bz}^{S_{n^{2.1\beta k}}}))$ for a random $\bx \in \{0,1\}^n$.
Let $\cs = ((\tilde{\bz}^{S_1},y_1),\ldots,(\tilde{\bz}^{S_{n^{2.1\beta k}}},y_{n^{2.1\beta k}}))$.
The algorithm $\ca$ learns a function $h:\{0,1\}^{\tn} \rightarrow \{0,1\}$ by running $\cl$ with an examples oracle that in each call returns the next example from $\cs$. Recall that $\cl$ uses at most $m(\tn)=\tn^{\beta k}$ examples, and hence $\cs$ contains at least
\[
n^{2.1\beta k}-\tn^{\beta k}
\geq n^{2.1 \beta k}-(2n)^{2\beta k}
\geq n^{2.1 \beta k}-(n^{1.01})^{2\beta k}
= n^{2.1 \beta k}-n^{2.02\beta k}
= n^{2.02 \beta k}(n^{0.08 \beta k} - 1)
\geq \ln(n)
\]
examples that $\cl$ cannot view (for a sufficiently large $n$). We use these examples as a test set.
If $\cs$ is pseudorandom then it is realizable by an intersection of $k$ halfspaces, and thus w.h.p. $h$ has small error on the test set. If $\cs$ is random then $h$ has error of roughly $\frac{1}{2}$ on the test set. Hence, $\ca$ can distinguish between the cases.

\subsection*{Acknowledgements}

We thank Benny Applebaum and anonymous reviewers for their valuable comments.
This research is partially supported by ISF grant 2258/19.

\bibliographystyle{abbrvnat}
\bibliography{bib}

\appendix

\section{Proofs}

\subsection{Proof of Theorem~\ref{thm:DNF}}
\label{app:proof DNF}

We encode a hyperedge $S = (i_1,\ldots,i_k)$ by $\bz^S \in \{0,1\}^{kn}$, where $\bz^S$ is the concatenation of $k$ vectors in $\{0,1\}^n$, such that the $j$-th vector has $0$ in the $i_j$-th component and $1$ elsewhere. 
Thus, $\bz^S$ consists of $k$ size-$n$ slices, each encodes a member of $S$.
For $\bz \in \{0,1\}^{kn}$, $i \in [k]$ and $j \in [n]$, we denote $z_{i,j}=z_{(i-1) \cdot n + j}$. That is, $z_{i,j}$ is the $j$-th component in the $i$-th slice in $\bz$.
For a predicate $P:\{0,1\}^k \rightarrow \{0,1\}$ and $\bx \in \{0,1\}^n$, let $P_\bx:\{0,1\}^{kn} \rightarrow \{0,1\}$ be a function such that for every hyperedge $S$ we have $P_\bx(\bz^S) = P(\bx_{S})$.

\begin{lemma}
\label{lemma:from P to DNF}
For every predicate $P:\{0,1\}^k \rightarrow \{0,1\}$ and $\bx \in \{0,1\}^n$, there is a DNF formula $\psi$ over $\{0,1\}^{kn}$ with at most $2^k$ terms, such that for every hyperedge $S$ we have $P_\bx(\bz^S)=\psi(\bz^S)$.
\end{lemma}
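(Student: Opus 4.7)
The plan is to build $\psi$ as a disjunction with one term per satisfying assignment of $P$; since $P$ has at most $2^k$ satisfying assignments, the size bound is immediate. For each $\bb$ with $P(\bb)=1$ I will construct a single conjunction $T_\bb$ over the variables $z_{j,l}$ ($j \in [k],\ l \in [n]$) such that, when evaluated at $\bz^S$ for a hyperedge $S=(i_1,\ldots,i_k)$, $T_\bb(\bz^S)=1$ iff $\bx_S = \bb$. Taking $\psi = \bigvee_{\bb : P(\bb)=1} T_\bb$ then yields
\[
\psi(\bz^S) = 1 \;\iff\; \exists \bb:\ P(\bb)=1 \text{ and } \bx_S=\bb \;\iff\; P(\bx_S)=1 \;\iff\; P_\bx(\bz^S)=1.
\]

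The observation driving the construction of $T_\bb$ is that the $j$-th slice of $\bz^S$ is the indicator of $[n]\setminus\{i_j\}$, so the unique $0$-bit in that slice exactly pinpoints $i_j$. Consequently the event $x_{i_j}=b_j$ is equivalent to saying that $i_j$ lies outside the (fixed, $\bx$-determined) set $L_{j,\bb} := \{l \in [n] : x_l = 1 - b_j\}$, which reads off the slice as the assertion that $z_{j,l}=1$ for every $l \in L_{j,\bb}$. I therefore set
\[
T_\bb(\bz) \;=\; \bigwedge_{j \in [k]}\ \bigwedge_{l \in L_{j,\bb}} z_{j,l}.
\]
This is a pure conjunction of positive literals, i.e., a legitimate DNF term, and a one-line verification gives $T_\bb(\bz^S)=1 \iff x_{i_j}=b_j \text{ for all } j \in [k] \iff \bx_S=\bb$, as required.

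I do not foresee a real obstacle here; once one notices that the single $0$-bit in each slice acts as a pointer to $i_j$, the conjunctive encoding of the equality $\bx_S=\bb$ is essentially forced, and the only nontrivial design choice was to index the disjunction by $P^{-1}(1)$ rather than by hyperedges (of which there are super-polynomially many). The one mild technicality is the degenerate case $P \equiv 0$, in which $\psi$ is the empty disjunction (identically $0$), trivially a DNF with $0 \leq 2^k$ terms and agreeing with $P_\bx$ on every $\bz^S$.
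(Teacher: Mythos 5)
Your construction is identical to the paper's: the paper also defines $\psi(\bz) = \bigvee_{\bb \in \cb} \bigwedge_{j \in [k]} \bigwedge_{\{l : x_l \neq b_j\}} z_{j,l}$ where $\cb = P^{-1}(1)$, and your set $L_{j,\bb} = \{l : x_l = 1-b_j\}$ is exactly $\{l : x_l \neq b_j\}$. The verification chain is the same, so the proposal is correct and matches the paper's proof.
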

\begin{proof}
We denote by $\cb \subseteq \{0,1\}^{k}$ the set of satisfying assignments of $P$. Note that the size of $\cb$ is at most $2^k$.
Consider the following DNF formula over $\{0,1\}^{kn}$:
\[
\psi(\bz)
= \bigvee_{\bb \in \cb} \bigwedge_{j \in [k]} \bigwedge_{\{l:x_l \neq b_j \}} z_{j,l}~.
\]
For a hyperedge $S=(i_1,\ldots,i_k)$, we have
\begin{align*}
\psi(\bz^S)=1
&\iff \exists \bb \in \cb \; \forall j \in [k] \; \forall x_l \neq b_j, \; z^S_{j,l}=1
\\
&\iff \exists \bb \in \cb \; \forall j \in [k] \; \forall x_l \neq b_j, \; i_j \neq l
\\
&\iff \exists \bb \in \cb \; \forall j \in [k], \; x_{i_j} = b_j
\\
&\iff \exists \bb \in \cb, \; \bx_S = \bb
\\
&\iff P(\bx_S)=1
\\
&\iff P_\bx(\bz^S)=1~.
\end{align*}
\end{proof}

In the following lemma we prove the first part of the theorem.

\begin{lemma}
\label{lemma:DNF distribution free}
Under Assumption~\ref{ass:localPRG}, there is no efficient algorithm that learns DNF formulas with $n$ variables and $\omega_n(1)$ terms.
\end{lemma}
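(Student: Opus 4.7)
The plan is to derive a contradiction: assuming an efficient PAC learner $\cl$ for DNF formulas with $q(n') = \omega_{n'}(1)$ terms on $n'$ variables, build a polynomial-time distinguisher $\ca$ that violates Assumption~\ref{ass:localPRG} for a carefully chosen stretch parameter $s$. The distinguisher will treat the bits output by a candidate PRG as the labels of a learning problem whose instances are the encoded hyperedges, and use $\cl$'s generalization ability to tell pseudorandom from truly random labels.

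First I would fix the parameters. Let $p(\cdot)$ be a polynomial upper bounding the sample complexity and running time of $\cl$ (on a fixed failure probability, say $\delta = 1/10$, and accuracy $\epsilon = 1/10$). Choose a constant $s > 1$ large enough that $n^s \geq 2 p(kn)$ for every sufficiently large $n$; then invoke Assumption~\ref{ass:localPRG} to obtain a constant $k$ and a predicate $P:\{0,1\}^k \to \{0,1\}$ for which $\cf_{P,n,n^s}$ is a $\tfrac{1}{3}$-PRG. Since $k$ is constant and $q(n') = \omega_{n'}(1)$, for all large enough $n$ we have $2^k \leq q(kn)$, so Lemma~\ref{lemma:from P to DNF} tells us that for any $\bx \in \{0,1\}^n$ the function $P_\bx$ restricted to hyperedge-encodings $\bz^S \in \{0,1\}^{kn}$ is realized by a DNF with at most $q(kn)$ terms over $kn$ variables—i.e., exactly the class $\cl$ is supposed to learn with $n' = kn$.

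The distinguisher $\ca$, on input $(G, \by)$ with $G = (S_1, \dots, S_{n^s})$ and $\by \in \{0,1\}^{n^s}$, forms $\cs = ((\bz^{S_i}, y_i))_{i=1}^{n^s}$, splits $\cs$ into a training set $T_{\mathrm{tr}}$ of size $p(kn)$ and a test set $T_{\mathrm{te}}$ of size at least $p(kn)$, and simulates $\cl$'s example oracle by drawing uniformly from $T_{\mathrm{tr}}$ with replacement to get a hypothesis $h$. It then outputs $1$ iff the empirical $0$-$1$ error of $h$ on $T_{\mathrm{te}}$ is below a threshold $\tau$ (I would pick $\tau = 1/4$). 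Since the hyperedges $S_i$ are i.i.d.\ from $\cg_{n,n^s,k}$, their encodings $\bz^{S_i}$ are i.i.d.\ draws from a fixed distribution $\cd_\cx$ on $\{0,1\}^{kn}$. In the pseudorandom case the joint distribution of $(\bz^{S_i}, y_i)$ is realizable by a DNF with $\leq 2^k \leq q(kn)$ terms, so with probability $\geq 9/10$ the hypothesis $h$ has true error $\leq 1/10$, and by a Hoeffding bound its empirical error on $T_{\mathrm{te}}$ is $\leq \tau$ with high probability. In the random case each $y_i$ is independent of $\bz^{S_i}$, so any fixed $h$ has empirical error concentrated around $1/2$, again well above $\tau$ by Hoeffding. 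Taking the polynomial sample size into account, both concentration failures can be made $\leq 1/100$, giving a distinguishing advantage strictly larger than $1/3$, contradicting Assumption~\ref{ass:localPRG}.

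The main subtlety I expect is the quantifier interplay in the sample-complexity argument. The polynomial $p$ depends on $\cl$, and I must choose $s$ after seeing $p$ so that $n^s$ accommodates both a training set of size $p(kn)$ and a comparably large independent test set; this is fine because $s$ is allowed to depend on the learner, but it must be written carefully. A secondary issue is ensuring that the test-set evaluation is legitimate despite $T_{\mathrm{te}}$ being drawn from the \emph{same} sequence $\cs$ used to construct $h$—this is handled by making the split deterministic (e.g., first half vs.\ second half) so that conditionally on $h$, the test set consists of i.i.d.\ samples independent of $h$. The last routine check is that the simulated oracle is faithful: in both the pseudorandom and the random cases, the empirical distribution of $T_{\mathrm{tr}}$ after resampling is statistically close to the underlying $\cd_\cx$, so $\cl$'s guarantee applies.
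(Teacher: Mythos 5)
Your overall strategy matches the paper's: choose $s$ based on the learner's sample complexity, use Lemma~\ref{lemma:from P to DNF} to argue realizability of the pseudorandom sample by a $2^k \le q(kn)$-term DNF, run the learner, and threshold its held-out error, with Hoeffding separating the two cases. However, there is one genuine flaw in your simulation of the example oracle. You propose to answer the learner's oracle calls by \emph{drawing uniformly with replacement from the training set} $T_{\mathrm{tr}}$, and you justify this by asserting that the resulting empirical distribution is ``statistically close'' to the underlying distribution $\cd_\cx$ on hyperedge encodings. That assertion is false: $\cd_\cx$ is (essentially) uniform over roughly $n(n-1)\cdots(n-k+1)$ encodings, while the empirical distribution of a polynomial-size sample is supported on $\poly(n)$ points, so the total variation distance is close to $1$. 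More importantly, the PAC guarantee you then invoke does not apply in the way you need it: if the oracle samples i.i.d.\ from the empirical distribution $\hat{\cd}$ on $T_{\mathrm{tr}}$, the learner only promises a hypothesis with small error \emph{with respect to $\hat{\cd}$}, i.e., small training error. Nothing prevents the learner from overfitting $T_{\mathrm{tr}}$ and having error near $1/2$ on $\cd_\cx$ and hence on $T_{\mathrm{te}}$ even in the pseudorandom case, which would destroy the distinguishing gap. The fix is what the paper does: answer the $i$-th oracle call with the $i$-th example of $\cs$ (no resampling). Since the $S_i$ are i.i.d., this is an exact simulation of i.i.d.\ sampling from $\cd_\cx$ with the correct labels, so the learner's guarantee gives small \emph{true} error, which then transfers to the disjoint test portion of $\cs$ by Hoeffding.

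A secondary, easily repaired issue is the order of quantifiers in your parameter choice: you pick $s$ so that $n^s \ge 2p(kn)$, but $k$ is only produced by Assumption~\ref{ass:localPRG} \emph{after} $s$ is fixed, so this choice is circular as written. The paper sidesteps this by requiring $n^s \ge m(n\log n) + n$, which dominates $m(kn)$ for every constant $k$ once $n$ is large; you should do the same (or any bound on the input dimension that is independent of $k$). With these two corrections your argument coincides with the paper's proof.
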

\begin{proof}
Assume that there is an efficient algorithm $\cl$ that learns DNF formulas with $n'$ variables and $q(n')=\omega_{n'}(1)$ terms. Let $m(n')$ be a polynomial such that $\cl$ uses a sample of size at most $m(n')$ and returns with probability at least $\frac{3}{4}$ a hypothesis with error at most $\frac{1}{10}$.
Let $s>1$ be a constant such that $n^s \geq m(n\log(n))+n$ for every sufficiently large $n$.
By Assumption~\ref{ass:localPRG}, there exists a constant $k$ and a predicate $P:\{0,1\}^k \rightarrow \{0,1\}$, such that $\cf_{P,n,n^s}$ is $\frac{1}{3}$-PRG.
We will show an algorithm $\ca$ with distinguishing advantage greater than $\frac{1}{3}$ and thus reach a contradiction.

Given a sequence $(S_1,y_1),\ldots,(S_{n^s},y_{n^s})$, where $S_1,\ldots,S_{n^s}$ are i.i.d. random hyperedges, the algorithm $\ca$ needs to distinguish whether $\by = (y_1,\ldots,y_{n^s})$ is random or that $\by = (P(\bx_{S_1}),\ldots,P(\bx_{S_{n^s}})) = (P_\bx(\bz^{S_1}),\ldots,P_\bx(\bz^{S_{n^s}}))$ for a random $\bx \in \{0,1\}^n$.
Let $\cs = ((\bz^{S_1},y_1),\ldots,(\bz^{S_{n^s}},y_{n^s}))$.
Let $\cd$ be a distribution on $\{0,1\}^{kn}$ such that $\bz \sim \cd$ is an encoding of a random hyperedge. Note that each $\bz^{S_i}$ from $\cs$ is drawn i.i.d. from $\cd$.

We use the efficient algorithm $\cl$ in order to obtain distinguishing advantage greater than $\frac{1}{3}$ as follows.
The algorithm $\ca$ learns a hypothesis $h:\{0,1\}^{kn} \rightarrow \{0,1\}$ by running $\cl$ with an examples oracle that in each call returns the next example from $\cs$. Recall that $\cl$ uses at most $m(kn) \leq m(n\log(n))$ examples (assuming $n$ is large enough), and hence $\cs$ contains at least $n$ examples that $\cl$ cannot view. We denote the indices of these examples by $I = \{m(n\log(n))+1,\ldots,m(n\log(n))+n\}$, and the examples by $\cs_I = \{(\bz^{S_i},y_i)\}_{i \in I}$.
Let $\ell_{I}(h)=\frac{1}{|I|}\sum_{i \in I}\onefunc(h(\bz^{S_i}) \neq y_i)$.
Now, if $\ell_I(h) \leq \frac{2}{10}$, then $\ca$ returns $1$, and otherwise it returns $0$.

Clearly, the algorithm $\ca$ runs in polynomial time.
We now show that if $\cs$ is pseudorandom then $\ca$ returns $1$ with probability greater than $\frac{2}{3}$, and if $\cs$ is random then $\ca$ returns $1$ with probability less than $\frac{1}{3}$.
By Lemma~\ref{lemma:from P to DNF}, there is a DNF formula $\psi_\bx$ over $\{0,1\}^{kn}$ with at most $2^k < q(kn)$ terms (for a sufficiently large $n$), such that for every hyperedge $S$ we have $P_\bx(\bz^S)=\psi_\bx(\bz^S)$. Thus, $\cs$ is realized by $\psi_\bx$.
Hence, if $\cs$ is pseudorandom then with probability at least $\frac{3}{4}$ the algorithm $\cl$ returns a hypothesis $h$ such that $\E_{\bz \sim \cd} \onefunc (h(\bz) \neq P_\bx(\bz)) \leq \frac{1}{10}$. Therefore, $\E_{\cs_I}\ell_{I}(h) = \E_{\bz \sim \cd}\onefunc(h(\bz) \neq P_\bx(\bz)) \leq \frac{1}{10}$.
If $\cs$ is random then for every function $h:\{0,1\}^{kn} \rightarrow \{0,1\}$ the events $\{h(\bz^{S_i}) = y_i\}_{i \in I}$ are independent from one another, and each has probability $\frac{1}{2}$. Hence, $\E_{\cs_I}\ell_{I}(h) = \frac{1}{2}$.

By the Hoefding bound, for a sufficiently large $n$ we have
\[
\Pr_{\cs_I}\left[\left|\ell_{I}(h) -  \E_{\cs_I}\ell_{I}(h)\right| \geq \frac{1}{10} \right] \leq \frac{1}{20}~.
\]
Therefore, if $\cs$ is pseudorandom then for a sufficiently large $n$ we have with probability at least $1-\left(\frac{1}{4} + \frac{1}{20}\right) = \frac{7}{10} > \frac{2}{3}$ that $\E_{\cs_I}\ell_{I}(h) \leq \frac{1}{10}$ and $\left|\ell_{I}(h) -  \E_{\cs_I}\ell_{I}(h)\right| < \frac{1}{10}$, and hence $\ell_{I}(h) \leq \frac{2}{10}$. Thus, the algorithm $\ca$ returns $1$ with probability greater than $\frac{2}{3}$.
If $\cs$ is random then $\E_{\cs}\ell_{I}(h) = \frac{1}{2}$ and for a sufficiently large $n$ we have with probability at least $\frac{19}{20}$ that $\left|\ell_{I}(h) - \E_{\cs_I}\ell_{I}(h)\right| < \frac{1}{10}$. Hence, with probability greater than $\frac{2}{3}$ we have $\ell_{I}(h) > \frac{2}{10}$ and the algorithm $\ca$ returns $0$.
\end{proof}

We will use the following lemma throughout our proofs.

\begin{lemma}
\label{lemma:hoefding1}
Let $c \geq 0$ be a constant. Let $\xi_1,\ldots,\xi_{n^{2c+3}}$ be a sequence of i.i.d. random variables and let $\xi = \frac{1}{n^{2c+3}}\sum_{i \in [n^{2c+3}]}\xi_i$. Assume that $\Pr\left[0 \leq \xi_i \leq 1\right] = 1$ for every $i$. Then for a sufficiently large $n$ we have
\[
\Pr\left[\left|\xi -  \E[\xi]\right| \geq \frac{1}{n^{c+1}} \right] < \frac{1}{20}~.
\]
\end{lemma}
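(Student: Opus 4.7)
The plan is to invoke the standard Hoeffding inequality and verify that the chosen sample size $n^{2c+3}$ is large enough to make the failure probability $\tfrac{1}{20}$.

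Specifically, since $\xi_1,\dots,\xi_{N}$ are i.i.d.\ random variables bounded in $[0,1]$ with $N=n^{2c+3}$, Hoeffding's inequality gives
\[
\Pr\left[\left|\xi - \E[\xi]\right| \geq t\right] \leq 2\exp(-2Nt^2)
\]
for every $t>0$. I would apply this with $t = \frac{1}{n^{c+1}}$, so that
\[
2Nt^2 = 2\cdot n^{2c+3} \cdot n^{-2(c+1)} = 2n.
\]
Plugging in yields
\[
\Pr\left[\left|\xi - \E[\xi]\right| \geq \frac{1}{n^{c+1}}\right] \leq 2e^{-2n}.
\]

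The last step is simply to observe that $2e^{-2n} < \frac{1}{20}$ for all sufficiently large $n$ (any $n \geq 2$ already suffices), which is exactly the conclusion of the lemma. There is no real obstacle here; the content of the lemma is just to record the precise sample-size-to-deviation trade-off that will be reused throughout the other proofs in the appendix, so the main thing to be careful about is bookkeeping the exponents to check that $2Nt^2$ grows at least linearly in $n$.
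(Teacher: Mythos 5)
Your proposal is correct and follows exactly the same route as the paper: apply Hoeffding's inequality with $t=\frac{1}{n^{c+1}}$ and $N=n^{2c+3}$, compute $2Nt^2=2n$, and note that $2e^{-2n}<\frac{1}{20}$ for large enough $n$. There is nothing to add.
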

\begin{proof}
By the Hoefding bound we have
\begin{align*}
\Pr\left[\left|\xi - \E[\xi]\right| \geq \frac{1}{n^{c+1}} \right]
\leq 2\exp\left(-\frac{2n^{2c+3}}{n^{(c+1)\cdot 2}}\right)
= 2\exp\left(-2n\right)~.
\end{align*}
Thus, for a sufficiently large $n$ the requirement holds.
\end{proof}

In the following lemma we prove the second part of the theorem.

\begin{lemma}
\label{lemma:DNF distribution specific}
For every constant $\epsilon>0$, there is no efficient algorithm that learns DNF formulas with $n^\epsilon$ terms, on a distribution such that each component is drawn i.i.d. from a (non-uniform) Bernoulli distribution.
\end{lemma}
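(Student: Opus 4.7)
The plan is to adapt the argument of Lemma~\ref{lemma:DNF distribution free} to the distribution-specific setting, following the sketch in Section~\ref{sec:technique}. Fix $\epsilon>0$ and suppose for contradiction that an efficient algorithm $\cl'$ learns DNFs with $N^\epsilon$ terms over $\{0,1\}^N$ under an i.i.d.\ Bernoulli$(p)$ distribution $\cd'$. By Assumption~\ref{ass:localPRG}, for any sufficiently large stretch $s>1$ there exist a constant locality $k$ and a predicate $P\colon\{0,1\}^k\to\{0,1\}$ such that $\cf_{P,n,n^s}$ is a $\tfrac13$-PRG, and I use $\cl'$ to build a distinguisher $\ca'$ with advantage $>\tfrac13$, contradicting the PRG property.

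\textbf{Encoding and oracle.} I would use the same hyperedge encoding $\bz^S\in\{0,1\}^{kn}$ as in Lemma~\ref{lemma:DNF distribution free}, embedded in $\{0,1\}^N$ by padding with $N-kn$ further coordinates, where $N=(kn)^{\lceil 2/\epsilon\rceil+1}$, so that $N^\epsilon\gg n^2$. I set the Bernoulli parameter to $p=1-1/n$, which makes the probability $q$ that $\bz\sim\cd'$ has a valid-and-distinct encoding on its first $kn$ coordinates (i.e., each of the $k$ length-$n$ slices contains exactly one $0$, with the $k$ zero-positions pairwise distinct) equal to $\frac{n!}{(n-k)!}(1-p)^k p^{k(n-1)}=\Theta(e^{-k})$, a positive constant. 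Given $\cs=((S_1,y_1),\ldots,(S_{n^s},y_{n^s}))$, the oracle draws $\bz\sim\cd'$ on each call; if the first $kn$ coordinates form a valid-and-distinct encoding it overwrites them with $\bz^{S_i}$ for the next unused $(S_i,y_i)$ from $\cs$ and returns $(\bz,y_i)$, otherwise it returns $(\bz,1)$. The conditional law of the first $kn$ coordinates of $\cd'$ given ``valid-and-distinct'' is uniform over valid-and-distinct encodings, which coincides with the law of $\bz^S$ for $S$ uniform over ordered $k$-tuples of distinct indices; hence the marginal of the $\bz$ output by the oracle is \emph{exactly} $\cd'$. A Chernoff bound shows that for $s$ large enough a PRG sample of length $n^s$ suffices for both the training sample used by $\cl'$ and a polynomial-size test set with high probability.

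\textbf{Realizability by a small DNF.} The key step is to exhibit, for every $\bx\in\{0,1\}^n$, a DNF $\phi_\bx$ over $\{0,1\}^N$ with at most $N^\epsilon$ terms that matches the oracle's labels when $\cs$ is pseudorandom (so $y_i=P(\bx_{S_i})$). I take $\phi_\bx(\bz)=\neg V(\bz_{1:kn})\vee\psi_\bx(\bz_{1:kn})$, where $\psi_\bx$ is the $2^k$-term DNF supplied by Lemma~\ref{lemma:from P to DNF} and $\neg V$ is the DNF for ``the first $kn$ coordinates do \emph{not} encode a valid-and-distinct hyperedge''. $\neg V$ decomposes as the disjunction of ``some slice is all $1$s'' ($k$ width-$n$ terms), ``some slice contains two $0$s'' ($k\binom{n}{2}$ width-$2$ terms), and ``two distinct slices share a $0$ coordinate'' ($\binom{k}{2} n$ width-$2$ terms), giving $O(n^2)$ terms for constant $k$. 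Thus $\phi_\bx$ has at most $O(n^2)+2^k\leq N^\epsilon$ terms by the choice of $N$.

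\textbf{Distinguishing and main obstacle.} $\ca'$ calls $\cl'$ with accuracy $\epsilon_0:=q/5$ and confidence $\delta_0:=\tfrac{1}{10}$ to obtain a hypothesis $h$, then evaluates the empirical error of $h$ on a polynomial-size test set drawn from the oracle, outputting $1$ iff this error is at most $3q/10$. When $\cs$ is pseudorandom, $\phi_\bx$ realizes the oracle's examples, so $\cl'$ returns $h$ with true error $\leq q/5$ under $\cd'$ with probability $\geq 1-\delta_0$; when $\cs$ is random, each label on a valid-and-distinct sample is an independent fair coin, so the expected test error is at least $q/2$ regardless of $h$. Hoeffding concentration (via Lemma~\ref{lemma:hoefding1}) separates the two cases by a constant gap, yielding distinguishing advantage $>\tfrac13$ and the desired contradiction. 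The main obstacle, which drives all the parameter choices above, is to make the oracle's output distributed \emph{exactly} as $\cd'$ so that the distribution-specific guarantee of $\cl'$ applies without loss; this forces the distinct-indices check and the choice $p=1-1/n$, which in turn fixes $q=\Theta(e^{-k})$ and makes the validity DNF cost $\Theta(n^2)$ terms, necessitating the polynomial input-dimension padding $N=(kn)^{\Theta(1/\epsilon)}$ to fit within the $N^\epsilon$-term budget.
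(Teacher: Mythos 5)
Your proposal is correct and follows essentially the same route as the paper's proof: the same Bernoulli parameter (probability $1/n$ of a $0$), the same resampling oracle that overwrites a valid encoding with $\bz^{S_i}$ and labels invalid draws by $1$, the same realizing DNF of the form (``not a valid encoding'') $\vee\ \psi_\bx$ with $O(n^2)+2^k$ terms, and the same padding of the input dimension so that this term count fits under $N^\epsilon$. The only differences are cosmetic parameter choices (you exploit that the valid-encoding probability is a constant $\Theta(e^{-k})$ and use constant accuracy $q/5$, whereas the paper bounds it below by $1/\log n$ and uses accuracy $1/n$), which do not change the argument.
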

\begin{proof}
Consider the distribution $\cd$ over $\{0,1\}^{n^{1+3/\epsilon}}$, such that each component is drawn i.i.d. from a Bernoulli distribution where the probability of $0$ is $\frac{1}{n}$.
Assume that there is an efficient algorithm $\cl$ that learns DNF formulas over $\{0,1\}^{n^{1+3/\epsilon}}$ with at most $n^3$ terms on the distribution $\cd$.
Let $m(n)$ be a polynomial such that $\cl$ uses a sample of size at most $m(n)$ and returns with probability at least $\frac{3}{4}$ a hypothesis with error at most $\frac{1}{n}$.
Let $s>1$ be a constant such that $n^s \geq m(n)+n^3$ for every sufficiently large $n$.
By Assumption~\ref{ass:localPRG}, there exists a constant $k$ and a predicate $P:\{0,1\}^k \rightarrow \{0,1\}$, such that $\cf_{P,n,n^s}$ is $\frac{1}{3}$-PRG.
We will show an algorithm $\ca$ with distinguishing advantage greater than $\frac{1}{3}$ and thus reach a contradiction.

We say that $\bz \in \{0,1\}^{n^{1+3/\epsilon}}$ is an extended encoding of a hyperedge if $(z_1,\ldots,z_{kn}) = \bz^S$ for some hyperedge $S$. That is, in each of the first $k$ size-$n$ slices in $\bz$ there is exactly one $0$-bit and each two of the first $k$ slices in $\bz$ encode different indices. Assuming that $n^{1+3/\epsilon} \geq kn$, the probability that $\bz \sim \cd$ is an extended encoding of a hyperedge, is given by
\begin{align*}
n \cdot (n-1) \cdot \ldots \cdot(n-k+1) \cdot \left(\frac{1}{n}\right)^k \left(\frac{n-1}{n}\right)^{nk-k}
&\geq \left(\frac{n-k}{n}\right)^k \left(\frac{n-1}{n}\right)^{k(n-1)}
\\
&=\left(1-\frac{k}{n}\right)^k \left(1-\frac{1}{n}\right)^{k(n-1)}~.
\end{align*}
Since for every $x \in (0,1)$ we have $e^{-x} < 1 - \frac{x}{2}$, then for a sufficiently large $n$ the above is at least
\begin{equation}
\label{eq:dnf-random encoding is hyperedge}
\exp\left(-\frac{2k^2}{n}\right) \cdot \exp\left(-\frac{2k(n-1)}{n} \right)
\geq \exp\left(-1\right) \cdot \exp\left(-2k\right)
\geq \frac{1}{\log(n)}~.
\end{equation}

Given a sequence $(S_1,y_1),\ldots,(S_{n^s},y_{n^s})$, where $S_1,\ldots,S_{n^s}$ are i.i.d. random hyperedges, the algorithm $\ca$ needs to distinguish whether $\by = (y_1,\ldots,y_{n^s})$ is random or that $\by = (P(\bx_{S_1}),\ldots,P(\bx_{S_{n^s}})) = (P_\bx(\bz^{S_1}),\ldots,P_\bx(\bz^{S_{n^s}}))$ for a random $\bx \in \{0,1\}^n$.
Let $\cs = (\bz^{S_1},y_1),\ldots,(\bz^{S_{n^s}},y_{n^s})$.

We use the efficient algorithm $\cl$ in order to obtain distinguishing advantage greater than $\frac{1}{3}$ as follows.
The algorithm $\ca$ runs $\cl$ with the following examples oracle.
In the $i$-th call to the oracle, it chooses $\bz_i \in \{0,1\}^{n^{1+3/\epsilon}}$ according to $\cd$.
If $\bz_i$ is not an extended encoding of a hyperedge (with probability at most $1-\frac{1}{\log(n)}$ by Eq.~\ref{eq:dnf-random encoding is hyperedge}) then the oracle returns $(\bz'_i,y'_i)$ where $\bz'_i=\bz_i$ and $y'_i=1$. Otherwise, the oracle obtains a vector $\bz'_i$ by replacing the first $kn$ components in $\bz_i$ with $\bz^{S_i}$, and returns $(\bz'_i,y'_i)$ where $y'_i=y_i$. Note that the vector $\bz'_i$ returned by the oracle has the distribution $\cd$, since replacing a random hyperedge with another random hyperedge does not change the distribution. Let $h$ be the hypothesis returned by $\cl$.
Recall that $\cl$ uses at most $m(n)$ examples, and hence $\cs$ contains at least $n^3$ examples that $\cl$ cannot view. We denote the indices of these examples by $I = \{m(n)+1,\ldots,m(n)+n^3\}$, and the examples by $\cs_I = \{(\bz^{S_i},y_i)\}_{i \in I}$. By $n^3$ additional calls to the oracle, the algorithm $\ca$ obtains the examples $\cs'_I = \{(\bz'_i,y'_i)\}_{i \in I}$ that correspond to $\cs_I$.
Let $\ell_{I}(h)=\frac{1}{|I|}\sum_{i \in I}\onefunc(h(\bz'_i) \neq y'_i)$.
Now, if $\ell_I(h) \leq \frac{2}{n}$, then $\ca$ returns $1$, and otherwise it returns $0$.
Clearly, the algorithm $\ca$ runs in polynomial time.
We now show that if $\cs$ is pseudorandom then $\ca$ returns $1$ with probability greater than $\frac{2}{3}$, and if $\cs$ is random then $\ca$ returns $1$ with probability less than $\frac{1}{3}$.

Consider a DNF formula $\psi$ with $k \cdot \frac{n(n-1)}{2} + k + n \cdot \frac{k(k-1)}{2}$ terms such that $\psi(\bz)=1$ iff at least one of the first $k$ size-$n$ slices in $\bz$ contains $0$ more than once or less than once, or that from the first $k$ slices in $\bz$ there are two slices that encode the same index. Namely, $\psi$ return $1$ iff $\bz$ is not an extended encoding of a hyperedge.
The construction of such a formula $\psi$ is straightforward.
By Lemma~\ref{lemma:from P to DNF}, there is a DNF formula $\psi_\bx$ over $\{0,1\}^{kn}$ with at most $2^k$ terms, such that for every hyperedge $S$ we have $P_\bx(\bz^S)=\psi_\bx(\bz^S)$.
Let $\psi' = \psi \vee \psi_\bx$. Note that $\psi'$ consists of $k \cdot \frac{n(n-1)}{2} + k + n \cdot \frac{k(k-1)}{2} + 2^k$ terms, which is at most $n^3$ (for a sufficiently large $n$).
Also, note that the inputs to $\psi'$ are in $\{0,1\}^{n^{1+3/\epsilon}}$, but it uses only the first $kn$ components of the input.

If $\cs$ is pseudorandom then the examples $(\bz'_i,y'_i)$ returned by the oracle satisfy $y'_i=\psi'(\bz'_i)$. Indeed, if $\bz'_i$ is an extended encoding of a hyperedge $S_i$ then $\psi(\bz'_i)=0$ and $y'_i=P_\bx(\bz^{S_i})=\psi_\bx(\bz^{S_i})$, and otherwise $y'_i=\psi(\bz'_i)=1$.
Hence, if $\cs$ is pseudorandom then with probability at least $\frac{3}{4}$ the algorithm $\cl$ returns a hypothesis $h$ such that $\E_{\bz \sim \cd}\onefunc(h(\bz) \neq \psi'(\bz)) \leq \frac{1}{n}$.
Therefore, $\E_{\cs'_I}\ell_I(h) \leq \frac{1}{n}$.

If $\cs$ is random, then for every $i$ such that $\bz'_i$ is an extended encoding of a hyperedge $S_i$, we have $y'_i=1$ w.p. $\frac{1}{2}$ and $y'_i=0$ otherwise, and $y'_i$ is independent of $S_i$. Hence, for every $h$ and $i \in I$ we have
\begin{align*}
\Pr\left[h(\bz'_i) \neq y'_i\right]
&\geq \Pr\left[h(\bz'_i) \neq y'_i \;|\; \bz'_i \text{ represents a hyperedge}\right] \cdot \Pr\left[\bz'_i \text{ represents a hyperedge}\right]
\\
&\stackrel{(Eq.~\ref{eq:dnf-random encoding is hyperedge})}{\geq} \frac{1}{2} \cdot \frac{1}{\log(n)}
= \frac{1}{2\log(n)}~.
\end{align*}
Thus, $\E_{\cs'_I}\ell_I(h) \geq \frac{1}{2\log(n)}$.

By Lemma~\ref{lemma:hoefding1} (with $c=0$), we have for a sufficiently large $n$ that
\[
\Pr_{\cs'_I}\left[\left|\ell_{I}(h) -  \E_{\cs'_I}\ell_{I}(h)\right| \geq \frac{1}{n}\right]
< \frac{1}{20}~.
\]
Therefore, if $\cs$ is pseudorandom, then for a sufficiently large $n$, we have with probability at least $1-\left(\frac{1}{4} + \frac{1}{20}\right) = \frac{7}{10} > \frac{2}{3}$ that $\E_{\cs'_I}\ell_{I}(h) \leq \frac{1}{n}$ and $\left|\ell_{I}(h) -  \E_{\cs'_I}\ell_{I}(h)\right| < \frac{1}{n}$, and hence $\ell_{I}(h) \leq \frac{2}{n}$. Thus, the algorithm $\ca$ returns $1$ with probability greater than $\frac{2}{3}$.
If $\cs$ is random then $\E_{\cs'}\ell_{I}(h) \geq \frac{1}{2\log(n)}$ and for a sufficiently large $n$ we have with probability at least $\frac{19}{20}$ that $\left|\ell_{I}(h) - \E_{\cs'_I}\ell_{I}(h)\right| < \frac{1}{n}$. Hence, with probability greater than $\frac{2}{3}$ we have $\ell_{I}(h) \geq \frac{1}{2\log(n)} - \frac{1}{n} > \frac{2}{n}$ and the algorithm $\ca$ returns $0$.

Hence, it is hard to learn DNF formulas with $n^3$ terms where the input distribution is $\cd$.
Thus, for $\tn=n^{1+3/\epsilon}$, we have that it is hard to learn DNF formulas with $\tn^\epsilon = n^{(1+3/\epsilon) \cdot \epsilon} = n^{\epsilon + 3} \geq n^3$ terms on a distribution over $\{0,1\}^\tn$, where each component is drawn i.i.d. from a Bernoulli distribution.
\end{proof}

\subsection{Proof of Theorem~\ref{thm:circuit}}
\label{app:proof circuit}

Let $\cd$ be the uniform distribution on $\{0,1\}^{n^{1+2/\epsilon}}$.
Assume that there is an efficient algorithm $\cl$ that learns depth-$3$ Boolean circuits of size $n^2$ on the distribution $\cd$.
Let $m(n)$ be a polynomial such that $\cl$ uses a sample of size at most $m(n)$ and returns with probability at least $\frac{3}{4}$ a hypothesis $h$ with error at most $\frac{1}{2}-\gamma$.
Let $s>1$ be a constant such that $n^s \geq m(n)+n$ for every sufficiently large $n$. By Assumption~\ref{ass:localPRG}, there exists a constant $k$ and a predicate $P:\{0,1\}^k \rightarrow \{0,1\}$, such that $\cf_{P,n,n^s}$ is $\frac{1}{3}$-PRG.
We will show an algorithm $\ca$ with distinguishing advantage greater than $\frac{1}{3}$ and thus reach a contradiction.

For a hyperedge $S$ we denote by $\bz^S \in \{0,1\}^{kn}$ the encoding of $S$ that is defined in the proof of Theorem~\ref{thm:DNF}. The {\em compressed encoding} of $S$, denoted by $\tilde{\bz}^S \in \{0,1\}^{k \log(n)}$,  is a concatenation of $k$ size-$\log(n)$ slices, such that the $i$-th slice is a binary representation of the $i$-th member in $S$. We sometimes denote the $i$-th slice of $\tilde{\bz} \in \{0,1\}^{k \log(n)}$ by $\tilde{z}_{i,1},\ldots,\tilde{z}_{i,\log(n)}$.
For $\bx \in \{0,1\}^n$, let $P_\bx:\{0,1\}^{kn} \rightarrow \{0,1\}$ and $\tilde{P}_\bx:\{0,1\}^{k\log(n)} \rightarrow \{0,1\}$ be such that for every hyperedge $S$ we have $P_\bx(\bz^S) = \tilde{P}_\bx(\tilde{\bz}^S) = P(\bx_{S})$.
We say that $\tilde{\bz} \in \{0,1\}^{n^{1+2/\epsilon}}$ is an {\em extended compressed encoding} of a hyperedge $S$, if $(\tilde{z}_1,\ldots,\tilde{z}_{k\log(n)})=\tilde{\bz}^S$, namely, $\tilde{\bz}$ starts with the compressed encoding $\tilde{\bz}^S$.

If $\tilde{\bz}$ is drawn from the uniform distribution on $\{0,1\}^{n^{1+2/\epsilon}}$, then, for a sufficiently large $n$, the probability that it is an extended compressed encoding of a hyperedge, namely, that each two of the first $k$ size-$\log(n)$ slices encode different indices, is
\begin{equation}
\label{eq:circuit-random encoding is hyperedge}
\frac{n \cdot (n-1) \cdot \ldots \cdot(n-k+1)}{n^k}
\geq \left(\frac{n-k}{n}\right)^k
= \left(1-\frac{k}{n}\right)^k
\geq 1 - \frac{\gamma}{2}~.
\end{equation}

Given a sequence $(S_1,y_1),\ldots,(S_{n^s},y_{n^s})$, where $S_1,\ldots,S_{n^s}$ are i.i.d. random hyperedges, the algorithm $\ca$ needs to distinguish whether $\by = (y_1,\ldots,y_{n^s})$ is random or that $\by = (P(\bx_{S_1}),\ldots,P(\bx_{S_{n^s}})) = (\tilde{P}_\bx(\tilde{\bz}^{S_1}),\ldots,\tilde{P}_\bx(\tilde{\bz}^{S_{n^s}}))$ for a random $\bx \in \{0,1\}^n$.
Let $\cs = (\tilde{\bz}^{S_1},y_1),\ldots,(\tilde{\bz}^{S_{n^s}},y_{n^s})$.

We use the efficient algorithm $\cl$ in order to obtain distinguishing advantage greater than $\frac{1}{3}$ as follows.
The algorithm $\ca$ runs $\cl$ with the following examples oracle.
In the $i$-th call to the oracle, it chooses $\tilde{\bz}_i \in \{0,1\}^{n^{1+3/\epsilon}}$ according to $\cd$.
If $\tilde{\bz}_i$ is not an extended compressed encoding of a hyperedge (with probability at most $\frac{\gamma}{2}$, by Eq.~\ref{eq:circuit-random encoding is hyperedge}), then the oracle returns $(\bz'_i,y'_i)$ where $\bz'_i=\tilde{\bz}_i$ and $y'_i=1$. Otherwise, the oracle obtained a vector $\bz'_i$ by replacing the first $k\log(n)$ components in $\tilde{\bz}_i$ with $\tilde{\bz}^{S_i}$, and returns $(\bz'_i,y'_i)$ where $y'_i=y_i$. Note that the vector $\bz'_i$ returned by the oracle has the distribution $\cd$, since replacing a random hyperedge with another random hyperedge does not change the distribution. Let $h$ be the hypothesis returned by $\cl$.
Recall that $\cl$ uses at most $m(n)$ examples, and hence $\cs$ contains at least $n$ examples that $\cl$ cannot view. We denote the indices of these examples by $I = \{m(n)+1,\ldots,m(n)+n\}$, and the examples by $\cs_I = \{(\tilde{\bz}^{S_i},y_i)\}_{i \in I}$. By $n$ additional calls to the oracle, the algorithm $\ca$ obtains the examples $\cs'_I = \{(\bz'_i,y'_i)\}_{i \in I}$ that correspond to $\cs_I$.
Let $\ell_{I}(h)=\frac{1}{|I|}\sum_{i \in I}\onefunc(h(\bz'_i) \neq y'_i)$.
Now, if $\ell_I(h) \leq \frac{1}{2} - \frac{\gamma}{2}$, then $\ca$ returns $1$, and otherwise it returns $0$.
Clearly, the algorithm $\ca$ runs in polynomial time.
We now show that if $\cs$ is pseudorandom then $\ca$ returns $1$ with probability greater than $\frac{2}{3}$, and if $\cs$ is random then $\ca$ returns $1$ with probability less than $\frac{1}{3}$.

Consider the encoding $\bz^S$ and the compressed encoding $\tilde{\bz}^S$ of a hyperedge $S$. Note that for every $i \in [k]$ and $j \in [n]$, we have $z^S_{i,j}=0$ iff $(\tilde{z}^S_{i,1},\ldots,\tilde{z}^S_{i,\log(n)})$ is the binary representation of $j$. Hence, we can express $\neg z^S_{i,j}$ by a conjunction with the variables $\tilde{\bz}^S$, and express $z^S_{i,j}$ by a disjunction with the variables $\tilde{\bz}^S$.
By Lemma~\ref{lemma:from P to DNF}, there is a DNF formula $\psi_\bx$ over $\{0,1\}^{kn}$ with at most $2^k$ terms, such that for every hyperedge $S$ we have $P_\bx(\bz^S)=\psi_\bx(\bz^S)$.
Let $C_\bx$ be a depth-$3$ Boolean circuit such that for every hyperedge $S$ we have $C_\bx(\tilde{\bz}^S)=\psi_\bx(\bz^S)$. The circuit $C_\bx$ is obtained from $\psi_\bx$ by replacing every literal $z_{i,j}$ with the appropriate disjunction. Hence, we have $C_\bx(\tilde{\bz}^S)=\psi_\bx(\bz^S)=P_\bx(\bz^S)=\tilde{P}_\bx(\tilde{\bz}^S)$.
Note that $C_\bx$ has $1+2^k+nk \leq \frac{n^2}{2}$ gates (for a sufficiently large $n$).

Consider a DNF formula $\psi$ over $\{0,1\}^{n^{1+2/\epsilon}}$ with $n \cdot \frac{k(k-1)}{2} \leq \frac{n^2}{2}$ terms (for a sufficiently large $n$) such that $\psi(\tilde{\bz})=1$ iff $\tilde{\bz}$ is not an extended compressed encoding of a hyperedge, namely, from the first $k$ size-$\log(n)$ slices in $\tilde{\bz}$ there are two slices that encode the same index.
The construction of such a formula $\psi$ is straightforward.
Let $C'$ be a depth-$3$ Boolean circuit such that $C' = C_\bx \vee \psi$.
Note that the inputs to $C'$ are in $\{0,1\}^{n^{1+2/\epsilon}}$, but it uses only the first $k\log(n)$ components of the input.
The circuit $C'$ has at most $n^2$ gates.

If $\cs$ is pseudorandom then the examples $(\bz'_i,y'_i)$ returned by the oracle satisfy $y'_i=C'(\bz'_i)$. Indeed, if $\bz'_i$ is an extended compressed encoding of a hyperedge $S_i$ then $\psi(\bz'_i)=0$ and $y'_i=\tilde{P}_\bx(\tilde{\bz}^{S_i})=C_\bx(\tilde{\bz}^{S_i})$, and otherwise $y'_i=\psi(\bz'_i)=1$.
Hence, if $\cs$ is pseudorandom then with probability at least $\frac{3}{4}$ the algorithm $\cl$ returns a hypothesis $h$ such that $\E_{\tilde{\bz} \sim \cd}\onefunc(h(\tilde{\bz}) \neq C'(\tilde{\bz})) \leq \frac{1}{2}-\gamma$.
Therefore, $\E_{\cs'_I}\ell_I(h) \leq \frac{1}{2}-\gamma$.

If $\cs$ is random, then for every $i$ such that $\bz'_i$ is an extended compressed encoding of a hyperedge $S_i$, we have $y'_i=1$ w.p. $\frac{1}{2}$ and $y'_i=0$ otherwise, and $y'_i$ is independent of $S_i$. Hence, for every $h$ and $i \in I$ we have
\begin{align*}
\Pr\left(h(\bz'_i) \neq y'_i\right)
&\geq \Pr\left[h(\bz'_i) \neq y'_i \;|\; \bz'_i \text{ represents a hyperedge}\right] \cdot \Pr\left[\bz'_i \text{ represents a hyperedge}\right]
\\
&\stackrel{(Eq.~\ref{eq:circuit-random encoding is hyperedge})}{\geq} \frac{1}{2} \cdot \left(1 - \frac{\gamma}{2}\right)
= \frac{1}{2} - \frac{\gamma}{4}~.
\end{align*}
Thus, $\E_{\cs'_I}\ell_I(h) \geq \frac{1}{2} - \frac{\gamma}{4}$.

By the Hoefding bound, for a sufficiently large $n$ we have
\[
\Pr_{\cs'_i}\left[\left|\ell_{I}(h) -  \E_{\cs'_i}\ell_{I}(h)\right| \geq \frac{\gamma}{4} \right] \leq \frac{1}{20}~.
\]
Therefore, if $\cs$ is pseudorandom then for a sufficiently large $n$ we have with probability at least $1-\left(\frac{1}{4} + \frac{1}{20}\right) = \frac{7}{10} > \frac{2}{3}$ that $\E_{\cs'_I}\ell_{I}(h) \leq \frac{1}{2}-\gamma$ and $\left|\ell_{I}(h) -  \E_{\cs'_I}\ell_{I}(h)\right| < \frac{\gamma}{4}$, and hence $\ell_{I}(h) < \frac{1}{2} - \frac{3}{4}\gamma < \frac{1}{2} - \frac{\gamma}{2}$. Thus, the algorithm $\ca$ returns $1$ with probability greater than $\frac{2}{3}$.
If $\cs$ is random then $\E_{\cs'_I}\ell_{I}(h) \geq \frac{1}{2} - \frac{\gamma}{4}$, and for a sufficiently large $n$ we have with probability at least $\frac{19}{20}$ that $\left|\ell_{I}(h) - \E_{\cs'_I}\ell_{I}(h)\right| < \frac{\gamma}{4}$. Hence, with probability greater than $\frac{2}{3}$ we have $\ell_{I}(h) > \frac{1}{2} - \frac{\gamma}{2}$ and the algorithm $\ca$ returns $0$.

Hence, it is hard to weakly-learn depth-$3$ Boolean circuits of size at most $n^2$ where the input distribution is $\cd$.
Thus, for $\tn=n^{1+2/\epsilon}$, we have that it is hard to weakly-learn depth-$3$ Boolean circuits of size $\tn^\epsilon = n^{(1+2/\epsilon) \cdot \epsilon} = n^{\epsilon + 2} \geq n^2$, on the uniform distribution over $\{0,1\}^\tn$.

\subsection{Proof of Theorem~\ref{thm:intersections}}
\label{app:proof intersections}

By Assumption~\ref{ass:xormaj}, there is a constant $\alpha>0$ such that for every constant $s>1$ there is a constant $l$ such that for the predicate $P=\xormaj_{\ceil{\alpha s},l}$ the collection $\cf_{P,n,n^s}$ is $\frac{1}{3}$-PRG.
Let $\beta = \frac{1}{2.1\alpha}$, and let $k>\alpha$ be an integer constant.
By our assumption, for $s=\frac{k}{\alpha}$, there is a constant $l$ such that for the predicate $P=\xormaj_{\alpha s,l}=\xormaj_{k,l}$ the collection $\cf_{P,n,n^s}=\cf_{P,n,n^{k/\alpha}}=\cf_{P,n,n^{2.1 \beta k}}$ is $\frac{1}{3}$-PRG.
Let $\tn = \frac{(2n)(2n-1)}{2}+2n+1$.
Assume that there is an efficient algorithm $\cl$ that learns intersections of $k$ halfspaces over $\{0,1\}^{\tn}$. Assume that $\cl$ uses a sample of size $m(\tn) = \tn^{\beta k}$ and returns with probability at least $\frac{3}{4}$ a hypothesis with error at most $\frac{1}{10}$.
We will show an algorithm $\ca$ with distinguishing advantage greater than $\frac{1}{3}$ and thus reach a contradiction. It implies that an efficient algorithm that learns intersections of $k$ halfspaces over $\{0,1\}^{\tn}$ must use a sample of size greater than $\tn^{\beta k}$, and therefore runs in time $\Omega(\tn^{\beta k})$.
Note that we assume that $k>\alpha$. For $k \leq \alpha$ the claim holds trivially, since learning intersections of $k$ halfspaces on $\{0,1\}^{\tn}$ clearly requires time $\Omega(\tn)$, and we have $\tn^{\beta k} \leq \tn^{\beta \alpha} = \tn^{1/2.1} \leq \tn$.

For $\bz \in \{0,1\}^{2n}$, we denote by $\tilde{\bz} \in \{0,1\}^{\tn}$ the vector of all monomials over $\bz$ of degree at most $2$. We call $\tilde{\bz}$ the {\em monomials encoding} of $\bz$.
We encode a hyperedge $S = (i_1,\ldots,i_{k+l})$ by $\bz^S \in \{0,1\}^{2n}$, where $\bz^S$ is the concatenation of $2$ vectors in $\{0,1\}^n$, such that the first vector has $1$-bits in the indices $i_1,\ldots,i_k$ and $0$ elsewhere, and the second vector has $1$-bits in the indices $i_{k+1},\ldots,i_{k+l}$ and $0$ elsewhere. We denote by $\tilde{\bz}^S$ the monomials encoding of $\bz^S$.
For $\bx \in \{0,1\}^n$, let $P_\bx:\{0,1\}^{\tn} \rightarrow \{0,1\}$ be a function such that for every hyperedge $S$ we have $P_\bx(\tilde{\bz}^S) = P(\bx_{S})$.

\begin{lemma}
\label{lemma:from P to intersection}
For every $\bx \in \{0,1\}^n$, there is a function $g:\{0,1\}^{\tn} \rightarrow \{0,1\}$ that can be expressed by an intersection of $k$ halfspaces, such that for every hyperedge $S$  we have $g(\tilde{\bz}^S)=P_\bx(\tilde{\bz}^S)$.
\end{lemma}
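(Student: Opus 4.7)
The plan is to reduce everything to two integer quantities attached to the encoding $\bz^S = (\bu,\bv) \in \{0,1\}^{2n}$: namely $A := \langle \bu,\bx\rangle = \sum_{p=1}^k x_{i_p}$ and $M := \langle \bv,\bx\rangle = \sum_{p=k+1}^{k+l} x_{i_p}$. Setting $\tau := \lfloor l/2\rfloor + 1$, the definition of $\xormaj_{k,l}$ yields
\[
P_{\bx}(\tilde{\bz}^S) \;=\; (A \bmod 2)\,\oplus\,\onefunc[M \geq \tau].
\]
The decisive property of the monomials encoding is that any polynomial in $\bz^S$ of degree at most $2$ is a linear function of $\tilde{\bz}^S$. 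In particular, every quadratic polynomial in the integers $A$ and $M$ (which are themselves linear in $\bz^S$) gives rise to a halfspace over $\tilde{\bz}^S$, so it suffices to write $P_\bx$, viewed as a function of $(A,M)\in\{0,\ldots,k\}\times\{0,\ldots,l\}$, as the intersection of $k$ quadratic thresholds in $A,M$.

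The intended construction uses one halfspace $h_1$ to cover the two parity-opposite cases $A=0$ and $A=1$ simultaneously, and one halfspace $h_i$ per $i\in\{2,\ldots,k\}$ to cover the single case $A=i$. Concretely, for a sufficiently large constant $C$ (any $C \geq kl$ suffices) I take
\begin{align*}
h_1 &:= \onefunc\bigl[(1-2A)\bigl(M-\tau+\tfrac{1}{2}\bigr) + C\cdot A(A-1) \geq 0\bigr], \\
h_i &:= \onefunc\bigl[C\cdot (A-i)^2 + (-1)^i\bigl(M-\tau+\tfrac{1}{2}\bigr) \geq 0\bigr] \qquad (i=2,\ldots,k),
\end{align*}
and define $g := h_1 \wedge \cdots \wedge h_k$. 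Each expression is a quadratic in $A,M$, hence a genuine halfspace over $\tilde{\bz}^S$.

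The verification is a short case analysis on $A$. On the rows $A \in \{0,1\}$, the factor $A(A-1)$ in $h_1$ vanishes and $h_1$ collapses to the correct parity-dependent threshold on $M$ (namely $\onefunc[M \geq \tau]$ for $A=0$ and $\onefunc[M \leq \tau-1]$ for $A=1$), while for $i \geq 2$ the dominant $C(A-i)^2 \geq C$ term forces $h_i = 1$. On the row $A = j \in \{2,\ldots,k\}$, the halfspace $h_j$ collapses to $\onefunc[(-1)^j(M-\tau+\tfrac{1}{2}) \geq 0]$, which is again the correct parity-dependent threshold; for $i \neq j$ (including $i=1$, where now $A(A-1) \geq 2$) the dominant quadratic term overpowers the affine-in-$M$ contribution and forces $h_i = 1$. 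In every row the conjunction equals $(A \bmod 2) \oplus \onefunc[M \geq \tau]$, as required.

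The only non-cosmetic decision is choosing $C$ large enough so that in each ``trivial'' row the quadratic indicator dominates the $O(l)$ contribution of the $(M-\tau+\tfrac{1}{2})$ term; this gives a constant bound because $k$ and $l$ are constants. The real conceptual content is the pairing trick that bundles $A=0$ and $A=1$ into a single halfspace $h_1$, shaving one halfspace off the naive ``one-per-row'' construction and bringing the total down from $k+1$ to $k$, matching exactly the XOR-locality of $\xormaj_{k,l}$.
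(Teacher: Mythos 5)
Your construction is correct, and at its core it is the same argument as the paper's: view $P_\bx$ as a function of the two integers $A=\inner{\bx,\bz^1}$ and $M=\inner{\bx,\bz^2}$, write one degree-$2$ polynomial threshold per value of $A$ whose large quadratic term $C(A-i)^2$ deactivates the constraint off its own row and whose affine-in-$M$ part encodes $\maj_l$ or its negation according to the parity of $i$, and then linearize via the monomials encoding. The one genuine difference is your pairing of the rows $A=0$ and $A=1$ into the single halfspace $h_1$ using the factor $(1-2A)$ together with the vanishing of $A(A-1)$ on $\{0,1\}$. This is not merely cosmetic: the paper's proof only defines $f_i$ for $i\in[k]=\{1,\ldots,k\}$, so the row $A=0$ is not covered. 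On that row every $f_i$ satisfies $\sign(f_i(\bz))=1$ and hence $f(\bz)=1$, whereas the correct value is $\xormaj_{k,l}(\bx_S)=\maj_l(\bx_{S^2})$, which can be $0$ (e.g., $\bx=\zero$ forces $A=0$ for every hyperedge $S$, and then $P_\bx(\tilde{\bz}^S)=0\neq 1$). The paper's step ``$f(\bz)=\sign(f_{\inner{\bx,\bz^1}}(\bz))$'' implicitly assumes $\inner{\bx,\bz^1}\geq 1$. One could repair this by adding an $f_0$, at the cost of $k+1$ halfspaces (harmless for the downstream $\Omega(n^{\beta k})$ bound after adjusting $\beta$), but your pairing trick covers all $k+1$ values of $A$ with exactly $k$ halfspaces and thus matches the lemma as stated. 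The only thing to verify beyond the paper's case analysis is the choice of $C$; your bound $C\geq kl$ indeed dominates $|(1-2A)(M-\tau+\tfrac12)|\leq(2k-1)\cdot\tfrac{l+1}{2}$ against $C\,A(A-1)\geq 2C$ on rows $A\geq 2$, so the argument goes through.
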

\begin{proof}
For $\bz \in \{0,1\}^{2n}$, we denote $\bz^1 = (z_1,\ldots,z_n)$ and $\bz^2 = (z_{n+1},\ldots,z_{2n})$.
For every $\bz \in \{0,1\}^{2n}$ and even $i \in [k]$, let
\[
f_i(\bz) = \left(\inner{\bx,\bz^1}-i\right)^2 \cdot l + \left(\inner{\bx,\bz^2} - \floor*{\frac{l}{2}}\right)~,
\]
and for every odd $i \in [k]$ let
\[
f_i(\bz) = \left(\inner{\bx,\bz^1}-i\right)^2 \cdot l + 1 - \left(\inner{\bx,\bz^2} - \floor*{\frac{l}{2}}\right)~.
\]
Then, let $f:\{0,1\}^{2n} \rightarrow \{0,1\}$ be such that
\[
f(\bz) = \bigwedge_{i \in [k]} \sign(f_i(\bz))~.
\]

Note that the functions $f_i$ are degree-$2$ polynomials. For every $i \in [k]$, let $g_i: \{0,1\}^{\tn} \rightarrow \reals$ be a linear function such that for every $\bz \in \{0,1\}^{2n}$ we have $g_i(\tilde{\bz})=f_i(\bz)$, where $\tilde{\bz}$ is the monomials encoding of $\bz$. Let $g:\{0,1\}^{\tn} \rightarrow \{0,1\}$ be such that
\[
g(\tilde{\bz}) =  \bigwedge_{i \in [k]} \sign(g_i(\tilde{\bz}))~.
\]
Note that the function $g$ is an intersection of $k$ halfspaces, and that for every $\bz \in \{0,1\}^{2n}$ we have $f(\bz) = g(\tilde{\bz})$.

Assume that $\bz = \bz^S$ for a hyperedge $S=(i_1,\ldots,i_{k+l})$, and let $S^1=(i_1,\ldots,i_k)$ and $S^2=(i_{k+1},\ldots,i_{k+l})$. Note that
$\sign\left(\inner{\bx,\bz^2}- \floor{\frac{l}{2}}\right) = \maj_l(\bx_{S^2})$.
We have:
\begin{itemize}
\item If $i$ is even and $\inner{\bx,\bz^1}=i$, then $\sign(f_i(\bz)) = \sign\left(\inner{\bx,\bz^2} - \floor{\frac{l}{2}}\right) = \maj_l(\bx_{S^2})$.
\item If $i$ is odd and $\inner{\bx,\bz^1}=i$, then $\sign(f_i(\bz)) = \sign\left(1-\left(\inner{\bx,\bz^2} - \floor{\frac{l}{2}}\right)\right) = 1 - \maj_l(\bx_{S^2})$.
\item If $\inner{\bx,\bz^1} \neq i$ then $\left(\inner{\bx,\bz^1}-i\right)^2 \cdot l \geq l$. Since we also have $-\floor{\frac{l}{2}} \leq \inner{\bx,\bz^2} - \floor{\frac{l}{2}} \leq \ceil{\frac{l}{2}}$, then $\sign(f_i(\bz))=1$.
\end{itemize}

Since for every $i$ such that $\inner{\bx,\bz^1} \neq i$ we have $\sign(f_i(\bz))=1$, then $f(\bz) = \sign(f_{\inner{\bx,\bz^1}}(\bz))$.
Hence, if $\inner{\bx,\bz^1}$ is even then $f(\bz)=\maj_l(\bx_{S^2})$, and otherwise $f(\bz)=1-\maj_l(\bx_{S^2})$.
Note that $\inner{\bx,\bz^1}$ is the Hamming weight of $\bx_{S^1}$. Therefore, we have
\[
f(\bz)
= [\neg \xor_k(\bx_{S^1}) \wedge \maj_l(\bx_{S^2})] \vee [\xor_k(\bx_{S^1}) \wedge \neg \maj_l(\bx_{S^2})]
= \xormaj_{k,l}(\bx_S)
= P(\bx_S)~.
\]

For $\tilde{\bz} = \tilde{\bz}^S$, we have $g(\tilde{\bz})=f(\bz)=P(\bx_S)=P_\bx(\tilde{\bz})$. Since $g$ is an intersection of $k$ halfspaces, the lemma follows.
\end{proof}

Given a sequence $(S_1,y_1),\ldots,(S_{n^{2.1\beta k}},y_{n^{2.1\beta k}})$, where $S_1,\ldots,S_{n^{2.1\beta k}}$ are i.i.d. random hyperedges, the algorithm $\ca$ needs to distinguish whether $\by = (y_1,\ldots,y_{n^{2.1\beta k}})$ is random, or that $\by = (P(\bx_{S_1}),\ldots,P(\bx_{S_{n^{2.1\beta k}}})) = (P_\bx(\tilde{\bz}^{S_1}),\ldots,P_\bx(\tilde{\bz}^{S_{n^{2.1\beta k}}}))$ for a random $\bx \in \{0,1\}^n$.
Let $\cs = ((\tilde{\bz}^{S_1},y_1),\ldots,(\tilde{\bz}^{S_{n^{2.1\beta k}}},y_{n^{2.1\beta k}}))$.
Let $\cd$ be a distribution on $\{0,1\}^{\tn}$ such that $\tilde{\bz} \sim \cd$ is the monomials encoding of a random hyperedge. Note that each $\tilde{\bz}^{S_i}$ from $\cs$ is drawn i.i.d. from $\cd$.

We use the efficient algorithm $\cl$ in order to obtain distinguishing advantage greater than $\frac{1}{3}$ as follows.
The algorithm $\ca$ learns a function $h:\{0,1\}^{\tn} \rightarrow \{0,1\}$ by running $\cl$ with an examples oracle that in each call returns the next example from $\cs$. Recall that $\cl$ uses at most $m(\tn)=\tn^{\beta k}$ examples, and hence $\cs$ contains at least
\[
n^{2.1\beta k}-\tn^{\beta k}
\geq n^{2.1 \beta k}-(2n)^{2\beta k}
\geq n^{2.1 \beta k}-(n^{1.01})^{2\beta k}
= n^{2.1 \beta k}-n^{2.02\beta k}
= n^{2.02 \beta k}(n^{0.08 \beta k} - 1)
\geq \ln(n)
\]
examples that $\cl$ cannot view (for a sufficiently large $n$).
We denote the indices of these examples by $I = \{m(\tn)+1,\ldots,m(\tn)+\ln(n)\}$, and the examples by $\cs_I = \{(\tilde{\bz}^{S_i},y_i)\}_{i \in I}$.
Let $\ell_{I}(h)=\frac{1}{|I|}\sum_{i \in I}\onefunc(h(\tilde{\bz}^{S_i}) \neq y_i)$.
Now, if $\ell_I(h) \leq \frac{2}{10}$, then $\ca$ returns $1$, and otherwise it returns $0$.
Clearly, the algorithm $\ca$ runs in polynomial time.
We now show that if $\cs$ is pseudorandom then $\ca$ returns $1$ with probability greater than $\frac{2}{3}$, and if $\cs$ is random then $\ca$ returns $1$ with probability less than $\frac{1}{3}$.

If $\cs$ is pseudorandom, then by Lemma~\ref{lemma:from P to intersection}, it can be realized by an intersection of $k$ halfspaces. Hence, with probability at least $\frac{3}{4}$ the algorithm $\cl$ returns a function $h$, such that $\E_{\tilde{\bz} \sim \cd}\onefunc(h(\tilde{\bz}) \neq P_\bx(\tilde{\bz})) \leq \frac{1}{10}$. Therefore, $\E_{\cs_I}\ell_{I}(h) \leq \frac{1}{10}$.
If $\cs$ is random then for every function $h:\{0,1\}^{\tn} \rightarrow \{0,1\}$ the events $\{h(\tilde{\bz_i}) = y_i\}_{i \in I}$ are independent from one another, and each has probability $\frac{1}{2}$. Hence, $\E_{\cs_I}\ell_{I}(h) = \frac{1}{2}$.

By the Hoefding bound, for a sufficiently large $n$ we have
\[
\Pr_{\cs_I}\left[\left|\ell_{I}(h) -  \E_{\cs_I}\ell_{I}(h)\right| \geq \frac{1}{10} \right] \leq \frac{1}{20}~.
\]
Therefore, if $\cs$ is pseudorandom then for a sufficiently large $n$ we have with probability at least $1-\left(\frac{1}{4} + \frac{1}{20}\right) = \frac{7}{10} > \frac{2}{3}$ that $\E_{\cs_I}\ell_{I}(h) \leq \frac{1}{10}$ and $\left|\ell_{I}(h) -  \E_{\cs_I}\ell_{I}(h)\right| < \frac{1}{10}$, and hence $\ell_{I}(h) \leq \frac{2}{10}$. Thus, the algorithm $\ca$ returns $1$ with probability greater than $\frac{2}{3}$.
If $\cs$ is random then $\E_{\cs}\ell_{I}(h) = \frac{1}{2}$ and for a sufficiently large $n$ we have with probability at least $\frac{19}{20}$ that $\left|\ell_{I}(h) - \E_{\cs_I}\ell_{I}(h)\right| < \frac{1}{10}$. Hence, with probability greater than $\frac{2}{3}$ we have $\ell_{I}(h) > \frac{2}{10}$ and the algorithm $\ca$ returns $0$.

\subsection{Proof of Theorem~\ref{thm:nn discrete}}
\label{app:proof nn discrete}

\begin{lemma}
\label{lemma:from square to 0-1 part 1}
Let $\cd$ be a distribution on $\{0,1\}^n$ and let $\epsilon>0$. Let $f:\{0,1\}^n \rightarrow \{0,1\}$ and $h:\{0,1\}^n \rightarrow \reals$ be functions such that $\E_{\bx \sim \cd}(f(\bx)-h(\bx))^2 \leq \frac{\epsilon}{4}$. Let $h':\{0,1\}^n \rightarrow \{0,1\}$ be such that for every $\bx \in \{0,1\}^n$ we have $h'(\bx) = \sign\left(h(\bx)-\frac{1}{2}\right)$. Then $\E_{\bx \sim \cd}\onefunc(h'(\bx) \neq f(\bx)) \leq \epsilon$.
\end{lemma}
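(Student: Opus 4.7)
The plan is a direct pointwise-to-expectation argument: I would first show that whenever the thresholded prediction $h'(\bx)$ disagrees with the Boolean label $f(\bx)$, the squared error $(f(\bx)-h(\bx))^2$ must be at least $\frac{1}{4}$, and then integrate against $\cd$ to convert the assumption on the square-loss into a bound on the $0$-$1$ loss.

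More concretely, I would split into the two possible disagreement cases. If $f(\bx)=0$ and $h'(\bx)=1$, then by the definition of $\sign$ we have $h(\bx)-\frac{1}{2}>0$, so $h(\bx)>\frac{1}{2}$ and hence $(f(\bx)-h(\bx))^2=h(\bx)^2>\frac{1}{4}$. If $f(\bx)=1$ and $h'(\bx)=0$, then $h(\bx)-\frac{1}{2}\leq 0$, so $h(\bx)\leq \frac{1}{2}$ and $(f(\bx)-h(\bx))^2=(1-h(\bx))^2\geq \frac{1}{4}$. In both disagreement cases the squared error is at least $\frac{1}{4}$, which yields the pointwise inequality
\[
\onefunc(h'(\bx)\neq f(\bx)) \;\leq\; 4\,(f(\bx)-h(\bx))^2 \qquad \text{for every } \bx\in\{0,1\}^n.
\]

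Taking expectations over $\bx\sim\cd$ on both sides and using the hypothesis $\E_{\bx\sim\cd}(f(\bx)-h(\bx))^2 \leq \frac{\epsilon}{4}$ then gives
\[
\E_{\bx\sim\cd}\onefunc(h'(\bx)\neq f(\bx)) \;\leq\; 4\cdot\frac{\epsilon}{4} \;=\; \epsilon,
\]
as required. There is no real obstacle here; the only subtlety to be careful about is the exact convention of $\sign$ at $0$ (the paper sets $\sign(0)=0$), which determines how the boundary case $h(\bx)=\frac{1}{2}$ is handled, but since $(1-\tfrac{1}{2})^2=\tfrac{1}{4}$ the inequality is still tight in that case and the argument goes through unchanged.
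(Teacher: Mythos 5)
Your proof is correct and is essentially identical to the paper's: both establish the pointwise bound $\onefunc(h'(\bx)\neq f(\bx)) \leq 4(f(\bx)-h(\bx))^2$ by noting that any disagreement forces squared error at least $\frac{1}{4}$, and then take expectations. Your explicit case analysis (including the boundary case $h(\bx)=\frac{1}{2}$ under the paper's convention $\sign(0)=0$) just fills in the detail the paper leaves implicit.
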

\begin{proof}
For every $\bx$ such that $h'(\bx) \neq f(\bx)$, we have $(h(\bx)-f(\bx))^2 \geq \frac{1}{4}$. Hence,
\[
\E_{\bx \sim \cd}\onefunc(h'(\bx) \neq f(\bx))
\leq \E_{\bx \sim \cd}4(h(\bx)-f(\bx))^2
\leq 4 \cdot \frac{\epsilon}{4}
= \epsilon~.
\]
\end{proof}

\begin{lemma}
\label{lemma:from square to 0-1 part 2}
Let $\ch' \subseteq \{0,1\}^{(\{0,1\}^n)}$ be a hypothesis class, and let $\ch \subseteq \reals^{(\{0,1\}^n)}$ be a hypothesis class such that $\ch' \subseteq \ch$.
If it is hard to learn $\ch'$ with respect to the 0-1 loss, then it is hard to learn $\ch$ with respect to the square loss.
Moreover, for every distribution $\cd$ on $\{0,1\}^n$, if it is hard to learn $\ch'$ on $\cd$ with respect to the 0-1 loss, then it is hard to learn $\ch$ on $\cd$ with respect to the square loss.
\end{lemma}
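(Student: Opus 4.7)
The plan is to prove the contrapositive by a direct reduction: any efficient square-loss learner for $\ch$ can be turned into an efficient 0-1 learner for $\ch'$. Suppose $\cl$ is an efficient algorithm that learns $\ch$ with respect to the square loss. Given parameters $\epsilon,\delta \in (0,1)$ and an oracle to a $\ch'$-realizable distribution $\cd$ (so $L_\cd(f)=0$ for some $f \in \ch'$), I would construct a 0-1 learner $\cl'$ for $\ch'$ as follows: run $\cl$ with accuracy parameter $\epsilon/4$ and confidence $\delta$, obtain a real-valued hypothesis $h:\{0,1\}^n \rightarrow \reals$, and then output the thresholded hypothesis $h'(\bx) = \sign\!\left(h(\bx)-\frac{1}{2}\right)$.

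The first step is to observe that since $\ch' \subseteq \ch$ and the distribution is $\ch'$-realizable, it is in particular $\ch$-realizable in the square-loss sense (the realizing $f \in \ch'$ lies in $\ch$, and $\E_{(\bx,y) \sim \cd}(f(\bx)-y)^2 = 0$ because $y = f(\bx)$ almost surely). Hence, with probability at least $1-\delta$, $\cl$ returns $h$ satisfying $\E_{\bx \sim \cd_\cx}(f(\bx)-h(\bx))^2 \leq \epsilon/4$. The second step is to invoke Lemma~\ref{lemma:from square to 0-1 part 1} with this $f$ and $h$ to deduce $\E_{\bx \sim \cd_\cx}\onefunc(h'(\bx) \neq f(\bx)) \leq \epsilon$, which is exactly the required 0-1 error bound on $h'$ with respect to the target $f$. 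The thresholding step is clearly efficient and preserves polynomial evaluation time, so $\cl'$ is an efficient 0-1 learner for $\ch'$, contradicting the assumed hardness.

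For the distribution-specific variant I would repeat the same reduction verbatim: if $\cl$ learns $\ch$ efficiently on a fixed marginal $\cd_\cx$ with respect to the square loss, then $\cl'$ learns $\ch'$ efficiently on the same marginal $\cd_\cx$ with respect to the 0-1 loss, since the reduction does not alter the input distribution at all; the oracle passed to $\cl$ is exactly the oracle received by $\cl'$.

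There is no real obstacle here; the only subtle point is verifying that a $\ch'$-realizable distribution remains realizable for $\ch$ under the square loss, which follows immediately because labels are $\{0,1\}$-valued and $f \in \ch' \subseteq \ch$ achieves square-loss zero. Everything else is a direct application of Lemma~\ref{lemma:from square to 0-1 part 1}.
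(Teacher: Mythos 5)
Your proposal is correct and follows essentially the same route as the paper's proof: take the contrapositive, run the square-loss learner with accuracy parameter $\epsilon/4$, threshold the output at $\frac{1}{2}$, and apply Lemma~\ref{lemma:from square to 0-1 part 1}, noting the argument carries over verbatim to the distribution-specific setting. Your extra remark on why $\ch'$-realizability implies square-loss realizability for $\ch$ is a detail the paper leaves implicit but is accurate.
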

\begin{proof}
Let $\epsilon,\delta \in (0,1)$.
Assume that there is an efficient algorithm $\cl$ that for every $f \in \ch$ and distribution $\cd$ on $\{0,1\}^n$, given access to examples $(\bx,f(\bx))$ where $\bx \sim \cd$, finds a hypothesis $h:\{0,1\}^n \rightarrow \reals$ such that with probability at least $1-\delta$ we have $\E_{\bx \sim \cd}(h(\bx)-f(\bx))^2 \leq \frac{\epsilon}{4}$.
Consider a learning algorithm $\cl'$, that given access to examples $(\bx,f'(\bx))$ where $\bx \sim \cd$ and $f' \in \ch'$, runs $\cl$, and returns a hypothesis $h':\{0,1\}^n \rightarrow \{0,1\}$ such that for every $\bx \in \reals^n$ we have $h'(\bx) = \sign\left(h(\bx)-\frac{1}{2}\right)$.
By Lemma~\ref{lemma:from square to 0-1 part 1}, we have $\E_{\bx \sim \cd}\onefunc(h'(\bx) \neq f(\bx)) \leq \epsilon$.
Therefore, $\ch'$ can be learned efficiently with respect to the 0-1 loss.
The same argument holds also for the case of distribution-specific learning.
\end{proof}

\subsubsection{Proof of (1)}

Note that in order to express a DNF formula with a depth-$2$ neural network, the network should have an activation function in the output neuron. Since, this is not allowed here, then the claim does not follow immediately from Lemma~\ref{lemma:DNF distribution free} and Lemma~\ref{lemma:from square to 0-1 part 2}.
Nevertheless, the proof follows similar ideas to the proof of Lemma~\ref{lemma:DNF distribution free}, with a few modifications as detailed below.

In the proof of Lemma~\ref{lemma:DNF distribution free}, we consider a sequence $\cs = ((\bz^{S_1},y_1),\ldots,(\bz^{S_{n^s}},y_{n^s}))$, and show that if $\cs$ is pseudorandom, namely, $y_i=P_\bx(\bz^{S_i})$ for all $i$, then for every hyperedge $S$ we have $P_\bx(\bz^S)=\psi_\bx(\bz^S)$, where $\psi_\bx$ is a DNF formula with at most $2^k$ terms. Then, we use the assumption that there is an efficient algorithm for learning DNF formulas with $\omega(1)$ terms, in order to obtain a hypothesis $h$ such that $\E_{\bz \sim \cd} \onefunc(h(\bz) \neq P_\bx(\bz)) \leq \frac{1}{10}$, and we use $h$ in order to obtain distinguishing advantage greater than $\frac{1}{3}$ and reach a contradiction.
Here, we will show that for every hyperedge $S$ we also have $P_\bx(\bz^S)=N_\bx(\bz^S)$, where $N_\bx$ is a depth-$2$ neural network with $2^k$ hidden neurons and no activation in the output neuron. Then, if we assume that there is an efficient algorithm for learning depth-$2$ neural networks with $\omega(1)$ hidden neurons and no activation in the output neuron, with respect to the square loss, then we can obtain a hypothesis $h$ with small error with respect to the square loss. By Lemma~\ref{lemma:from square to 0-1 part 1}, we can obtain a hypothesis $h'$ with small error with respect to the 0-1 loss. The arguments from the proof of Lemma~\ref{lemma:DNF distribution free} then imply that we can use $h'$ in order to obtain distinguishing advantage greater than $\frac{1}{3}$ and reach a contradiction.

We now construct the neural network $N_\bx$ such that for every hyperedge $S$ we have $P_\bx(\bz^S)=\psi_\bx(\bz^S)=N_\bx(\bz^S)$.
Note that $N_\bx$ should simulate $\psi_\bx$ only for inputs that encode hyperedges, and not for all $\bz \in \{0,1\}^{kn}$.
Each term $C_j$ in $\psi_\bx$ is a conjunction of positive literals. Let $I_j \subseteq [kn]$ be the indices of these literals.
Note that $C_j(\bz^{S})$ can be expressed by a single ReLU neuron that computes $\left[\left(\sum_{l \in I_j}z^{S}_l\right) - \left(|I_j|-1\right)\right]_+$.
Thus, our neural network $N_\bx$ includes a hidden neuron for every term in $\psi_\bx$.
By the construction in Lemma~\ref{lemma:from P to DNF}, each conjunction $C_j(\bz^{S})$ checks whether $\bx_{S}$ is the $j$-th satisfying assignment of the predicate $P$. Hence, it is not possible that more than one term in $\psi_\bx(\bz^{S})$ is satisfied. Therefore, the network $N_\bx$ computes $\psi_\bx(\bz^{S})$ by summing the outputs of the hidden neurons, and since this sum is in $\{0,1\}$ then an activation function is not required in the output neuron.

\subsubsection{Proof of (2) and (3)}

Implementing a depth-$d$ Boolean circuit with a depth-$d$ neural network is straightforward. Hence, the claims follow immediately from Theorems~\ref{thm:DNF} and~\ref{thm:circuit}, and from Lemma~\ref{lemma:from square to 0-1 part 2}.

\subsection{Proof of Theorem~\ref{thm:nn fixed k}}
\label{app:proof nn fixed k}

Note that in order to express an intersection of halfspaces with a depth-$2$ neural network, the network should have an activation function in the output neuron. Since, this is not allowed here, then the claim does not follow immediately from Theorem~\ref{thm:intersections} and Lemma~\ref{lemma:from square to 0-1 part 1}.
Nevertheless, the proof follows similar ideas to the proof of Theorem~\ref{thm:intersections}, with a few modifications as detailed below.

In the proof of Theorem~\ref{thm:intersections}, we consider a sequence $\cs = ((\tilde{\bz}^{S_1},y_1),\ldots,(\tilde{\bz}^{S_{n^{2.1\beta k}}},y_{n^{2.1\beta k}}))$, and show that if $\cs$ is pseudorandom, namely, $y_i=P_\bx(\tilde{\bz}^{S_i})$ for all $i$, then for every hyperedge $S$ we have $P_\bx(\tilde{\bz}^S)=g_\bx(\tilde{\bz}^S)$, where $g_\bx$ is an intersection of $k$ halfspaces. Then, we use the assumption that there is an efficient algorithm that learns an intersections of $k$ halfspaces and uses a sample of size $\tn^{\beta k}$, in order to obtain a hypothesis $h$ such that $\E_{\bz \sim \cd} \onefunc(h(\bz) \neq P_\bx(\bz)) \leq \frac{1}{10}$, and we use $h$ in order to obtain distinguishing advantage greater than $\frac{1}{3}$ and reach a contradiction.
Here, we will show that for every hyperedge $S$ we also have $P_\bx(\tilde{\bz}^S)=N_\bx(\tilde{\bz}^S)$, where $N_\bx$ is a depth-$2$ neural networks with $2k$ hidden neurons and no activation in the output neuron. Then, if we assume that there is an efficient algorithm that learns
such networks with respect to the square loss and uses a sample of size $\tn^{\beta k}$, then we can obtain a hypothesis $h$ with small error with respect to the square loss. By Lemma~\ref{lemma:from square to 0-1 part 1}, we can obtain a hypothesis $h'$ with small error with respect to the 0-1 loss. The arguments from the proof of Theorem~\ref{thm:intersections} then imply that we can use $h'$ in order to obtain distinguishing advantage greater than $\frac{1}{3}$ and reach a contradiction.

We now construct the neural network $N_\bx$ such that for every hyperedge $S$ we have $P_\bx(\tilde{\bz}^S)=g_\bx(\tilde{\bz}^S)=N_\bx(\tilde{\bz}^S)$.
Note that $N_\bx$ should simulate $g_\bx$ only for inputs that encode hyperedges, and not for all $\tilde{\bz} \in \{0,1\}^{\tn}$.
By Lemma~\ref{lemma:from P to intersection}, $g_\bx(\tilde{\bz}^S)=\bigwedge_{i \in [k]} \sign(g_i(\tilde{\bz}^S))$, and we show there that for every hyperedge $S$, there is at most one index $i$ with $\sign(g_i(\tilde{\bz}^S))=0$. Hence,
\[
g_\bx(\tilde{\bz}^S)
= \sum_{i \in [k]} \sign(g_i(\tilde{\bz}^S)) - \left(k-1\right)~.
\]
The network $N_\bx$ computes $\sign(g_i(\tilde{\bz}^S))$ for every $i \in [k]$ using a single nonlinear layer. Then, the computation of $g_\bx(\tilde{\bz}^S)$ does not require an activation function in the output neuron. Note that the output neuron does not have to include a bias term, since the additive term $-(k-1)$ can be implemented by adding a hidden neuron with fan-in $0$ and bias $k-1$, that is connected to the output neuron with weight $1$.

For every $i \in [k]$ and $\tilde{\bz} \in \{0,1\}^\tn$, the network $N_\bx$ computes $\sign(g_i(\tilde{\bz}))$ as follows.
We denote $g_i(\tilde{\bz}) = \inner{\bw_i,\tilde{\bz}}$.
Since $\inner{\bw_i,\tilde{\bz}}$ is an integer, we have
\[
\sign(g_i(\tilde{\bz})) =
\left[\inner{\bw_i,\tilde{\bz}} \right]_+ - \left[\inner{\bw_i,\tilde{\bz}} - 1 \right]_+~.
\]
Hence, computing $\sign(g_i(\tilde{\bz}^S))$ requires $2$ hidden neurons. Therefore, the network $N_\bx$ includes $2k$ hidden neurons.

\subsection{Proof of Theorem~\ref{thm:nn normal}}
\label{app:proof nn normal}

Let $\cd$ be the standard Gaussian distribution on $\reals^{n^{1+3/\epsilon}}$.
Assume that there is an efficient algorithm $\cl$ that learns depth-$3$ neural networks with $n^3$ hidden neurons on the distribution $\cd$.
Let $m(n)$ be a polynomial such that $\cl$ uses a sample of size at most $m(n)$ and returns with probability at least $\frac{3}{4}$ a hypothesis $h$ with error at most $\frac{1}{n}$.
Let $s>1$ be a constant such that $n^s \geq m(n)+n^3$ for every sufficiently large $n$. By Assumption~\ref{ass:localPRG}, there exists a constant $k$ and a predicate $P:\{0,1\}^k \rightarrow \{0,1\}$, such that $\cf_{P,n,n^s}$ is $\frac{1}{3}$-PRG.
We will show an algorithm $\ca$ with distinguishing advantage greater than $\frac{1}{3}$ and thus reach a contradiction.

For a hyperedge $S$ we denote by $\bz^S \in \{0,1\}^{kn}$ the encoding of $S$ that is defined in the proof of Theorem~\ref{thm:DNF}.
For $\bx \in \{0,1\}^n$, let $P_\bx:\{0,1\}^{kn} \rightarrow \{0,1\}$ be such that for every hyperedge $S$ we have $P_\bx(\bz^S) = P(\bx_{S})$.
We denote by $\cn(0,1)$ the standard univariate normal distribution. Let $c$ be a constant such that $\Pr_{t \sim \cn(0,1)}[t \leq c] = \frac{1}{n}$. Let $\mu$ be the density of $\cn(0,1)$, let $\mu_-(t) = n \cdot \onefunc(t \leq c) \cdot \mu(t)$, and let $\mu_+(t) = \frac{n}{n-1} \cdot \onefunc(t \geq c) \cdot \mu(t)$.
Let $\Psi:\reals^{kn} \rightarrow \{0,1\}^{kn}$ be a mapping such that for every $\bz' \in \reals^{kn}$ and $i \in [kn]$ we have $\Psi(\bz')_i = 1$ iff $z'_i \geq c$.
For $\tilde{\bz} \in \reals^{n^{1+3/\epsilon}}$ we denote $\tilde{\bz}_{[kn]}=(\tilde{z}_1,\ldots,\tilde{z}_{kn})$, namely, the first $kn$ component of $\tilde{\bz}$.

Let $N_1:\reals^{kn} \rightarrow [0,2^k]$ be a depth-$3$ neural network with at most $\frac{n^3}{3}$ hidden neurons (for a sufficiently large $n$), and no activation function in the output neuron, that satisfies the following property. Let $\bz' \in \reals^{kn}$ be such that $\Psi(\bz')=\bz^S$ for some hyperedge $S$, and assume that for every $i \in [kn]$ we have $z'_i \not \in (c,c+\frac{1}{n^2})$, then $N_1(\bz')=P_\bx(\bz^S)$. The construction of the network $N_1$ is given in Lemma~\ref{lemma:network N1}.
Let $N_2:\reals^{kn} \rightarrow \reals_+$ be a depth-$3$ neural network with at most $\frac{n^3}{3}$ hidden neurons (for a sufficiently large $n$), and no activation function in the output neuron, that satisfies the following property. Let $\bz' \in \reals^{kn}$ be such that for every $i \in [kn]$ we have $z'_i \not \in (c,c+\frac{1}{n^2})$. If $\Psi(\bz')$ is an encoding of a hyperedge then $N_2(\bz')=0$, and otherwise $N_2(\bz') \geq 2^k$. The construction of the network $N_2$ is given in Lemma~\ref{lemma:network N2}.
Let $N_3:\reals^{kn} \rightarrow \reals_+$ be a depth-$2$ neural network with at most $\frac{n^3}{3}$ hidden neurons (for a sufficiently large $n$), such that for $\bz' \in \reals^{kn}$ we have: If there exists $i \in [kn]$ such that $z'_i \in (c,c+\frac{1}{n^2})$ then $N_3(\bz') \geq 2^k$, and if for every $i \in [kn]$ we have $z'_i \not \in (c-\frac{1}{n^2},c+\frac{2}{n^2})$ then $N_3(\bz')=0$. The construction of the network $N_3$ is given in Lemma~\ref{lemma:network N3}.
Note that the network $N_1$ depends on $\bx$, and the networks $N_2,N_3$ are independent of $\bx$.
Let $N'$ be a depth-$3$ neural network such that for every $\bz' \in \reals^{kn}$ we have $N'(\bz') = [N_1(\bz') - N_2(\bz') - N_3(\bz')]_+$.
The network $N'$ has at most $n^3$ hidden neurons. We note that all weights in $N'$ are bounded by some $\poly(n)$ that is independent of $k$, namely, using weights of magnitude $n^k$ is not allowed. This is crucial since we need to show hardness of learning already where the weights of the network are bounded.
Let $\tilde{N}:\reals^{n^{1+3/\epsilon}} \rightarrow \reals$ be a depth-$3$ neural network such that $\tilde{N}(\tilde{\bz}) = N'(\tilde{\bz}_{[kn]})$.

Given a sequence $(S_1,y_1),\ldots,(S_{n^s},y_{n^s})$, where $S_1,\ldots,S_{n^s}$ are i.i.d. random hyperedges, the algorithm $\ca$ needs to distinguish whether $\by = (y_1,\ldots,y_{n^s})$ is random or that $\by = (P(\bx_{S_1}),\ldots,P(\bx_{S_{n^s}})) = (P_\bx(\bz^{S_1}),\ldots,P_\bx(\bz^{S_{n^s}}))$ for a random $\bx \in \{0,1\}^n$.
Let $\cs = ((\bz^{S_1},y_1),\ldots,(\bz^{S_{n^s}},y_{n^s}))$.

We use the efficient algorithm $\cl$ in order to obtain distinguishing advantage greater than $\frac{1}{3}$ as follows.
The algorithm $\ca$ runs $\cl$ with the following examples oracle.
In the $i$-th call, the oracle first draws $\bz \in \{0,1\}^{kn}$ such that each component is drawn i.i.d. from a Bernoulli distribution where the probability of $0$ is $\frac{1}{n}$. If $\bz$ is an encoding of a hyperedge then the oracle replaces $\bz$ with $\bz^{S_i}$. Then, the oracle chooses $\bz' \in \reals^{kn}$ such that for each component $j$, if $z_j \geq c$ then $z'_j$ is drawn from $\mu_+$, and otherwise $z'_j$ is drawn from $\mu_-$.
Let $\tilde{\bz} \in \reals^{n^{1+3/\epsilon}}$ be such that $\tilde{\bz}_{[kn]}=\bz'$, and the other $n^{1+3/\epsilon}-kn$ components of $\tilde{\bz}$ are drawn i.i.d. from $\cn(0,1)$.
Note that the vector $\tilde{\bz}$ has the distribution $\cd$, due to the definitions of the densities $\mu_+$ and $\mu_-$, and since replacing an encoding of a random hyperedge by an encoding of another random hyperedge does not change the distribution of $\bz$.
The oracle returns $(\tilde{\bz},\tilde{y})$, where the labels $\tilde{y}$ are chosen as follows:
\begin{itemize}
\item If $\Psi(\bz')$ is not an encoding of a hyperedge, then $\tilde{y}=0$.
\item If $\Psi(\bz')$ is an encoding of a hyperedge:
    \begin{itemize}
    \item If $\bz'$ does not have components in the interval $(c-\frac{1}{n^2},c+\frac{2}{n^2})$, then $\tilde{y} = y_i$.
    \item If $\bz'$ has a component in the interval $(c,c+\frac{1}{n^2})$, then $\tilde{y} = 0$.
    \item If $\bz'$ does not have components in the interval $(c,c+\frac{1}{n^2})$, but has a component in the interval $(c-\frac{1}{n^2},c+\frac{2}{n^2})$, then $\tilde{y} = [y_i - N_3(\bz')]_+$.
    \end{itemize}
\end{itemize}

Let $h$ be the hypothesis returned by $\cl$.
Recall that $\cl$ uses at most $m(n)$ examples, and hence $\cs$ contains at least $n^3$ examples that $\cl$ cannot view. We denote the indices of these examples by $I = \{m(n)+1,\ldots,m(n)+n^3\}$, and the examples by $\cs_I = \{(\bz^{S_i},y_i)\}_{i \in I}$. By $n^3$ additional calls to the oracle, the algorithm $\ca$ obtains the examples $\tilde{\cs}_I = \{(\tilde{\bz}_i,\tilde{y}_i)\}_{i \in I}$ that correspond to $\cs_I$.
Let $\ell_{I}(h)=\frac{1}{|I|}\sum_{i \in I}(h(\tilde{\bz}_i)-\tilde{y}_i)^2$.
Now, if $\ell_I(h) \leq \frac{2}{n}$, then $\ca$ returns $1$, and otherwise it returns $0$.
Clearly, the algorithm $\ca$ runs in polynomial time.
We now show that if $\cs$ is pseudorandom then $\ca$ returns $1$ with probability greater than $\frac{2}{3}$, and if $\cs$ is random then $\ca$ returns $1$ with probability less than $\frac{1}{3}$.

In Lemma~\ref{lemma:realizable by N}, we show that if $\cs$ is pseudorandom then the examples $(\tilde{\bz},\tilde{y})$ returned by the oracle are realized by $\tilde{N}$.
Hence, with probability at least $\frac{3}{4}$ the algorithm $\cl$ returns a hypothesis $h$ such that $\E_{\tilde{\bz} \sim \cd}(h(\tilde{\bz})-\tilde{N}(\tilde{\bz}))^2 \leq \frac{1}{n}$.
Therefore, $\E_{\tilde{\cs}_I}\ell_I(h) \leq \frac{1}{n}$.

Let $\tilde{\cz} \subseteq \reals^{n^{(1+3/\epsilon)}}$ be such that $\tilde{\bz} \in \tilde{\cz}$ if $\tilde{\bz}_{[kn]}$ does not have components in the interval $(c-\frac{1}{n^2},c+\frac{2}{n^2})$, and $\Psi(\tilde{\bz}_{[kn]})=\bz^{S}$ for a hyperedge $S$.
If $\cs$ is random, then for every $i$ such that $\tilde{\bz}_i \in \tilde{\cz}$, we have $\tilde{y}_i=1$ w.p. $\frac{1}{2}$ and $\tilde{y}_i=0$ otherwise. Also, by the definition of the oracle, $\tilde{y}_i$ is independent of $S_i$ and independent of the choice of the vector $\tilde{\bz}_i$ that corresponds to $\bz^{S_i}$.
Hence, for every $h$ and $i \in I$ we have
\begin{equation*}
\label{eq:nn-large error}
\Pr\left[(h(\tilde{\bz}_i)-\tilde{y}_i)^2 \geq \frac{1}{4}\right]
\geq \Pr\left[\left.(h(\tilde{\bz}_i)-\tilde{y}_i)^2 \geq \frac{1}{4} \; \right| \; \tilde{\bz}_i \in \tilde{\cz} \right] \cdot \Pr\left[\tilde{\bz}_i \in \tilde{\cz}\right]
\geq \frac{1}{2}  \cdot \Pr\left(\tilde{\bz}_i \in \tilde{\cz}\right)~.
\end{equation*}
In Lemma~\ref{lemma:prob z' good} we show that $\Pr\left[\tilde{\bz}_i \in \tilde{\cz}\right] \geq \frac{1}{2\log(n)}$.
Hence,
\[
\Pr\left[(h(\tilde{\bz}_i)-\tilde{y}_i)^2 \geq \frac{1}{4}\right]
\geq \frac{1}{4\log(n)}~.
\]
Thus,
\[
\E_{\tilde{\cs}_I}\ell_I(h) \geq \frac{1}{4} \cdot \frac{1}{4\log(n)} = \frac{1}{16\log(n)}~.
\]

By Lemma~\ref{lemma:hoefding1} (with $c=0$), we have for a sufficiently large $n$ that
\[
\Pr_{\tilde{\cs}_I}\left[\left|\ell_{I}(h) -  \E_{\tilde{\cs}_I}\ell_{I}(h)\right| \geq \frac{1}{n}\right]
< \frac{1}{20}~.
\]
Therefore, if $\cs$ is pseudorandom, then for a sufficiently large $n$, we have with probability at least $1-\left(\frac{1}{4} + \frac{1}{20}\right) = \frac{7}{10} > \frac{2}{3}$ that $\E_{\tilde{\cs}_I}\ell_{I}(h) \leq \frac{1}{n}$ and $\left|\ell_{I}(h) -  \E_{\tilde{\cs}_I}\ell_{I}(h)\right| < \frac{1}{n}$, and hence $\ell_{I}(h) \leq \frac{2}{n}$. Thus, the algorithm $\ca$ returns $1$ with probability greater than $\frac{2}{3}$.
If $\cs$ is random then $\E_{\tilde{\cs}}\ell_{I}(h) \geq \frac{1}{16\log(n)}$ and for a sufficiently large $n$ we have with probability at least $\frac{19}{20}$ that $\left|\ell_{I}(h) - \E_{\tilde{\cs}_I}\ell_{I}(h)\right| < \frac{1}{n}$. Hence, with probability greater than $\frac{2}{3}$ we have $\ell_{I}(h) > \frac{1}{16\log(n)} - \frac{1}{n} > \frac{2}{n}$ and the algorithm $\ca$ returns $0$.

Hence, it is hard to learn depth-$3$ neural networks with $n^3$ hidden neurons on the distribution $\cd$.
Thus, for $\tn=n^{1+3/\epsilon}$, we have that it is hard to learn depth-$3$ neural networks with $\tn^\epsilon = n^{(1+3/\epsilon) \cdot \epsilon} = n^{\epsilon + 3} \geq n^3$ hidden neurons on a standard Gaussian distribution over $\reals^\tn$.

\begin{lemma}
\label{lemma:network N1}
There exists a depth-$3$ neural network $N_1:\reals^{kn} \rightarrow [0,2^k]$ with at most $2kn+2^k$ hidden neurons and no activation function in the output neuron, that satisfies the following property. Let $\bz' \in \reals^{kn}$ be such that $\Psi(\bz')=\bz^S$ for some hyperedge $S$, and assume that for every $i \in [kn]$ we have $z'_i \not \in (c,c+\frac{1}{n^2})$, then $N_1(\bz')=P_\bx(\bz^S)$.
\end{lemma}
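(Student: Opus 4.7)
The plan is to build $N_1$ as a two-level composition of standard ReLU gadgets: a first hidden layer that recovers the Boolean vector $\bz = \Psi(\bz')$ from the real-valued input, a second hidden layer that evaluates the DNF $\psi_\bx$ promised by Lemma~\ref{lemma:from P to DNF}, and a linear output neuron that sums the term activations.

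For the first hidden layer, for each coordinate $i \in [kn]$ I would allocate two ReLU neurons computing
\[
u_i = [n^2(z'_i-c)]_+, \qquad v_i = [n^2(z'_i-c)-1]_+.
\]
A routine three-case analysis on whether $z'_i \le c$, $z'_i \in (c, c+1/n^2)$, or $z'_i \ge c+1/n^2$ shows that $u_i - v_i \in [0,1]$ in all cases, and that under the gap hypothesis (which rules out the middle case) $u_i - v_i = \Psi(\bz')_i \in \{0,1\}$ exactly. This layer has $2kn$ neurons.

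For the second hidden layer, I would invoke Lemma~\ref{lemma:from P to DNF} to obtain the DNF $\psi_\bx$ whose terms are indexed by the satisfying assignments $\cb$ of $P$, with $|\cb| \le 2^k$, each term being a conjunction of positive literals. Writing the $\bb$-th term as a conjunction over an index set $I_\bb \subseteq [kn]$, I would allocate one ReLU neuron per $\bb \in \cb$,
\[
c_\bb(\bz') = \Bigl[\sum_{i \in I_\bb}(u_i - v_i) - (|I_\bb|-1)\Bigr]_+,
\]
i.e.\ the standard ReLU-DNF conjunction gadget on the variables $u_i - v_i$. The subtractions $-v_i$ and the bias $-(|I_\bb|-1)$ enter as coefficients of the second-layer weight matrix and bias vector, which is immediate. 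Because $u_i - v_i \in [0,1]$, each $c_\bb(\bz')$ lies in $[0,1]$, so this layer contributes $|\cb| \le 2^k$ neurons and its outputs are bounded.

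Finally, I would take the unactivated output $N_1(\bz') = \sum_{\bb \in \cb} c_\bb(\bz')$. Under the lemma's hypotheses, the first layer computes $\bz_i \in \{0,1\}$ exactly; by the construction in Lemma~\ref{lemma:from P to DNF}, the term corresponding to $\bb$ equals $1$ iff $\bx_S = \bb$, and at most one such $\bb$ exists, so $N_1(\bz') = \psi_\bx(\bz^S) = P_\bx(\bz^S)$ as required. On arbitrary inputs the sum of $|\cb| \le 2^k$ values in $[0,1]$ stays in $[0,2^k]$, matching the stated range. The architecture has depth $3$ and $2kn + 2^k$ hidden neurons. The only real obstacle is justifying that the thresholding is \emph{exact}: this is precisely what the assumption $z'_i \notin (c, c+1/n^2)$ buys us, and motivates the companion networks $N_2, N_3$ used in the surrounding proof to handle inputs violating this gap.
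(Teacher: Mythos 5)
Your construction is correct and is essentially identical to the paper's: the paper likewise uses a first layer of $2kn$ ReLUs computing the ramp $f(t)=n^2([t-c]_+-[t-(c+\frac{1}{n^2})]_+)$ per coordinate (your $u_i-v_i$), followed by the at-most-$2^k$ ReLU conjunction gadgets of the network $N_\bx$ from the proof of Theorem~\ref{thm:nn discrete}, summed linearly at the output. The only cosmetic difference is that you rederive the conjunction gadgets explicitly rather than citing $N_\bx$.
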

\begin{proof}
Let $N_\bx$ be the depth-$2$ neural network from the proof of Theorem~\ref{thm:nn discrete} (part $1$). The network $N_\bx$ is such that for every hyperedge $S$, we have $N_\bx(\bz^S)=P_\bx(\bz^S)$.
Also, the network $N_\bx$ is such that for every $\bz \in \reals^{kn}$, we have
\[
N_\bx(\bz)
= \sum_{1 \leq j \leq J} \left[\left(\sum_{l \in I_j}z_l\right) - \left(|I_j|-1\right)\right]_+~,
\]
where $J \leq 2^k$, and $I_j \subseteq [kn]$. 
Therefore, for every $\bz \in [0,1]^{kn}$ we have $N_\bx(\bz) \in [0,2^k]$.

Next, we construct a depth-$2$ neural network $N_\Psi:\reals^{kn} \rightarrow [0,1]^{kn}$ with a single layer of non-linearity, such that for every $\bz' \in \reals^{kn}$ with $z'_i \not \in (c,c+\frac{1}{n^2})$ for every $i \in [kn]$, we have $N_\Psi(\bz') = \Psi(\bz')$.
The network $N_\Psi$ has $2kn$ hidden neurons, and computes $N_\Psi(\bz') = (f(z'_1),\ldots,f(z'_{kn}))$, where $f:\reals \rightarrow [0,1]$ is such that
\[
f(t) = n^2 \cdot \left(\left[t - c\right]_+ - \left[t - \left(c + \frac{1}{n^2}\right)\right]_+ \right)~.
\]
Note that if $t \leq c$ then $f(t)=0$, if $t \geq c+\frac{1}{n^2}$ then $f(t)=1$, and if $c < t <  c+\frac{1}{n^2}$ then $f(t) \in (0,1)$.

The network $N_1$ is obtained by combining the networks $N_\Psi$ and $N_\bx$. Note that $N_1$ has at most $2kn+2^k$ hidden neurons, and satisfies the requirements.
\end{proof}

\begin{lemma}
\label{lemma:network N2}
There exists a depth-$3$ neural network $N_2:\reals^{kn} \rightarrow \reals_+$ with at most $2kn + k \cdot \frac{n(n-1)}{2} + k + n \cdot \frac{k(k-1)}{2}$ hidden neurons, and no activation function in the output neuron, that satisfies the following property. Let $\bz' \in \reals^{kn}$ be such that for every $i \in [kn]$ we have $z'_i \not \in (c,c+\frac{1}{n^2})$. If $\Psi(\bz')$ is an encoding of a hyperedge then $N_2(\bz')=0$, and otherwise $N_2(\bz') \geq 2^k$.
\end{lemma}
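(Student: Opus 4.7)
The plan is to implement $N_2$ as a neural-network version of the DNF formula $\psi$ from the proof of Lemma~\ref{lemma:DNF distribution specific}, composed with the thresholding sub-network $N_\Psi$ from the proof of Lemma~\ref{lemma:network N1}. Concretely, the first non-linear layer is exactly $N_\Psi$, whose $2kn$ ReLU units turn $\bz'$ into the binary vector $\Psi(\bz') \in \{0,1\}^{kn}$ under the hypothesis $z'_i \notin (c, c + 1/n^2)$ for every $i$. The second non-linear layer consists of one ReLU neuron per term of $\psi$.

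Recall that $\psi$ from the proof of Lemma~\ref{lemma:DNF distribution specific} has (a) $k \cdot \frac{n(n-1)}{2}$ terms flagging a slice that contains two $0$-bits, (b) $k$ terms flagging a slice that is all $1$s, and (c) $n \cdot \frac{k(k-1)}{2}$ terms flagging two slices that share a $0$-position; each term is a conjunction of at most $n$ (possibly negated) literals, and $\psi(\bz)=1$ iff $\bz$ is not an encoding of a hyperedge. I would realize each such conjunction $\bigwedge_i \ell_i$ over the binary outputs of $N_\Psi$ by a single ReLU neuron $\bigl[\sum_i \ell_i - (|I_j|-1)\bigr]_+$, where each $\ell_i$ is either $z_{j,l}$ or $1-z_{j,l}$; since $\sum_i \ell_i$ is an affine function of the binary input, this is a valid ReLU neuron, and it outputs $1$ exactly when the conjunction is satisfied and $0$ otherwise. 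This produces a second hidden layer of exactly $k \cdot \frac{n(n-1)}{2} + k + n \cdot \frac{k(k-1)}{2}$ neurons, matching the stated bound.

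For the output I plan to use the simple linear combination $N_2(\bz') = 2^k \sum_j T_j$, with no activation, where $T_j \in \{0,1\}$ is the output of the $j$-th DNF neuron. When $\Psi(\bz')$ is a hyperedge encoding, every $T_j$ vanishes and $N_2(\bz') = 0$; when $\Psi(\bz')$ is not a hyperedge encoding, at least one $T_j$ fires, contributing $2^k$ to the sum, so $N_2(\bz') \geq 2^k$. The depth is $3$, the total number of hidden neurons is $2kn + k \cdot \frac{n(n-1)}{2} + k + n \cdot \frac{k(k-1)}{2}$, and no activation is used at the output, as required.

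The main point requiring care is the asymmetric output specification, namely $0$ on encodings versus $\geq 2^k$ off encodings, enforced through a linear (unclipped) output neuron. The weight $2^k$ guarantees the lower bound even when only one DNF term fires, and since all $T_j$ are nonnegative, simultaneous firings of several terms only increase the output and never cancel. Beyond this bookkeeping, the only other subtlety is ensuring that the linear combinations feeding the second layer (which are composed from the $2kn$ first-layer ReLU outputs that realize $N_\Psi$) are absorbed into a single set of weights; this is automatic since the composition of linear maps is linear and no additional non-linear layer is introduced.
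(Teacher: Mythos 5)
Your construction is correct and is essentially identical to the paper's: the paper likewise takes the first non-linear layer to be $N_\Psi$ (the $2kn$-neuron thresholding network from the proof of Lemma~\ref{lemma:network N1}), implements each term of the hyperedge-rejecting DNF from the proof of Lemma~\ref{lemma:DNF distribution specific} by a single ReLU neuron in the second layer, and outputs $2^k$ times the sum of these term-neurons with no output activation. The neuron count and the case analysis (all terms vanish on encodings; at least one term fires, contributing $2^k$, otherwise) match the paper's argument exactly.
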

\begin{proof}
By the proof of Lemma~\ref{lemma:DNF distribution specific}, there is a DNF formula $\varphi$ over $\{0,1\}^{kn}$ with $k \cdot \frac{n(n-1)}{2} + k + n \cdot \frac{k(k-1)}{2}$ terms such that $\varphi(\bz)=1$ iff $\bz$ is not an encoding of a hyperedge. Each term in $\varphi$ can be implemented by a single ReLU neuron. By summing the outputs of these neurons and multiplying by $2^k$ we obtain a depth-$2$ neural network $N_\varphi$, such that if $\bz \in \{0,1\}^{kn}$ is an encoding of a hyperedge then $N_\varphi(\bz)=0$, and otherwise $N_\varphi(\bz) \geq 2^k$. For every $\bz \in \reals^{kn}$ we have $N_\varphi(\bz) \geq 0$.

Let $N_\Psi:\reals^{kn} \rightarrow [0,1]^{kn}$ be the depth-$2$ neural network from the proof of Lemma~\ref{lemma:network N1}, with a single layer of non-linearity of $2kn$ hidden neurons, such that for every $\bz' \in \reals^{kn}$ with $z'_i \not \in (c,c+\frac{1}{n^2})$ for every $i \in [kn]$, we have $N_\Psi(\bz') = \Psi(\bz')$.
By combining $N_\Psi$ and $N_\varphi$ we obtain a depth-$3$ network $N_2$ with $2kn + k \cdot \frac{n(n-1)}{2} + k + n \cdot \frac{k(k-1)}{2}$ hidden neurons that satisfies the requirements.
\end{proof}

\begin{lemma}
\label{lemma:network N3}
There exists a depth-$2$ neural network $N_3:\reals^{kn} \rightarrow \reals_+$ with at most $4kn$ hidden neurons, such that for $\bz' \in \reals^{kn}$ we have: If there exists $i \in [kn]$ such that $z'_i \in (c,c+\frac{1}{n^2})$ then $N_3(\bz') \geq 2^k$, and if for every $i \in [kn]$ we have $z'_i \not \in (c-\frac{1}{n^2},c+\frac{2}{n^2})$ then $N_3(\bz')=0$.
\end{lemma}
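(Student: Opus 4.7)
The plan is to build $N_3$ by summing a coordinate-wise trapezoidal ``bump'' function, taking advantage of the fact that ReLU outputs are nonnegative so the sum inherits both the desired positivity and vanishing properties.

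Concretely, I would construct a one-dimensional bump $\phi:\reals \to \reals_+$ that is supported on $[c - 1/n^2, c + 2/n^2]$ and identically equal to $2^k$ on the central plateau $[c, c + 1/n^2]$, using exactly four ReLU neurons. The natural candidate is
\[
\phi(t) = 2^k n^2 \left( \left[t - (c - 1/n^2)\right]_+ - \left[t - c\right]_+ - \left[t - (c + 1/n^2)\right]_+ + \left[t - (c + 2/n^2)\right]_+\right),
\]
which is a piecewise linear trapezoid with slopes $0, +1, 0, -1, 0$ (before the $2^k n^2$ scaling) in the five intervals determined by the knots $c - 1/n^2 < c < c + 1/n^2 < c + 2/n^2$. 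A direct evaluation at the knots shows $\phi \equiv 0$ outside $[c - 1/n^2, c + 2/n^2]$ and $\phi \equiv 2^k$ on $[c, c + 1/n^2]$.

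I would then define
\[
N_3(\bz') = \sum_{i=1}^{kn} \phi(z'_i),
\]
which is exactly a depth-$2$ ReLU network with $4kn$ hidden neurons in a single layer, each hidden neuron implementing one shifted ReLU $[z'_i - \tau]_+$, and a linear output neuron with the weights $\pm 2^k n^2$ dictated by the formula above. Since $\phi \geq 0$ pointwise, if some coordinate $z'_i$ lies in $(c, c + 1/n^2)$ then $\phi(z'_i) = 2^k$ and all other terms are nonnegative, so $N_3(\bz') \geq 2^k$; conversely, if no coordinate lies in $(c - 1/n^2, c + 2/n^2)$ then every $\phi(z'_i)$ vanishes and $N_3(\bz') = 0$.

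There is no genuine obstacle here: the construction is just a standard ``hat function from four ReLUs'' applied per coordinate and then summed. The only thing worth double-checking is the neuron count (four per coordinate, for a total of $4kn$, matching the bound) and that no activation function is needed at the output — which is fine since we explicitly allow a linear output in a depth-$2$ network by summing the hidden units with signed weights.
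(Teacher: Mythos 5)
Your construction is correct and is essentially identical to the paper's: the paper also builds $N_3(\bz') = 2^k\sum_i m_i$ where each $m_i$ is exactly your four-ReLU trapezoid with knots $c-\frac{1}{n^2} < c < c+\frac{1}{n^2} < c+\frac{2}{n^2}$, giving $4kn$ hidden neurons and the same nonnegativity argument for the two required properties. No gaps.
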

\begin{proof}
We construct a depth-$2$ network $N_3:\reals^{kn} \rightarrow [0, 2^k \cdot kn]$ with $4kn$  hidden neurons, such that $N_3(\bz')=2^k \cdot \sum_{i \in [kn]}m_i$, where
\begin{itemize}
\item If $z'_i \in (c,c+\frac{1}{n^2})$ then $m_i = 1$.
\item If $z'_i \not \in (c-\frac{1}{n^2},c+\frac{2}{n^2})$ then $m_i = 0$.
\item If $z'_i \in (c-\frac{1}{n^2},c]$ then $m_i = \left(z'_i-c+\frac{1}{n^2}\right) \cdot n^2 \in [0,1]$.
\item If $z'_i \in [c+\frac{1}{n^2},c+\frac{2}{n^2})$ then $m_i = 1 - \left(z'_i - c - \frac{1}{n^2}\right) \cdot n^2 \in [0,1]$.
\end{itemize}

The construction now follows immediately from the fact that for every $i \in [kn]$ we have
\begin{align*}
m_i = &(n^2)\left( \left[z'_i - \left(c-\frac{1}{n^2}\right)\right]_+ - \left[z'_i-c\right]_+ \right) -
\\
&(n^2)\left( \left[z'_i-\left(c+\frac{1}{n^2}\right)\right]_+ - \left[z'_i-\left(c+\frac{2}{n^2}\right)\right]_+ \right)~.
\end{align*}
\end{proof}

\begin{lemma}
\label{lemma:realizable by N}
If $\cs$ is pseudorandom then the examples $(\tilde{\bz},\tilde{y})$ returned by the oracle are realized by $\tilde{N}$.
\end{lemma}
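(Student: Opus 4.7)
The plan is to do a case analysis over the branches by which the oracle assigned $\tilde{y}$, verifying in each case that
\[
\tilde{N}(\tilde{\bz}) = [N_1(\bz') - N_2(\bz') - N_3(\bz')]_+
\]
agrees with $\tilde{y}$, where $\bz' = \tilde{\bz}_{[kn]}$. The first observation I will use is that $\mu_+$ is supported on $\{t \geq c\}$ and $\mu_-$ on $\{t \leq c\}$, so $\Psi(\bz') = \bz$ almost surely, where $\bz$ is the Bernoulli vector held by the oracle \emph{after} the possible replacement step. In particular, $\Psi(\bz')$ is a hyperedge encoding exactly when the oracle performed the replacement, and in that case $\Psi(\bz') = \bz^{S_i}$. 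Throughout, I will exploit that $N_1(\bz') \in [0, 2^k]$, $N_2(\bz') \geq 0$, and $N_3(\bz') \geq 0$ pointwise.

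For the branches in which the oracle sets $\tilde{y} = 0$, the strategy is to force the bracketed expression to be nonpositive, using one of the two networks as a ``kill switch''. If some coordinate of $\bz'$ lies in $(c, c+\tfrac{1}{n^2})$, Lemma \ref{lemma:network N3} gives $N_3(\bz') \geq 2^k \geq N_1(\bz')$, so $N_1 - N_2 - N_3 \leq 0$ and the outer ReLU vanishes. Otherwise, if $\Psi(\bz')$ is not a hyperedge encoding, Lemma \ref{lemma:network N2} gives $N_2(\bz') \geq 2^k \geq N_1(\bz')$ and the same cancellation applies. Together these two situations cover every oracle branch that produces $\tilde{y} = 0$.

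For the remaining branches, $\Psi(\bz') = \bz^{S_i}$ is a hyperedge encoding and no coordinate of $\bz'$ lies in $(c, c+\tfrac{1}{n^2})$. Pseudorandomness of $\cs$ gives $y_i = P(\bx_{S_i}) = P_\bx(\bz^{S_i})$, while Lemmas \ref{lemma:network N1} and \ref{lemma:network N2} yield $N_1(\bz') = P_\bx(\bz^{S_i}) = y_i$ and $N_2(\bz') = 0$. If additionally no coordinate of $\bz'$ lies in $(c-\tfrac{1}{n^2}, c+\tfrac{2}{n^2})$, Lemma \ref{lemma:network N3} gives $N_3(\bz') = 0$, so $\tilde{N}(\tilde{\bz}) = [y_i]_+ = y_i = \tilde{y}$ (using $y_i \in \{0,1\}$). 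In the complementary subcase the oracle sets $\tilde{y} = [y_i - N_3(\bz')]_+$, which coincides with $[N_1 - N_2 - N_3]_+ = [y_i - 0 - N_3(\bz')]_+$ verbatim.

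I expect no real obstacle beyond bookkeeping: all the work has been pushed into the constructions of $N_1$, $N_2$, $N_3$. The pleasant conceptual point is that choosing $2^k$ as simultaneously the ceiling of $N_1$ and the saturation floor of $N_2$ (outside hyperedge encodings) and of $N_3$ (near the threshold) is exactly what lets a single outer ReLU combine the three subnetworks into a realization of the oracle's branching logic.
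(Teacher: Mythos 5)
Your proposal is correct and follows essentially the same argument as the paper: a case analysis over the oracle's branches, using $N_1(\bz')\in[0,2^k]$ together with the saturation guarantees $N_2(\bz')\ge 2^k$ (non-encodings) and $N_3(\bz')\ge 2^k$ (threshold-adjacent coordinates) to make the outer ReLU reproduce the oracle's labels. The only difference is cosmetic — you group cases by the value of $\tilde{y}$ rather than by whether $\Psi(\bz')$ encodes a hyperedge — and your explicit remark that $\Psi(\bz')$ almost surely recovers the post-replacement Bernoulli vector is a point the paper leaves implicit.
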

\begin{proof}
Let $\bz' = \tilde{\bz}_{[kn]}$. Thus, $\tilde{N}(\tilde{\bz}) = N'(\bz')$. We show that $\tilde{y} = N'(\bz')$.
\begin{itemize}
\item If $\Psi(\bz')$ is not an encoding of a hyperedge, then:
    \begin{itemize}
    \item If $\bz'$ does not have components in the interval $(c,c+\frac{1}{n^2})$, then $N_1(\bz') \in [0,2^k]$, $N_2(\bz') \geq 2^k$, and $N_3(\bz') \geq 0$. Therefore, $N'(\bz')=0=\tilde{y}$.
    \item If $\bz'$ has a component in the interval $(c,c+\frac{1}{n^2})$, then $N_1(\bz') \in [0,2^k]$, $N_2(\bz') \geq 0$, and $N_3(\bz') \geq 2^k$. Therefore, $N'(\bz')=0=\tilde{y}$.
    \end{itemize}
\item If $\Psi(\bz')$ is an encoding of a hyperedge $S$, then:
    \begin{itemize}
    \item If $\bz'$ does not have components in the interval $(c-\frac{1}{n^2},c+\frac{2}{n^2})$, then $N_1(\bz')=P_\bx(\bz^S)$, $N_2(\bz')=N_3(\bz')=0$. Therefore, $N'(\bz')=P_\bx(\bz^S)=\tilde{y}$.
    \item If $\bz'$ has a component in the interval $(c,c+\frac{1}{n^2})$, then $N_1(\bz') \in [0,2^k]$, $N_2(\bz') \geq 0$, and $N_3(\bz') \geq 2^k$. Therefore, $N'(\bz')=0=\tilde{y}$.
    \item If $\bz'$ does not have components in the interval $(c,c+\frac{1}{n^2})$, but has a component in the interval $(c-\frac{1}{n^2},c+\frac{2}{n^2})$, then $N_1(\bz')=P_\bx(\bz^S)$ and $N_2(\bz')=0$. Therefore, $N'(\bz')=[P_\bx(\bz^S)-N_3(\bz')]_+=\tilde{y}$.
    \end{itemize}
\end{itemize}
\end{proof}

\begin{lemma}
\label{lemma:prob z' good}
Let $\tilde{\bz} \in \reals^{n^{1+3/\epsilon}}$ be the vector returned by the oracle. We have
\[
\Pr\left[\tilde{\bz} \in \tilde{\cz}\right] \geq \frac{1}{2\log(n)}~.
\]
\end{lemma}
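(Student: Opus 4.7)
My plan is to decompose $\{\tilde{\bz}\in\tilde{\cz}\}$ as the intersection of two events and bound each separately. Let $E_1$ be the event that $\Psi(\bz')$ is an encoding of a hyperedge, and let $E_2$ be the event that no coordinate of $\bz'$ lies in the bad interval $(c-1/n^2,\,c+2/n^2)$. By the definition of $\tilde{\cz}$, we have $\{\tilde{\bz}\in\tilde{\cz}\}=E_1\cap E_2$, so inclusion-exclusion reduces the task to showing $\Pr[E_1]\geq 1/\log n$ and $\Pr[\bar{E}_2]=o(1/\log n)$; combining them yields the required $1/(2\log n)$ bound for all sufficiently large $n$.

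The bound on $\Pr[E_1]$ is essentially free. By construction of the oracle, $z_j=0$ forces $z'_j\sim\mu_-$, which is supported on $(-\infty,c]$, so $\Psi(\bz')_j=0=z_j$; symmetrically $\Psi(\bz')_j=1=z_j$ whenever $z_j=1$. Hence $\Psi(\bz')$ always equals the (possibly replaced) Bernoulli vector $\bz$, and this is a hyperedge encoding iff the originally sampled $\bz$ was one. The latter probability is exactly the quantity estimated in Eq.~\ref{eq:dnf-random encoding is hyperedge}, applied now with $kn$ coordinates and Bernoulli parameter $1/n$, which gives $\Pr[E_1]\geq 1/\log n$ for large $n$.

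The bound on $\Pr[\bar{E}_2]$ is the only mildly technical step. Conditionally on $\bz$, the coordinates $z'_j$ are independent, with $z'_j\sim\mu_-$ or $\mu_+$ according to $z_j$, and in either case
\[
\Pr\bigl[z'_j\in(c-\tfrac{1}{n^2},\,c+\tfrac{2}{n^2})\bigr] \leq O(\mu(c)/n),
\]
since the interval has length $O(1/n^2)$, $\mu_\pm$ are pointwise at most $\tfrac{n}{n-1}\mu \leq 2\mu$, and $\mu$ varies by only a $(1+o(1))$ factor on an interval of length $O(1/n^2)$ around $c$ (because $|c|/n^2=o(1)$). By Mills' ratio, the choice $c$ defined by $\Pr_{t\sim\cn(0,1)}[t\leq c]=1/n$ satisfies $\mu(c)=(|c|/n)(1+o(1))=O(\sqrt{\log n}/n)$, so each coordinate contributes $O(\sqrt{\log n}/n^2)$. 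A union bound over the $kn$ coordinates then yields $\Pr[\bar{E}_2]\leq O(k\sqrt{\log n}/n)=o(1/\log n)$, which suffices. The only place the argument can stumble is the Mills' ratio estimate: the crude bound $\mu(c)\leq 1/\sqrt{2\pi}$ would produce only $O(k)$ after the union bound, which is not enough; the sharper $O(\sqrt{\log n}/n)$ tail estimate closes the gap comfortably.
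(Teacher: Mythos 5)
Your argument reaches the right conclusion but takes a genuinely different route from the paper, and it contains one slip in a justification that happens not to propagate. The paper factors $\Pr[\tilde{\bz}\in\tilde{\cz}]=\Pr[E_2\mid E_1]\cdot\Pr[E_1]$ and therefore only needs the very weak bound $\Pr[\overline{E_2}\mid E_1]\le\frac12$: conditioned on $E_1$ there are exactly $k$ coordinates drawn from $\mu_-$ and $k(n-1)$ drawn from $\mu_+$, and bounding each density simply by its normalizing factor times $\sup_t\mu(t)$ already gives a conditional failure probability of $O(k/n)$, with no tail estimate on $\mu(c)$ whatsoever. You instead use $\Pr[E_1\cap E_2]\ge\Pr[E_1]-\Pr[\overline{E_2}]$, which forces you to prove the much stronger statement $\Pr[\overline{E_2}]=o(1/\log n)$, and you pay for it with the Mills'-ratio estimate $\mu(c)=O(\sqrt{\log n}/n)$. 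That estimate is correct and your chain of inequalities closes, so this is a legitimate alternative; it is just heavier than necessary (the extra precision buys you nothing here, since the bottleneck is the $1/\log n$ from $\Pr[E_1]$). The slip: you assert that $\mu_\pm$ are pointwise at most $\frac{n}{n-1}\mu\le 2\mu$, but $\mu_-(t)=n\cdot\onefunc(t\le c)\cdot\mu(t)$ equals $n\mu$, not at most $2\mu$, on its support. Fortunately the per-coordinate bound you actually carry forward, $O(\mu(c)/n)$, is exactly interval length $O(1/n^2)$ times the \emph{correct} density bound $n\mu(c)(1+o(1))$ (using that $\mu$ is increasing on $(-\infty,c]$ since $c<0$), so everything downstream is right; only the stated justification is wrong. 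Note that if the $2\mu$ bound were actually valid for $\mu_-$, your per-coordinate probability would be $O(\mu(c)/n^2)$ and the union bound would give $O(k\mu(c)/n)=o(1/\log n)$ even with the crude $\mu(c)=O(1)$, so the internal inconsistency is worth fixing before deciding whether Mills' ratio is really needed.
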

\begin{proof}
Let $\bz' = \tilde{\bz}_{[kn]}$. We have
\begin{align*}
\Pr\left[\tilde{\bz} \in \tilde{\cz}\right]
= &\Pr\left[ \bz' \text{ does not have components in } \left(c-\frac{1}{n^2},c+\frac{2}{n^2}\right) \;\middle|\; \Psi(\bz') \text{ represents a hyperedge} \right] \cdot
\\
&\Pr\left[\Psi(\bz') \text{ represents a hyperedge} \right]~.
\end{align*}
Let $\bz = \Psi(\bz')$. By the definition of the oracle, the probability that $\bz$ is an encoding of a hyperedge, equals to the probability that a random vector whose components are drawn i.i.d. from the Bernoulli distribution encodes a hyperedge.
In the proof of Lemma~\ref{lemma:DNF distribution specific}, we showed that the probability that such vector is an encoding of a hyperedge is at least $\frac{1}{\log(n)}$. Thus, it remains to show that
\[
\Pr\left[ \bz' \text{ does not have components in } \left(c-\frac{1}{n^2},c+\frac{2}{n^2}\right) \;\middle|\; \bz \text{ represents a hyperedge} \right] \geq \frac{1}{2}~.
\]

Note that the density $\mu_-$ is bounded by $\frac{n}{2\pi}$, and that $\mu_+$ is bounded by $\frac{n}{(n-1)2\pi}$.
Hence, for a sufficiently large $n$, we have
\begin{align*}
\Pr &\left[\bz' \text{ has a component in } \left(c-\frac{1}{n^2},c+\frac{2}{n^2}\right) \;\middle|\; \bz \text{ represents a hyperedge} \right]
\\
&\leq k \cdot \frac{1}{n^2} \cdot \frac{n}{2\pi} + (nk-k) \cdot \frac{2}{n^2} \cdot \frac{n}{(n-1)2\pi}
= \frac{k}{2\pi n} + \frac{k}{\pi n}
\leq \frac{1}{2}~.
\end{align*}
\end{proof}

\subsection{Proof of Theorem~\ref{thm:DFA}}
\label{app:proof DFA}

Let $\cd$ be the uniform distribution on $\{0,1\}^{n^{1+3/\epsilon}}$. Let $c' = c \left(1 + \frac{3}{\epsilon}\right)$.
Assume that there is an efficient algorithm $\cl$ that learns DFAs with $n^3$ states on the distribution $\cd$.
Let $m(n)$ be a polynomial such that $\cl$ uses a sample of size at most $m(n)$ and returns with probability at least $\frac{3}{4}$ a hypothesis $h$ with error at most $\frac{1}{2} - \frac{1}{n^{c'}}$.
Let $s>1$ be a constant such that $n^s \geq m(n)+n^{2c'+3}$ for every sufficiently large $n$. By Assumption~\ref{ass:localPRG}, there exists a constant $k$ and a predicate $P:\{0,1\}^k \rightarrow \{0,1\}$, such that $\cf_{P,n,n^s}$ is $\frac{1}{3}$-PRG.
We will show an algorithm $\ca$ with distinguishing advantage greater than $\frac{1}{3}$ and thus reach a contradiction.

For a hyperedge $S=(i_1,\ldots,i_k)$, we denote by $\bz^S \in \{0,1\}^{nk}$ an encoding of $S$, which consists of $n$ slices of size $k$, where the $j$-th bit in the $l$-th slice is $1$ iff $l=i_j$, namely, if the index $l$ is the $j$-th member in $S$. We call $\bz^S$ the {\em short encoding} of $S$.
For $\bz \in \{0,1\}^{nk}$, we index the coordinates by $[n] \times [k]$, thus $z_{l,j}=z_{(l-1)k+j}$.
For $\tilde{\bz} \in \{0,1\}^{nk \log(n)}$, we index the coordinates by $[n] \times [k] \times [\log(n)]$, thus, $\tilde{z}_{l,j,i} = \tilde{z}_{(l-1)(k\log(n))+(j-1)\log(n)+i}$.
Let $\Psi: \{0,1\}^{nk \log(n)} \rightarrow \{0,1\}^{nk}$ be a mapping, such that $\Psi(\tilde{\bz})_{l,j}=1$ iff $\tilde{z}_{l,j,i}=1$ for every $i \in [\log(n)]$.
If $\tilde{\bz} \in \{0,1\}^{nk \log(n)}$ is such that $\Psi(\tilde{\bz}) = \bz^S$ for a hyperedge $S$, then we say that $\tilde{\bz}$ is a {\em long encoding} of $S$.
Note that a hyperedge $S$ has a single short encoding $\bz^S$, but many long encodings, since every $0$-bit in $\bz^S$ can be represented in the long encoding by any vector in the set $B = \{0,1\}^{\log(n)} \setminus \{(1,\ldots,1)\}$. Hence, given $S$, a random long encoding of $S$ can be obtained by replacing every $1$-bit in $\bz^S$ by the size-$\log(n)$ vector $(1,\ldots,1)$, and replacing every $0$-bit by a random vector from $B$.

Let $\bz \in \{0,1\}^{c'\log^2(n) \cdot nk}$ be a vector that consists of $c'\log^2(n)$ slices of size $nk$. If $\bz$ has a size-$nk$ slice that is a short encoding of a hyperedge $S$, and all preceding size-$nk$ slices do not encode hyperedges, then we say that $\bz$ is a {\em multi-short encoding} of $S$. Note that if all $c'\log^2(n)$ slices do not encode hyperedges then $\bz$ is not a multi-short encoding of any hyperedge.
For $\bz \in \{0,1\}^{c'\log^2(n)nk}$, we index the coordinates by $[c'\log^2(n)] \times [n] \times [k]$, thus $z_{d,l,j}=z_{(d-1)nk+(l-1)k+j}$.
For $\tilde{\bz} \in \{0,1\}^{c'\log^2(n)nk \cdot \log(n)}$, we index the coordinates by $[c'\log^2(n)] \times [n] \times [k] \times [\log(n)]$, thus, $\tilde{z}_{d,l,j,i} = \tilde{z}_{(d-1)nk\log(n)+(l-1)(k\log(n))+(j-1)\log(n)+i}$.
Let $\Psi': \{0,1\}^{c'\log^2(n)nk\log(n)} \rightarrow \{0,1\}^{c'\log^2(n)nk}$ be a mapping, such that $\Psi'(\tilde{\bz})_{d,l,j}=1$ iff $\tilde{z}_{d,l,j,i}=1$ for every $i \in [\log(n)]$. Thus, $\Psi'(\tilde{\bz})$ is obtained by applying $\Psi$ to every size-$nk\log(n)$ slice in $\tilde{\bz}$.
If $\tilde{\bz} \in \{0,1\}^{c'\log^2(n)nk\log(n)}$ is such that $\Psi'(\tilde{\bz})$ is a multi-short encoding of a hyperedge $S$, then we say that $\tilde{\bz}$ is a {\em multi-long encoding} of $S$.
Note that a hyperedge $S$ has a single short encoding $\bz^S$, but many multi-short encodings. Also, each multi-short encoding corresponds to many multi-long encodings.
We say that $\tilde{\bz} \in \{0,1\}^{n^{1+3/\epsilon}}$ is an {\em extended multi-long encoding} of a hyperedge $S$, if $(\tilde{z}_1,\ldots,\tilde{z}_{c'\log^3(n)nk})$ is a multi-long encoding of $S$, namely, $\tilde{\bz}$ starts with a multi-long encoding of $S$. We assume that $n^{1+3/\epsilon} \geq c'\log^3(n)nk$.
For $\bx \in \{0,1\}^n$, let $P_\bx:\{0,1\}^{n^{1+3/\epsilon}} \rightarrow \{0,1\}$ be such that for every hyperedge $S$, if $\tilde{\bz}$ is an extended multi-long encoding of $S$, then $P_\bx(\tilde{\bz}) = P(\bx_{S})$.

Let $\tilde{\bz} \in \{0,1\}^{nk\log(n)}$ be a random vector drawn from the uniform distribution. The probability that $\tilde{\bz}$ is a long encoding of a hyperedge is
\begin{align*}
n \cdot (n-1) \cdot \ldots \cdot(n-k+1) \cdot &\left(\left(\frac{1}{2}\right)^{\log(n)}\right)^{k} \left(1-\left(\frac{1}{2}\right)^{\log(n)}\right)^{nk-k}
\geq \left(\frac{n-k}{n}\right)^k \left(1-\frac{1}{n}\right)^{k(n-1)}
\\
&=\left(1-\frac{k}{n}\right)^k \left(1-\frac{1}{n}\right)^{k(n-1)}~.
\end{align*}
Since for every $x \in (0,1)$ we have $e^{-x} < 1 - \frac{x}{2}$ then for a sufficiently large $n$ the above is at least
\begin{equation*}
\exp\left(-\frac{2k^2}{n}\right) \cdot \exp\left(-\frac{2k(n-1)}{n} \right)
\geq \exp\left(-1\right) \cdot \exp\left(-2k\right)
\geq \frac{1}{\log(n)}~.
\end{equation*}
Hence, the probability that $\tilde{\bz} \sim \cd$ is an extended multi-long encoding of a hyperedge is at least
\begin{equation}
\label{eq:DFA-random encoding is hyperedge}
1 - \left(1 - \frac{1}{\log(n)} \right)^{c' \log^2(n)}
\geq 1 - \exp\left(-\frac{1}{\log(n)} \cdot c' \log^2(n) \right)
\geq 1 - \exp\left(-c' \ln(n) \right)
= 1 - \frac{1}{n^{c'}}~.
\end{equation}

Given a sequence $\cs = (S_1,y_1),\ldots,(S_{n^s},y_{n^s})$, where $S_1,\ldots,S_{n^s}$ are i.i.d. random hyperedges, the algorithm $\ca$ needs to distinguish whether $\by = (y_1,\ldots,y_{n^s})$ is random or that $\by = (P(\bx_{S_1}),\ldots,P(\bx_{S_{n^s}}))$ for a random $\bx \in \{0,1\}^n$.
We use the efficient algorithm $\cl$ in order to obtain distinguishing advantage greater than $\frac{1}{3}$ as follows.
The algorithm $\ca$ runs $\cl$ with the following examples oracle.
In the $i$-th call to the oracle, it chooses $\tilde{\bz}_i \in \{0,1\}^{n^{1+3/\epsilon}}$ according to $\cd$.
If $\tilde{\bz}_i$ is not an extended multi-long encoding of a hyperedge (with probability at most $\frac{1}{n^{c'}}$, by Eq.~\ref{eq:DFA-random encoding is hyperedge}), then the oracle returns $(\bz'_i,y'_i)$ where $\bz'_i=\tilde{\bz}_i$ and $y'_i=0$. Otherwise, the oracle chooses a random long encoding $\tilde{\bz}^{S_i}$ of $S_i$, obtains $\bz'_i$ by replacing the first size-$nk\log(n)$ slice in $\tilde{\bz}_i$ that encodes a hyperedge with $\tilde{\bz}^{S_i}$, and returns $(\bz'_i,y'_i)$ where $y'_i=y_i$. Note that the vector $\bz'_i$ returned by the oracle has the distribution $\cd$, since replacing a random long encoding of a random hyperedge with a random long encoding of another random hyperedge does not change the distribution (see Lemma~\ref{lemma:DFA oracle distribution} for a more formal proof). Let $h$ be the hypothesis returned by $\cl$.
Recall that $\cl$ uses at most $m(n)$ examples, and hence $\cs$ contains at least $n^{2c'+3}$ examples that $\cl$ cannot view. We denote the indices of these examples by $I = \{m(n)+1,\ldots,m(n)+n^{2c'+3}\}$, and denote $\cs_I = \{(S_i,y_i)\}_{i \in I}$. By $n^{2c'+3}$ additional calls to the oracle, the algorithm $\ca$ obtains the examples $\cs'_I = \{(\bz'_i,y'_i)\}_{i \in I}$ that correspond to $\cs_I$.
Let $\ell_{I}(h)=\frac{1}{|I|}\sum_{i \in I}\onefunc(h(\bz'_i) \neq y'_i)$.
Now, if $\ell_I(h) \leq \frac{1}{2}-\frac{3}{4n^{c'}}$, then $\ca$ returns $1$, and otherwise it returns $0$.
Clearly, the algorithm $\ca$ runs in polynomial time.
We now show that if $\cs$ is pseudorandom then $\ca$ returns $1$ with probability greater than $\frac{2}{3}$, and if $\cs$ is random then $\ca$ returns $1$ with probability less than $\frac{1}{3}$.

If $\cs$ is pseudorandom, then by Lemma~\ref{lemma:from P to DFA}, the examples $(\bz'_i,y'_i)$ returned by the oracle satisfy $y'_i=A(\bz'_i)$, where $A$ is a DFA with at most $n^3$ states.
Indeed, if $\bz'_i$ is an extended multi-long encoding of a hyperedge $S_i$ then $y'_i=P(\bx_{S_i})=P_\bx(\bz'_i)=A(\bz'_i)$, and otherwise $y'_i=A(\bz'_i)=0$.
Hence, if $\cs$ is pseudorandom then with probability at least $\frac{3}{4}$ the algorithm $\cl$ returns a hypothesis $h$ such that $\E_{\tilde{\bz} \sim \cd}\onefunc(h(\tilde{\bz}) \neq A(\tilde{\bz})) \leq \frac{1}{2} - \frac{1}{n^{c'}}$.
Therefore, $\E_{\cs'_I}\ell_I(h) \leq \frac{1}{2} - \frac{1}{n^{c'}}$.

If $\cs$ is random, then for the indices $i$ such that $\bz'_i$ is an extended multi-long encoding of a hyperedge, the labels $y'_i$ are independent uniform Bernoulli random variables.
Hence, for every $h$ and $i \in I$ we have
\begin{align*}
\Pr\left[h(\bz'_i) \neq y'_i\right]
&\geq \Pr\left[h(\bz'_i) \neq y'_i \;|\; \bz'_i \text{ represents a hyperedge}\right] \cdot \Pr\left[\bz'_i \text{ represents a hyperedge}\right]
\\
&\stackrel{(Eq.~\ref{eq:DFA-random encoding is hyperedge})}{\geq} \frac{1}{2} \cdot \left(1 - \frac{1}{n^{c'}}\right)
= \frac{1}{2} - \frac{1}{2n^{c'}}~.
\end{align*}
Thus, $\E_{\cs'_I}\ell_I(h) \geq \frac{1}{2} - \frac{1}{2n^{c'}}$.

By Lemma~\ref{lemma:hoefding1}, for a sufficiently large $n$, we have
\[
\Pr_{\cs'_I}\left[\left|\ell_{I}(h) -  \E_{\cs'_I}\ell_{I}(h)\right| \geq \frac{1}{4n^{c'}}\right]
\leq \Pr_{\cs'_I}\left[\left|\ell_{I}(h) -  \E_{\cs'_I}\ell_{I}(h)\right| \geq \frac{1}{n^{c'+1}}\right]
< \frac{1}{20}~.
\]
Therefore, if $\cs$ is pseudorandom, then for a sufficiently large $n$, we have with probability at least $1-\left(\frac{1}{4} + \frac{1}{20}\right) = \frac{7}{10} > \frac{2}{3}$ that $\E_{\cs'_I}\ell_{I}(h) \leq \frac{1}{2} - \frac{1}{n^{c'}}$ and $\left|\ell_{I}(h) -  \E_{\cs'_I}\ell_{I}(h)\right| < \frac{1}{4n^{c'}}$, and hence $\ell_{I}(h) \leq \frac{1}{2} - \frac{3}{4n^{c'}}$. Thus, the algorithm $\ca$ returns $1$ with probability greater than $\frac{2}{3}$.
If $\cs$ is random then $\E_{\cs'}\ell_{I}(h) \geq \frac{1}{2} - \frac{1}{2n^{c'}}$ and for a sufficiently large $n$ we have with probability at least $\frac{19}{20}$ that $\left|\ell_{I}(h) - \E_{\cs'_I}\ell_{I}(h)\right| < \frac{1}{4n^{c'}}$. Hence, with probability greater than $\frac{2}{3}$ we have $\ell_{I}(h) > \frac{1}{2} - \frac{3}{4n^{c'}}$ and the algorithm $\ca$ returns $0$.

Hence, it is hard to learn DFAs with $n^3$ states and error at most $\frac{1}{2} - \frac{1}{n^{c'}}$, where the input distribution is $\cd$.
Thus, for $\tn=n^{1+3/\epsilon}$, we have that it is hard to learn DFAs with $\tn^\epsilon = n^{(1+3/\epsilon) \cdot \epsilon} = n^{\epsilon + 3} \geq n^3$ states and error at most $\frac{1}{2} - \frac{1}{\tn^c} = \frac{1}{2} - \frac{1}{n^{(1+3/\epsilon) \cdot c}} = \frac{1}{2} - \frac{1}{n^{c'}}$, on the uniform distribution over $\{0,1\}^\tn$.

\begin{lemma}
\label{lemma:DFA oracle distribution}
The distribution of an example $\bz' \in \{0,1\}^{n^{1+3/\epsilon}}$ returned by the oracle is $\cd$.
\end{lemma}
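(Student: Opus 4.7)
The plan is to show the two cases of the oracle's branching separately and then combine them. First, condition on the event $E$ that $\tilde{\bz}_i$ is an extended multi-long encoding of some hyperedge, and on the complement $\bar{E}$. On $\bar{E}$, the oracle outputs $\bz'_i = \tilde{\bz}_i$ verbatim, so the conditional distribution of $\bz'_i$ given $\bar{E}$ is trivially $\cd$ restricted to $\bar{E}$. The nontrivial case is on $E$, where the oracle identifies the first size-$nk\log(n)$ slice that is a long encoding of a hyperedge and overwrites it with a uniformly random long encoding $\tilde{\bz}^{S_i}$ of the fresh i.i.d.\ uniform hyperedge $S_i$.

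The main combinatorial step will be to verify that every hyperedge $S$ has exactly the same number of long encodings in $\{0,1\}^{nk\log(n)}$: each short encoding $\bz^S$ has exactly $k$ ones (which must be lifted to the all-ones vector $(1,\ldots,1) \in \{0,1\}^{\log(n)}$) and $nk-k$ zeros (each of which can be lifted to any of the $2^{\log(n)} - 1 = n-1$ vectors in $B$), so the number of long encodings of $S$ is $(n-1)^{nk-k}$, independent of $S$. Consequently, sampling $S \xleftarrow{R} \cg_{n,1,k}$ uniformly at random and then sampling uniformly a long encoding of $S$ yields exactly the uniform distribution on the set $U \subseteq \{0,1\}^{nk\log(n)}$ of all long encodings of hyperedges.

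Next I would observe that because $\tilde{\bz}_i \sim \cd$ makes the $c'\log^2(n)$ size-$nk\log(n)$ slices (and the trailing coordinates) independent and uniform, conditioning on $E$ and on the index $d$ of the first slice which is a long encoding of a hyperedge, the distribution decomposes as follows: the first $d-1$ slices are independent, each uniform on the complement of $U$; slice $d$ is uniform on $U$; slices $d+1,\ldots,c'\log^2(n)$ are unconditionally uniform on $\{0,1\}^{nk\log(n)}$; and the remaining $n^{1+3/\epsilon} - c'\log^3(n)nk$ coordinates are unconditionally uniform. The oracle only rewrites slice $d$, replacing its value with an independent fresh sample from $U$ (by the previous paragraph). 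Since the replacement has exactly the same conditional distribution as the original, and the remaining components are untouched, the joint distribution of $\bz'_i$ conditional on $E$ and on $d$ is unchanged. Averaging over $d$ and over the event $E$ and combining with the case $\bar{E}$ gives that $\bz'_i \sim \cd$.

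The main obstacle is not any single calculation but bookkeeping: one must be careful that $S_i$ is genuinely uniform and independent of the draw of $\tilde{\bz}_i$ (which holds because the sequence $S_1,\ldots,S_{n^s}$ is i.i.d.\ uniform and the oracle's internal randomness is independent of this sequence), and that the identified index $d$ is a function only of $\tilde{\bz}_i$ (not of the replacement), so the conditioning argument is valid. Once these are spelled out, the equality in distribution follows.
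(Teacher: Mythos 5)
Your proof is correct and follows essentially the same route as the paper's: both arguments reduce to the observation that overwriting the first slice that lies in the set $U$ of long encodings of hyperedges with an independent uniform element of $U$ (obtained as a uniform hyperedge followed by a uniform long encoding of it) leaves the distribution unchanged. The only difference is presentational --- the paper computes the point probability $\Pr[\bz'=\bz^0]$ directly, whereas you condition on the slice index $d$ and argue equality of conditional distributions --- and you make explicit the counting fact (each hyperedge has exactly $(n-1)^{nk-k}$ long encodings) that the paper's claim of uniformity on $B_0$ implicitly relies on.
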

\begin{proof}
Let $\tn = n^{1+3/\epsilon}$.
Recall that the oracle first chooses $\tilde{\bz} \in \{0,1\}^{\tn}$ according to $\cd$. If $\tilde{\bz}$ is not an extended multi-long encoding of a hyperedge then $\bz'=\tilde{\bz}$. Otherwise, the oracle chooses a random long encoding $\tilde{\bz}^{S}$ of a random hyperedge $S$, and obtains $\bz'$ by replacing the first size-$nk\log(n)$ slice in $\tilde{\bz}$ that encodes a hyperedge with $\tilde{\bz}^{S}$.

Let $\bz^0 \in \{0,1\}^{\tn}$. We show that the probability of $\bz'=\bz^0$ is $\frac{1}{2^\tn}$.
If $\bz^0$ is not an extended multi-long encoding of a hyperedge, then $\Pr[\bz'=\bz^0]=\Pr[\tilde{\bz}=\bz^0]=\frac{1}{2^\tn}$.
Assume that $\bz^0$ is an extended multi-long encoding of a hyperedge $S_0$, and the first size-$nk\log(n)$ slice in $\bz^0$ that encodes a hyperedge is the $d$-th slice, for some $d \in [c'\log^2(n)]$. Thus, $z^0_{(d-1)nk\log(n)+1},\ldots,z^0_{dnk\log(n)}$ is a long encoding of $S_0$.
Let $B_0 \subseteq \{0,1\}^{\tn}$ be the set of all extended multi-long encodings
that can be obtained from $\bz^0$ by replacing the $d$-th size-$nk\log(n)$ slice with some long encoding of some hyperedge.
Note that $\bz'=\bz^0$ iff the oracle chooses $\tilde{\bz} \in B_0$, and then chooses $S=S_0$, and then chooses the long encoding $\tilde{\bz}^S=z^0_{(d-1)nk\log(n)+1},\ldots,z^0_{dnk\log(n)}$.
For every $\bz \in B_0$, by replacing the $d$-th size-$nk\log(n)$ slice with a random long encoding of a random hyperedge, we obtain a random (uniformly distributed) vector in $B_0$.
Hence, $\bz'=\bz^0$ iff we have: (1) the oracle first chooses $\tilde{\bz} \in B_0$, (2) the oracle chooses $\bz^0$ as the random vector in $B_0$.
Therefore, we have
\[
\Pr\left[\bz'=\bz^0\right] = \frac{|B_0|}{2^\tn} \cdot \frac{1}{|B_0|} = \frac{1}{2^\tn}~.
\]
\end{proof}

\begin{lemma}
\label{lemma:DFA chekcs encoding}
For a sufficiently large $n$, there exists a DFA $A_E$ with at most $\log(n)$ states such that $A_E$ accepts a word $\bz \in \{0,1\}^{nk}$ iff $\bz$ is a short encoding of a hyperedge.
\end{lemma}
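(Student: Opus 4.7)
The plan is to build $A_E$ as a small product automaton that verifies on the fly that the $n$ consecutive size-$k$ slices of the input jointly form the short encoding of some hyperedge, and then to observe that its state space depends only on the constant $k$ and is therefore bounded by $\log(n)$ once $n$ is large enough. The construction is routine; the only genuinely subtle point will be keeping the state count to a constant (independent of $n$) while still enforcing the global constraint that every column index in $[k]$ is used exactly once.

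First I would reformulate the condition "$\bz$ is a short encoding of some hyperedge" into three syntactic requirements on its $n$ slices of size $k$: (i) every slice contains at most one $1$-bit; (ii) the "column" positions $j \in [k]$ at which the non-zero slices carry their unique $1$-bit are pairwise distinct; (iii) every column $j \in [k]$ is used. Conditions (i)--(iii) jointly force exactly $k$ non-zero slices, say at slice indices $l_1,\ldots,l_k$, and then the hyperedge $S=(i_1,\ldots,i_k)$ with $i_j := l_j$ is well-defined and satisfies $\bz = \bz^S$.

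Next I would exhibit a DFA with states $(p,s,U)$, where $p \in \{0,1,\ldots,k-1\}$ is the position within the current slice, $s \in \{\mathrm{empty}\} \cup [k] \cup \{\mathrm{bad}\}$ records the current slice's status so far (no $1$ yet, a unique $1$ in column $s$, or already at least two $1$'s), and $U \subseteq [k]$ is the set of columns already claimed by completed slices, together with one absorbing reject sink. On reading a bit the DFA advances $p$ and updates $s$ in the obvious way; whenever $p$ wraps back to $0$ it also "commits" the completed slice: it routes to the reject sink if $s = \mathrm{bad}$ or $s \in U$, and otherwise sets $U := U \cup \{s\}$ when $s \in [k]$ (and does nothing when $s = \mathrm{empty}$), before resetting $s := \mathrm{empty}$. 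The unique accepting state is $(0,\mathrm{empty},[k])$, which after exactly $nk$ symbols is reached iff all three conditions above hold on the input.

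The total number of states is at most $k \cdot (k+2) \cdot 2^k + 1$, a constant depending only on $k$. Since $k$ is fixed by Assumption~\ref{ass:localPRG}, for every sufficiently large $n$ this constant is at most $\log(n)$, which is precisely the claimed bound. The commits happen at slice boundaries automatically, since $p$ advances modulo $k$, so no further bookkeeping is required.
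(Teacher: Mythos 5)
Your proposal is correct and follows essentially the same route as the paper: both characterize short encodings by the same three slice conditions and build a product automaton tracking (position within slice, per-slice status, set of used columns in $[k]$) plus a reject sink, yielding a state count that depends only on the constant $k$ and is therefore at most $\log(n)$ for large $n$. The only cosmetic difference is that the paper updates the set of used columns immediately upon reading a $1$-bit, while you defer the commit to the slice boundary; both are valid.
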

\begin{proof}
A word $\bz \in \{0,1\}^{nk}$ is a short encoding of a hyperedge iff the following conditions hold:
\begin{itemize}
\item Every size-$k$ slice in $\bz$ includes at most one $1$-bit.
\item There are no two size-$k$ slices in $\bz$ that have $1$-bit in the same index (and thus correspond to the same member in the hyperedge).
\item For every $j \in [k]$ there is a size-$k$ slice in $\bz$ with $1$-bit in index $j$.
\end{itemize}

We construct a DFA  $A_E = \tup{\Sigma,Q,q_0,\delta,F}$ that checks these conditions. We have $\Sigma=\{0,1\}$, $Q = \{q_{\text{rej}}\} \cup ([k] \times \{0,1\} \times 2^{[k]})$, $q_0 = (1,0,\emptyset)$,
and $F = \{(1,0,[k])\}$. Note that $Q$ is of size at most $\log(n)$ (for a sufficiently large $n$).
The states in $Q$ are such that the first component keeps the current location in the size-$k$ slice, the second component keeps whether a $1$-bit already appeared in the current slice, and the third component keeps the subset of indices in $[k]$ that are already occupied.
For $i \in [k-1]$, $b \in \{0,1\}$ and $I \subseteq [k]$, we have
\begin{itemize}
    \item $\delta((i,b,I),0) = (i+1,b,I)$.
    \item $\delta((k,b,I),0) = (1,0,I)$.
    \item $\delta((i,1,I),1) = \delta((k,1,I),1) = q_{\text{rej}}$.
    \item $\delta((i,0,I),1) = (i+1,1,I \cup \{i\})$ if $i \not \in I$, and $\delta((i,0,I),1) = q_{\text{rej}}$ otherwise.
    \item $\delta((k,0,I),1) = (1,0,I \cup \{k\})$ if $k \not \in I$, and $\delta((k,0,I),1) = q_{\text{rej}}$ otherwise.
    \item $\delta(q_{\text{rej}},0) = \delta(q_{\text{rej}},1) = q_{\text{rej}}$.
\end{itemize}
\end{proof}

\begin{lemma}
\label{lemma:DFA chekcs P}
For every $\bx \in \{0,1\}^n$ and a sufficiently large $n$, there is a DFA $A_P$ with at most $n\log(n)$ states such that $A_P$ accepts a short encoding of a hyperedge $S$ iff $P(\bx_S)=1$.
\end{lemma}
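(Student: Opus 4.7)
The plan is to build $A_P$ so that, as it reads the short encoding of $S=(i_1,\ldots,i_k)$ slice by slice, it gradually reconstructs the substring $\bx_S=(x_{i_1},\ldots,x_{i_k})$ and, once the input ends, checks whether $P(\bx_S)=1$. The key observation is that in a valid short encoding, the $l$-th slice contains a single $1$-bit at position $j$ iff $l=i_j$, so encountering such a $1$-bit immediately reveals the value $x_{i_j}=x_l$, which is known to $A_P$ because $\bx$ is hard-wired into its transition function.

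First I would take the state set to be $Q=([n]\times[k]\times\{0,1,\bot\}^k)\cup\{q_{\mathrm{acc}},q_{\mathrm{rej}}\}$, where a state $(l,j,\bb)$ means the DFA is about to read the bit $z_{l,j}$, and the partial vector $\bb\in\{0,1,\bot\}^k$ records, for each coordinate $j'\in[k]$, either the value already learned for $x_{i_{j'}}$ or the symbol $\bot$ if it has not yet been encountered. I would set $q_0=(1,1,(\bot,\ldots,\bot))$ and $F=\{q_{\mathrm{acc}}\}$.

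Next I would specify the transitions. From $(l,j,\bb)$ on input bit $b\in\{0,1\}$, form $\bb'$ by setting $b'_j:=x_l$ when $b=1$ and leaving $\bb'=\bb$ otherwise; then advance to $(l,j+1,\bb')$ when $j<k$, to $(l+1,1,\bb')$ when $j=k$ and $l<n$, and on the last input position (when $j=k$ and $l=n$) transition to $q_{\mathrm{acc}}$ if $\bb'\in\{0,1\}^k$ with $P(\bb')=1$, and to $q_{\mathrm{rej}}$ otherwise. The sink states $q_{\mathrm{acc}},q_{\mathrm{rej}}$ loop to themselves on every symbol. For correctness, I would observe that on any short encoding of a hyperedge $S$, every slice contributes either zero or one update, exactly the $k$ slices indexed by $S$ contribute an update, and each correctly records $b_j:=x_{i_j}$; hence after all $nk$ bits are processed the stored vector equals $\bx_S$, and $A_P$ accepts iff $P(\bx_S)=1$.

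Finally, the state count is $|Q|=nk\cdot 3^k+2$. Since $k$ is a constant independent of $n$ (determined by the local predicate $P$), for a sufficiently large $n$ we have $nk\cdot 3^k+2\leq n\log(n)$, as required. There is no serious obstacle: the construction is a routine product-state simulation, and the only point that demands a brief check is that tracking a partial assignment over $k$ coordinates fits inside the $n\log(n)$ state budget, which succeeds precisely because $k$, and hence $3^k$, is constant in $n$.
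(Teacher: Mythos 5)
Your construction is correct and matches the paper's proof essentially exactly: the paper also uses states of the form $[n]\times[k]\times\{0,1,\_\}^k$ that track the current position in the short encoding together with a partial record of $\bx_S$, updates the record with $x_l$ upon reading a $1$-bit, and accepts according to $P$ on the completed vector, with the same $O(nk\cdot 3^k)\le n\log(n)$ state count. The only cosmetic difference is that you route the final check through explicit sink states $q_{\mathrm{acc}},q_{\mathrm{rej}}$, whereas the paper simply declares the terminal states $(n,k,\bb)$ with $P(\bb)=1$ to be accepting.
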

\begin{proof}
Let $\bz^S$ be a short encoding of a hyperedge $S$.
We construct $A_P = \tup{\Sigma,Q,q_0,\delta,F}$ that accepts $\bz^S$ iff $P(\bx_S)=1$.
Let $\Sigma=\{0,1\}$, let $B = \{0,1,\_\}^k$ and $Q=\{q_0\} \cup ([n] \times [k] \times B)$, and let $F = \{n\} \times \{k\} \times \{\bb \in \{0,1\}^k: P(\bb)=1\}$.
Note that $Q$ is of size at most $n\log(n)$ (for a sufficiently large $n$).
The states in $Q$ are such that the first two components keep the current location in the short encoding, and the third component keeps the information on $\bx_S$.
The transitions are
\begin{itemize}
    \item $\delta(q_0,0) = (1,1,(\_,\ldots,\_))$.
    \item $\delta(q_0,1) = (1,1,(x_1,\_,\ldots,\_))$.
    \item For $i \in [n]$, $j \in [k-1]$, $\bb \in B$ we have:
    \begin{itemize}
        \item $\delta((i,j,\bb),0)=(i,j+1,\bb)$.
        \item $\delta((i,j,\bb),1)=(i,j+1,(b_1,\ldots,b_j,x_i,b_{j+2},\ldots,b_k))$.
        \item $\delta((i,k,\bb),0)=((i \mod n)+1,1,\bb)$.
        \item $\delta((i,k,\bb),1)=((i \mod n)+1,1,(x_{(i \mod n)+1},b_2,\ldots,b_k))$.
    \end{itemize}
\end{itemize}
\end{proof}

\begin{lemma}
\label{lemma:from P to DFA}
For every $\bx \in \{0,1\}^n$ and a sufficiently large $n$, there is a function $f:\{0,1\}^{n^{1+3/\epsilon}} \rightarrow \{0,1\}$ that can be expressed by a DFA with at most $n^3$ states, such that:
\begin{itemize}
\item For every hyperedge $S$ and every extended multi-long encoding $\tilde{\bz}$ of $S$, we have $f(\tilde{\bz}) = P_\bx(\tilde{\bz})$.
\item For every $\tilde{\bz} \in n^{1+3/\epsilon}$ that is not an extended multi-long encoding of a hyperedge, we have $f(\tilde{\bz}) = 0$.
\end{itemize}
\end{lemma}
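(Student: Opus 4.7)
The plan is to combine the automata $A_E$ and $A_P$ from Lemmas~\ref{lemma:DFA chekcs encoding} and~\ref{lemma:DFA chekcs P} into a single DFA that processes the input bit-by-bit, simulating the mapping $\Psi'$ on-the-fly. Specifically, I will build a DFA $A$ whose state encodes the following information: (i) a slice counter $d \in \{1,\ldots,c'\log^2(n)\}$ tracking which size-$nk\log(n)$ block of the input we are currently reading; (ii) a within-slice position $p \in \{0,\ldots,nk\log(n)-1\}$; (iii) a flag $a\in\{0,1\}$ recording the running AND of the $\log(n)$ bits of the current chunk (which is what $\Psi$ does on each block of $\log(n)$ bits to produce one bit of the short encoding); (iv) the current state $q_E$ of $A_E$; (v) the current state $q_P$ of $A_P$; and (vi) a \emph{decision mode} $\in\{\text{searching},\text{locked-accept},\text{locked-reject}\}$.

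The transition function operates as follows. While in \emph{searching} mode and inside the first $c'\log^3(n)nk$ bits, each input bit first updates $a$ by conjunction. Once $p$ reaches a multiple of $\log(n)$ (i.e., a chunk is complete), the DFA virtually feeds the bit $a$ into both $A_E$ and $A_P$, updating $q_E$ and $q_P$ accordingly, and resets $a$ to $1$. When $p$ reaches $nk\log(n)$, a full slice has been processed: if $q_E$ is the accept state of $A_E$, then $\Psi(\text{slice})$ is a short encoding of some hyperedge $S$, and $A_P$'s state determines $P(\bx_S)$, so we switch to \emph{locked-accept} if $q_P \in F_{A_P}$ and to \emph{locked-reject} otherwise; if $q_E$ is not accepting, we reset $q_E,q_P,a$ to their initial values, increment $d$, and continue. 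If we reach $d = c'\log^2(n)$ and still never switched to a locked mode, we enter \emph{locked-reject}. Once in a locked mode, the DFA simply consumes the remaining bits (including the tail beyond the first $c'\log^3(n)nk$ bits) without changing its accept/reject status. The accepting states of $A$ are exactly those with decision mode \emph{locked-accept}.

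It is straightforward to verify correctness by unfolding definitions: if $\tilde{\bz}$ is an extended multi-long encoding of a hyperedge $S$, then the first size-$nk\log(n)$ block whose image under $\Psi$ is a valid short encoding corresponds to $S$ (there may be earlier blocks whose $\Psi$-image happens to encode some other hyperedge, but by the definition of multi-short encoding the hyperedge we lock in on is the one $S_0$ determined by the first valid block; here one must observe that the construction makes the DFA sensitive to the \emph{first} valid block, matching the definition of multi-short encoding), so $A$ outputs $P(\bx_{S_0})=P_\bx(\tilde{\bz})$. If $\tilde{\bz}$ is not an extended multi-long encoding of any hyperedge, then no block gives a valid short encoding, the DFA ends in \emph{locked-reject}, and $A$ outputs $0$, as required.

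The main technical point is the state count: the state space has size at most
\[
c'\log^2(n) \cdot nk\log(n) \cdot 2 \cdot |A_E| \cdot |A_P| \cdot 3 \;\leq\; 6c' k \cdot n^2 \cdot \log^5(n),
\]
using $|A_E|\leq \log(n)$ and $|A_P|\leq n\log(n)$. Since $k$ and $c'$ are constants and $\log^5(n)=o(n)$, this is at most $n^3$ for sufficiently large $n$. The only genuinely delicate step is handling the ``first valid block'' convention correctly, so that the lock-in on a valid slice precisely matches the definition of an extended multi-long encoding of $S$; all other components are bookkeeping.
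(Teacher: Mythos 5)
Your construction is correct and is essentially the paper's own proof: the paper likewise runs $A_E$ and $A_P$ in parallel slice by slice, locks in on the first size-$nk$ slice accepted by $A_E$, and simulates $\Psi$ by AND-ing each $\log(n)$-bit chunk (there via a $2\log(n)$-state gadget substituted into each state of an intermediate automaton $A'$ over the short encodings, rather than your single flat product), arriving at the same $n^2\polylog(n)\leq n^3$ state bound. Your handling of the ``first valid slice'' convention matches the paper's definition of a multi-short encoding, so there is no gap.
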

\begin{proof}
Let $d \geq c'\log^2(n) \cdot nk$. We first construct a DFA $A'$ such that for every $\bz \in \{0,1\}^d$ we have: If $\bz$ starts with a multi-short encoding of a hyperedge $S$ then $A'$ accepts $\bz$ iff $P(\bx_{S})=1$, and if $\bz$ does not start with a multi-short encoding of a hyperedge then $A'$ rejects $\bz$.
Let $A_E$ and $A_P$ be the DFAs from Lemmas~\ref{lemma:DFA chekcs encoding} and~\ref{lemma:DFA chekcs P}. Thus, $A_E$ checks whether a word is a short encoding of a hyperedge, and $A_P$ checks whether a short encoding $\bz^S$ is such that $P(\bx_S)=1$.
The DFA $A'$ runs $A_E$ and $A_P$ in parallel on the first size-$nk$ slice. If both $A_E$ and $A_P$ accept then $A'$ accepts, if $A_E$ accepts and $A_P$ rejects then $A'$ rejects, and if $A_E$ rejects then $A'$ continues to the next size-$nk$ slice in a similar manner. Also, $A'$ keeps a counter and stops after $c'\log^2(n)$ slices.
Constructing such a DFA is straightforward. Moreover, since $A_E$ has at most $\log(n)$ states and $A_P$ has at most $n\log(n)$ states, then $A'$ has at most $\log(n) \cdot n\log(n) \cdot (c'\log^2(n)nk+1) \leq n^2\log^5(n)$ states (for a sufficiently large $n$).

Next, we construct a DFA $A$ such that for every $\tilde{\bz} \in \{0,1\}^{n^{1+3/\epsilon}}$ we have: If $\tilde{\bz}$ starts with a multi-long encoding of a hyperedge $S$ then $A$ accepts $\tilde{\bz}$ iff $P(\bx_{S})=1$, and if $\tilde{\bz}$ does not start with a multi-long encoding of a hyperedge then $A$ rejects $\tilde{\bz}$.
Note that such a DFA $A$ satisfies the lemma's requirements.
The DFA $A$ is obtained from $A'$ by replacing each state $q$ in $A'$ by the DFA $A^q=\tup{\Sigma,Q^q,q,\delta^q,F^q}$ such that $Q^q = \{q\} \cup ([\log(n)-1] \times \{0,1\})$, $\delta^q(q,0) = (1,0)$, $\delta^q(q,1) = (1,1)$, and for every $i \in [\log(n)-2]$ we have $\delta^q((i,1),1)=(i+1,1)$, and $\delta^q((i,0),0)=\delta^q((i,0),1)=\delta^q((i,1),0)=(i+1,0)$. Then, for the transitions $\delta'(q,0)=q'$ and $\delta'(q,1)=q''$ in the DFA $A'$, the DFA $A$ includes the appropriate transitions from the states $(\log(n)-1,0)$ and $(\log(n)-1,1)$ of $A^q$, namely, $\delta((\log(n)-1,1),0)=\delta((\log(n)-1,0),0)=\delta((\log(n)-1,0),1)=q'$ and $\delta((\log(n)-1,1),1)=q''$. Also, if $q$ is an accepting state in $A'$ then we set $F^q=\{q\}$ and otherwise $F^q=\emptyset$. Thus, $A'$ and $A$ have the same accepting states. Note that $A$ has at most $n^2\log^5(n) \cdot 2\log(n) \leq n^3$ states.
\end{proof}

\subsection{Proof of Theorem~\ref{thm:sparse poly GF2}}
\label{app:proof sparse poly GF2}

In the proof of Theorem~\ref{thm:DNF}, we constructed a DNF formula $\psi_\bx$ such that for every encoding $\bz^S \in \{0,1\}^{kn}$ of a hyperedge $S$ we have $\psi_\bx(\bz^S)=P_\bx(\bz^S)=P(\bx_S)$. We now show that there is a $2^k$-sparse $GF(2)$ polynomial $h:\{0,1\}^{kn} \rightarrow \{0,1\}$, such that for every hyperedge $S$ we have $h(\bz^S)=P_\bx(\bz^S)$. Namely, $h$ agrees with $\psi_\bx$ on inputs that encode hyperedges.
Then, the theorem follows from the arguments in the proof of Theorem~\ref{thm:DNF}.

By Lemma~\ref{lemma:from P to DNF}, the DNF $\psi_\bx$ has at most $2^k$ terms. Each term $C_j$ in $\psi_\bx$ is a conjunction of positive literals, such that $C_j(\bz^S)=1$ iff $\bx_S$ is the $j$-th satisfying assignment of the predicate $P$. Hence, it is not possible that more than one term in $\psi_\bx(\bz^S)$ is satisfied. Let $h$ be the $GF(2)$ polynomial induced by $\psi_\bx$, i.e., each monomial in $h$ corresponds to a term $C_j$ from $\psi_\bx$. Since at most one term in $\psi_\bx(\bz^S)$ is satisfied, then we have: 
If $\psi_\bx(\bz^S)=1$ then exactly one term in $\psi_\bx(\bz^S)$ is satisfied, and therefore $h(\bz^S)=1$. Also, if $\psi_\bx(\bz^S)=0$ then all terms in $\psi_\bx(\bz^S)$ are unsatisfied, and therefore $h(\bz^S)=0$.

\end{document}